\def\eqref#1{equation~\ref{#1}}
\def\1{\bm{1}}
\def\va{{\bm{a}}}
\def\vv{{\bm{v}}}
\def\vx{{\bm{x}}}
\def\vz{{\bm{z}}}
\DeclareMathAlphabet{\mathsfit}{\encodingdefault}{\sfdefault}{m}{sl}
\SetMathAlphabet{\mathsfit}{bold}{\encodingdefault}{\sfdefault}{bx}{n}
\newcommand{\softmax}{\mathrm{softmax}}
\DeclareMathOperator*{\argmin}{arg\,min}
\newtheorem{theorem}{Theorem}
\newtheorem{lemma}[theorem]{Lemma}
\newcommand{\eg}{\emph{e.g.}}
\begin{document}
\title{Distributed Zero-Shot Learning for\\Visual Recognition}


\author{Zhi~Chen,  Yadan~Luo, Zi~Huang, Jingjing~Li, Sen~Wang,  Xin~Yu
\thanks{Z. Chen is with the University of Southern Queensland, Toowoomba, QLD 4350, Australia. (e-mail:uqzhichen@gmail.com)}
\thanks{Y. Luo, Z. Huang,  S. Wang and X. Yu, are with School of Electrical Engineering \& and Computer Science, The University of Queensland, Brisbane, QLD 4072, Australia. (e-mails:lyadanluol@gmail.com, huang@itee.uq.edu.au, sen.wang@uq.edu.au, yu.xin@uq.edu.au).}
\thanks{J. Li is with the School of Computer Science and Engineering, University of Electronic Science and Technology of China (email: jjl@uestc.edu.cn).}
\thanks{This paper has supplementary downloadable material available at http://ieeexplore.ieee.org., provided by the authors. The material includes a comprehensive theoretical analysis and experimental results. This material is 6 pages.}}

\maketitle

\begin{abstract}
In this paper, we propose a Distributed Zero-Shot Learning (DistZSL) framework that can fully exploit decentralized data to learn an effective model for unseen classes.
Considering the data heterogeneity issues across distributed nodes, we introduce two key components to ensure the effective learning of DistZSL: 
a cross-node attribute regularizer and a global attribute-to-visual consensus.
Our proposed cross-node attribute regularizer enforces the distances between attribute features to be similar across different nodes. In this manner, the overall attribute feature space would be stable during learning, and thus facilitate the establishment of visual-to-attribute (V2A) relationships.
Then, we introduce the global attribute-to-visual consensus to mitigate biased V2A mappings learned from individual nodes. Specifically, we enforce the bilateral mapping between the attribute and visual feature distributions to be consistent across different nodes. 
Thus, the learned consistent V2A mapping can significantly enhance zero-shot learning across different nodes. 
Extensive experiments demonstrate that DistZSL achieves superior performance to the state-of-the-art in learning from distributed data.
\end{abstract}

\IEEEpeerreviewmaketitle
\section{Introduction}
\IEEEPARstart{V}isual recognition aims to identify and categorize visual data, forming a cornerstone in the field of computer vision.
With the ever-growing amount of data, the ability to recognize instances from previously seen and unseen classes is highly desired. To this end, Generalized Zero-Shot Learning (GZSL) \cite{xian2018feature,liu2018generalized,su2022distinguishing,schonfeld2019generalized,guo2024element} has been provided. 
GZSL approaches usually first establish a visual-attribute mapping and then exploit it to classify seen classes (\textit{i.e.}, available in both training and test) and unseen classes (\textit{i.e.}, only appear in testing) based on their attribute descriptions.
Current GZSL methods often require a large amount of centralized data in training. However, when data cannot be shared or centralized, previous methods might fail to achieve satisfactory performance.

\begin{figure}
    \centering
    \includegraphics[width=1\linewidth]{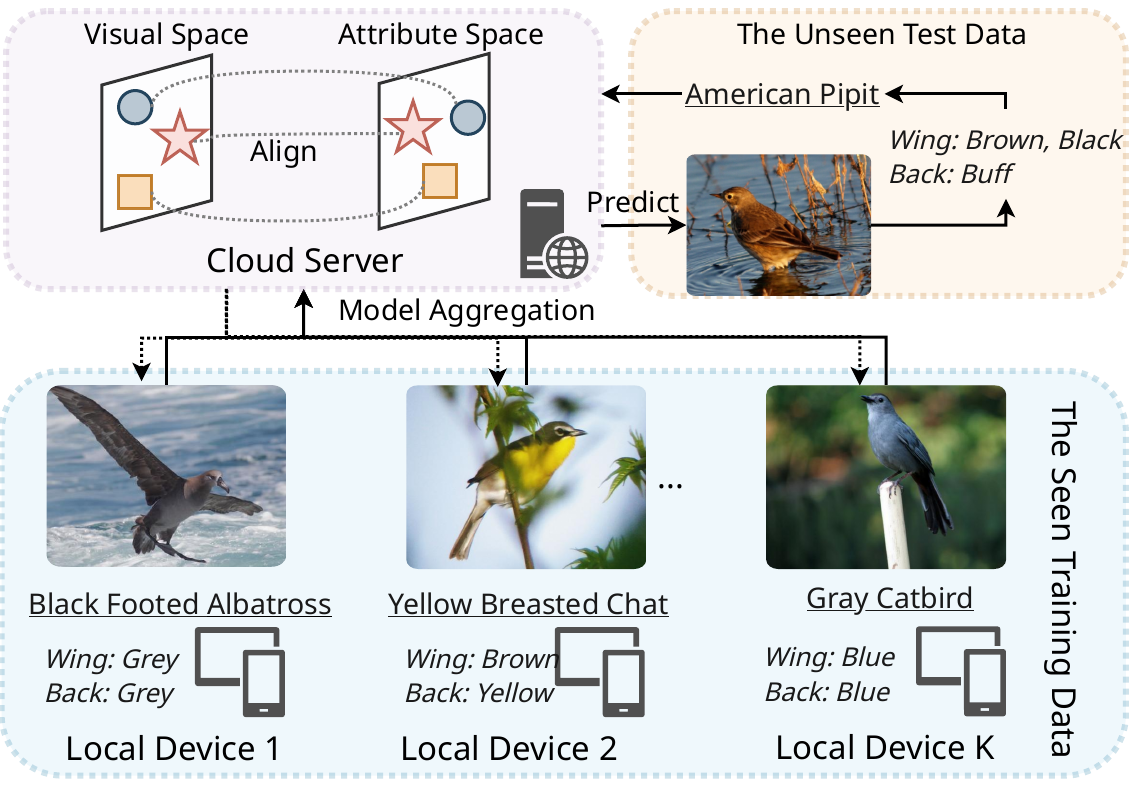}
    \caption{An illustration of Distributed Zero-Shot Learning ({DistZSL}), which aims to infuse ZSL capability into distributed learning frameworks.}
    \label{fig:intro}
    \vspace{-10pt}
\end{figure}

Federated Learning (FL) is considered an appealing distributed learning framework, as it only exchanges model parameters instead of original data.
Direct incorporation of existing GZSL methods into FL may not lead to adequate performance. This is because models trained on individual nodes would yield different visual-attribute mappings, and the inconsistent mappings do not help zero-shot classification. These mappings will become even more inconsistent when the data are distributed heterogeneously. 

In this paper, we present a Distributed Zero-Shot Learning (DistZSL) framework that can learn a GZSL model from multiple decentralized data sources.
Here, we assume each node does not have overlapping classes and we denote this distribution as a \textit{partial class-conditional distribution} (\textit{p.c.c.d.}). This assumption imposes more challenges for DistZSL: (1) different models would struggle to learn a consistent visual-to-attribute (V2A) mapping as each node learns V2A independently; (2) trained models on different nodes would also bias to local data; For instance, a model trained solely on birds with \textit{gray} and \textit{brown} wings may struggle to differentiate between \textit{blue} and \textit{black} wings in testing.  


To tackle the aforementioned challenges, we introduce two key components to our DistZSL. To be specific, we design a cross-node attribute regularizer to stabilize the distribution of attributes across nodes. Then, we present a global attribute-to-visual consensus to mitigate inconsistency among V2A mappings learned at different nodes. Note that both the cross-node attribute regularizer and the global attribute-to-visual consensus are applied on the client side. 

Our cross-node attribute regularizer is designed to enforce the distances between attribute features to be similar across different nodes. 
First, to estimate the inter-class attribute distances, the central node constructs a sparse similarity matrix using Graphical Lasso \cite{friedman2008sparse}.
Then, this similarity matrix is shared among individual nodes, acting as a cross-node reference during distributed training. 
Moreover, in local training, we employ the KL divergence to measure and minimize the distance between the predicted class-wise similarities and the constructed similarity matrix.

Our global attribute-to-visual consensus is introduced to mitigate the biased V2A mappings learned from different nodes. Since the attribute regularizer has stabilized the attribute feature distribution, a consistent V2A mapping can be achieved by learning an attribute-to-visual is achieved by establishing a bilateral connection between semantic and visual features. This strategy can improve the accuracy of attribute prediction and further mitigate local bias. In addition to predicting attributes from visual features, the bilateral connection reconstructs the visual features from the predicted attributes. We apply a bilateral loss on the differences between the reconstructed visual features and the original ones. By minimizing the bilateral loss, the reconstruction forces the predicted attributes to accurately maintain visual information. Thus, we can enhance the accuracy of the predicted attributes.

\begin{figure}
    \centering
    \includegraphics[width=0.7\linewidth]{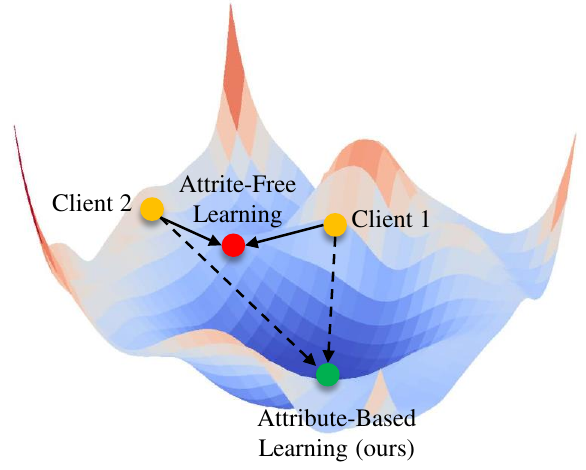}
    \caption{Attribute-based learning allows local models to learn towards the global minima across devices. In contrast, attribute-free learning simply averages the classifier weights of individual clients, leading to local optima. } 
    \label{fig:manifold}
    \vspace{-10pt}
\end{figure}

To evaluate our proposed DistZSL, we incorporate four state-of-the-art ZSL methods into six representative FL frameworks, resulting in 24 baseline models. 
Extensive experiments on three benchmark ZSL datasets demonstrate that our DistZSL consistently outperforms the baseline models.
In addition, we also evaluate DistZSL under different scenarios of distributed data, such as handling insufficient samples (\eg, 6 samples per class), fine-grained classes (\eg, birds), and an extensive number of classes, as well as different data distributions (\textit{i.i.d.}, \textit{non-i.i.d.} and \textit{p.c.c.d.}).
Comprehensive ablation studies also demonstrate the effectiveness of our proposed components in DistZSL.
To summarize,  the main contributions of this work are listed as follows:
\begin{itemize}
    \item We propose {DistZSL}, which infuses the ZSL ability into FL frameworks. We identify that attribute-based learning in ZSL can inherently benefit decentralized training.
    \item We pinpoint two critical challenges: decentralized data, and biased local updates. To address these issues, our proposed solution integrates a cross-device attribute regularizer and a bilateral semantic-visual connection. 
    \item  Through comprehensive experiments of our proposed method and various baselines on three ZSL datasets, we demonstrate the capability of addressing the identified challenges in various settings.
\end{itemize}

\section{Related Work}
\subsection{Zero-Shot Learning}
Zero-Shot Learning (ZSL) \cite{guo2020novel,ye2021disentangling,yang2021adaptive,zhang2019hierarchical,,chen2023zero,chen2021entropy,guo2024fine,chen2022federated} addresses a challenging problem in computer vision \cite{you2021domain,chen2021local,wang2023cal,chen2025cluster,yu2025dynamic,li2025pataug,zhao2025continual,wei2024plantseg,chen2019cycle,wei2024benchmarking,wang2025discrimination,you2022pixel,wei2024snap,zhang2024towards,chen2024cf,zhao2025synthetic,chen2025fastedit} where the test set contains additional classes not presented during training. 
To bridge the seen and unseen classes \cite{akata2015label,li2017zero,ye2021alleviating}, a standard solution is learning the visual-semantic relationships. Intermediate class-level semantic representations include attribute annotations \cite{lampert2013attribute}, natural language descriptions \cite{elhoseiny2013write}, etc.
In general, to learn the visual-semantic relationships, there are two streams of methods:  
embedding-based methods \cite{xu2020attribute,palatucci2009zero,norouzi2013zero} and generative methods \cite{xian2018feature,zhu2018generative,han2021contrastive}. 

The former group projects the visual and semantic information to the same feature space. The learned projection can then infer the class attributes for the samples of unseen classes.
Various methods have been proposed in this research direction, including graph learning \cite{liu2020attribute}, attentive learning \cite{liu2019attribute,zhu2019semantic}, similarity matching \cite{jiang2019transferable}, metric learning \cite{cacheux2019modeling,keshari2020generalized}, and meta-learning \cite{yu2020episode,verma2020meta}. 
The latter group first trains a generative model conditioning on semantic information. Then, the generative model can synthesize the visual features of the unseen classes. Finally, we can train a supervised classifier with the synthesized visual features.
Various generative models are applied to feature generation tasks, including generative adversarial nets \cite{xian2018feature,chen2020rethinking,li2019leveraging,chen2020canzsl}, variational autoencoders \cite{ma2020variational,chen2021semantics}, and invertible flows \cite{chen2022gsmflow,chen2021mitigating}. 

Generative approaches include two-stage training processes, \textit{i.e.,} generative model training and classifier training. They require retraining the classifier when involving more unseen classes. Moreover, training generative models are generally harder than discriminative models. Thus, to ease the overall training process, we follow the embedding-based paradigm.
Through the observations in Section \ref{exp:vsf}, we find that learning from more seen classes is beneficial to improve the visual-semantic generalization ability.
However, in the real world, most training classes are proprietary and not shared publicly due to privacy or confidentiality concerns. 
To utilize the locally seen classes, in this paper, we study Distributed Zero-Shot Learning ({DistZSL}) that learns from on-device data in non-identical class distribution as shown in Figure \ref{fig:fedzsl}.

\begin{figure*}
    \centering
    \includegraphics[width=0.93\linewidth]{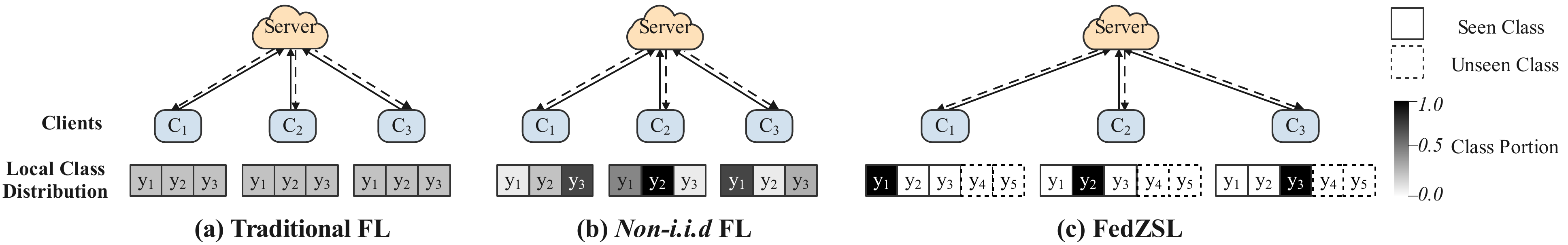}
    \vspace{-5pt}
    \caption{Data distributions of \textit{i.i.d.}, \textit{non-i.i.d.} and \textit{p.c.c.d.} settings. The darker color represents more training samples.}
    \label{fig:fedzsl}
    \vspace{-10pt}
\end{figure*}

\subsection{Federated Learning }  
Federated learning \cite{mcmahan2017communication} is a distributed learning protocol. It enables multiple participants to collaboratively learn a unified model without sharing the local data. 
Researchers in this area have been dedicated to improving efficiency and effectiveness, including the strategies for dealing with \textit{non-i.i.d.} data \cite{stich2018local,yu2018parallel}), preserving the privacy of user data \cite{bogdanov2012deploying}, ensuring fairness and addressing sources of bias \cite{li2019fair}, and addressing system challenges \cite{sheller2020federated}, multimodal data \cite{tan2023fedsea,ma2023fedsh,chen2023fedlive}.
Note that our proposed DistZSL is different from existing zero-shot related methods FL \cite{hao2021towards,zhang2021fedzkt,gudur_interspeech21}. These methods cannot directly generalize the federated model to unseen classes. Instead, the focus of these methods is to address the data and model heterogeneity problems. 

Specifically, Fed-ZD \cite{hao2021towards} considers improving model fairness on under-representative classes that only partial clients hold. They propose to perform data augmentation for those under representative classes. FedZKT \cite{zhang2021fedzkt} is proposed to solve the model heterogeneity by distilling the knowledge from heterogeneous local models. A global generative model is leveraged to distill the knowledge learned on local models. 
Fed-NCAC \cite{gudur_interspeech21} is inspired by the data impression technique \cite{nayak2019zero} that adapts the current model to emerging new classes. However, the adaption is based on training samples of new classes.
In this paper, we consider DistZSL with the \textit{p.c.c.d.} data, which intrinsically infuses the zero-shot learning ability into FL frameworks. 

\subsection{Data Heterogeneity} 
To address the data heterogeneity problem, various FL frameworks have been proposed, including FedProx \cite{li2020federated}, FedNova \cite{wang2020tackling}, Scaffold \cite{karimireddy2020scaffold}, MOON \cite{li2021model}. These methods typically involve specific aggregation policies to guide the global model learning and avoid shifting the learning direction. 
The concept of partial class-conditional distribution (\textit{p.c.c.d.}) was initially introduced in \cite{mcmahan2017communication} as a special type of \textit{non-i.i.d.} partition.
Further analysis and discussion distinguishing \textit{non-i.i.d.} from \textit{p.c.c.d.} were later presented in \cite{hsu2020federated}. 
To generate a \textit{non-i.i.d.} or \textit{p.c.c.d.} data distribution, we can draw a categorical distribution over the available training classes. Specifically, we can use a Dirichlet distribution $Dir(\alpha)$, with $\alpha$ being the concentration parameter controlling the non-uniformity of clients. 
Figure \ref{fig:fedzsl} (a) illustrates this: in an \textit{i.i.d.} setting, the concentration parameter is typically set to infinity. For a \textit{non-i.i.d.} setting, $\alpha$ is commonly set to $0.5$, while $\alpha$ is set to 0 in \textit{p.c.c.d.}. 
It is noted that the \textit{p.c.c.d.} setting is more difficult than the conventional \textit{non-i.i.d.} setting \cite{hsu2020federated}. Despite recognizing the challenges, no specific method was proposed to address these challenges.
In this paper, we directly address the difference between \textit{non-i.i.d.} and \textit{p.c.c.d.} settings. A DistZSL method is proposed to lessen the difficulties, achieving comparable performance in both settings.

\begin{figure*}
    \centering
    \includegraphics[width=0.95\linewidth]{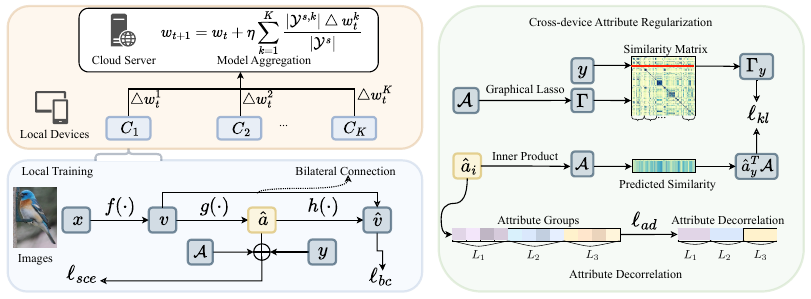} 
    \caption{An overview of the proposed  {DistZSL}, a decentralized framework for zero-shot learning models from multiple data sources with no exchange of local training data. On a local device, given an image sample $\vx_i$ from a class that is exclusive from all other devices, the image encoder $f(\cdot)$ produces the visual features $\vv_i$, which are further fed into the attribute regressor $g(\cdot)$ to predict the attributes $\widehat{a}_i$.
    In local training, we conduct attribute-based learning by (1) a visual-semantic alignment using semantic cross-entropy loss $\ell_{sce}$ to facilitate attribute prediction and an attribute decorrelation loss $\ell_{ad}$ to suppress the inter-class attribute occurrence, (2) a cross-device attribute regularizer $\ell_{kl}$ to stabilize attribute learning and avoid local models to be biased to locally available classes, and (3) a bilateral visual-semantic connection $\ell_{bc}$  to improve cross-device information consistency on the two modalities. 
     }
    \label{fig:fedzsl}
    \vspace{-10pt}
\end{figure*}

\section{Distributed Zero-Shot Learning}

\subsection{Overview of DistZSL}
We consider a distributed system of $K$ clients $C_1, C_2,\dots, C_K$. Each client owns a local data source for training, \textit{i.e.,} $\mathcal{D}^{s} = \{\mathcal{D}^{s,1}, \mathcal{D}^{s,2}, \ldots, \mathcal{D}^{s,K}\}$, where the superscript $s$ represents seen data that are available for training.
In particular, the $k$-th device has $N^k$ pairs of images with labels, \textit{i.e.,} $\mathcal{D}^{s,k}=\{(\vx_{i}^{s,k}, y_{i}^{s,k})\}_{i=1}^{N^{k}}$, where only a part of seen classes are observable $y_{i}^{s,k}\in \mathcal{Y}^{s,k}$. 
In addition to \textit{i.i.d.} and \textit{non-i.i.d.}, our study investigates a more practical yet challenging setting, namely partial class-conditional distribution (\textit{p.c.c.d.}), where multiple parties exclusively hold the training data from non-overlapping classes.
Notably, in \textit{p.c.c.d.} setting, the seen classes across the devices are non-overlapping, $\bigcap_{k \in [K]}\mathcal{Y}^{s,k}=\emptyset$ and $\bigcup_{k \in [K]} \mathcal{Y}^{s,k} = \mathcal{Y}^s$, where $|\mathcal{Y}^{s}|$ means the total number of the seen classes across $K$ devices. 
We follow the standard FL \cite{mcmahan2017communication} that each client trains a local recognition model based on the local data, while a central server collects the parameters periodically, and aggregates them to update the global parameters for recognizing both seen classes $\mathcal{Y}^s$ and unseen classes $\mathcal{Y}^u$. 
For brevity, we define $|\mathcal{Y}^{s}|+|\mathcal{Y}^{u}| = |\mathcal{Y}|$. 
To enable the parameter sharing between labels, the semantic information $\mathcal{A} = \{\va_{y} \}_{y}^{|\mathcal{Y}|}\in\mathbb{R}^{d_a\times |\mathcal{Y}|}$ is shared among all devices.
Formally, we leverage the training data on $K$ devices $\mathcal{D}^{s} \triangleq \bigcup_{k \in [K]} \mathcal{D}^{s,k}$ to initiate a unified model $w$. 
The global learning objective in \textit{p.c.c.d.} setting is to solve:
\begin{equation}
\begin{aligned}
  \underset{w}{\min} \mathcal{L}(w) = \sum_{k=1}^{K} \dfrac{|\mathcal{Y}^{s,k}|}{|\mathcal{Y}^s|} \mathcal{L}^{k}(w),
\end{aligned}
\end{equation}
where $\mathcal{L}^{k}(w) = \mathop{\mathbb{E}}_{(\vx,y)\sim \mathcal{D}^{s,k}} [\ell^{k}(w;(\vx,y))]$ is the empirical loss of the client $C_{k}$. 
We denote the model parameters at the round $t$ by $w_t$ and the $k$-th local model update by $\bigtriangleup w_{t}^k$. Therefore, the server will update the global model by aggregating $k$-th participant's local updates by:
\begin{equation}
\begin{aligned}
  w_{t+1} = w_t + \eta \sum_{k=1}^K\frac{|\mathcal{Y}^{s,k}|\bigtriangleup w_t^k }{|\mathcal{Y}^{s}|},
 \label{modelupdate}
\end{aligned}
\end{equation}
where $\eta$ is the learning rate on the server side. The overall training procedure can be found in Algorithm \ref{alg}.

\begin{algorithm}[t]
\hspace{-0.01em}\textbf{Input:} 
 clients number $K$, local datasets $\{\mathcal{D}^{s,1}, \ldots, \mathcal{D}^{s,K}\}$, scaling factor $\beta$, communication round $T$, local epochs $E$, local learning rate $\lambda$, global learning rate  $\eta$, loss coefficient $\mu$, batch size $B$    \\
\hspace{-0.5em}\textbf{Initialize:} 
Server model parameters $w^{0}$ \flushleft
\caption{Distributed Zero-Shot Learning ({DistZSL})}\label{euclid}
\begin{algorithmic}[1]
\State \textbf{Server executes:}
\For{$t = 0,1,...,T-1$}
    \State The server communicates $w_{t}$ to the \textit{i}-th client 
    \For{$k = 1,...,K$}
        \State The server communicates $w_{t}$ to the client $C_{k}$
        \State $\bigtriangleup w_t^k \leftarrow$ \textbf{PartyLocalTraining($k,w_t)$} 
    \EndFor
    \State $w^k_{t+1}$ $\leftarrow$  $w_t + \eta \frac{|\mathcal{Y}^{s,k}|\bigtriangleup w_t^k }{|\mathcal{Y}^{s}|}$
\EndFor

\State \textbf{PartyLocalTraining($k,w_t$):}
\State $w^{k}_t \leftarrow w_t$
\For{epoch $i = 0,1,\ldots,E$}
    \For{batch $\{(\vx_b,y_b)\}^{B}$}
        \State $ \{\vv_b\}^B \leftarrow \{f(\vx_b)\}^{B}$ \hfill {\color{blue} \# extract visual features}
        
        \State $\{\widehat{a}_b\}^{B} \leftarrow \{g(\vv_b)\}^{B}$ \hfill {\color{blue} \# generate attributes}
        \State $\{\widehat{v}_b\}^{B}\leftarrow\{h(\widehat{\va}_b)\}^{B}$ \hfill {\color{blue} \# reconstruct visual features}
        \State $\ell_{overall} = \ell_{sce} + \mu_1 \ell_{kl} + \mu_2 \ell_{bc} + \mu_3 \ell_{ad}$ 
        \State $w^{k}_t \leftarrow w^{k}_t - \lambda \bigtriangledown \ell^k$ 
    \EndFor
\EndFor
\State $\bigtriangleup w_t^k \leftarrow \beta(w^{k}_t - w_t)$
\State Return $\bigtriangleup w_t^k$ to server
\end{algorithmic}
\label{alg}
\end{algorithm}

\subsection{Local Training}\label{sec:local}
During the local training procedure, depicted in Figure \ref{fig:fedzsl}, client $C_k$ at the communication round $t$ receives the aggregated model weights $w_t$ from the central cloud and then applies it to the local model $w_t^k$. 
Given an input image $\vx$, a backbone network is leveraged as an image encoder $f(\cdot)$ to generate the visual features $\vv = f(\vx) \in \mathbb{R}^{d_v}$, where $d_v$ denotes the dimension of the visual features. 
Conventional classification models employ a fully connected layer on top of the backbone to produce class logits. In ZSL, we instead use an attribute regression layer $g(\cdot): \mathbb{R}^{d_v} \rightarrow  \mathbb{R}^{d_a}$ to derive the semantic attribute presence from visual features, given by $\widehat{\va} = g(f(\vx))\in\mathbb{R}^{d_a}$. Furthermore, the ground-truth semantic attributes serve as classifier anchors to generate the class logits. This unique learning strategy in ZSL leverages intermediate classifier anchors, and can significantly benefit FL. This is because it enables local models to learn toward the consistent feature manifold across devices. On the other hand, due to data heterogeneity in supervised FL, classifiers learned for different classes tend to be inconsistent across different clients.

\subsubsection{Attribute-Based Learning} 
In our setup, the attribute regression layer transforms the visual features, $\vv_i$, extracted from the image encoder into the attribute space. The pre-trained image encoder and the attribute regression layer are jointly optimized to enhance visual representation learning specifically for the ZSL task. Subsequently, the appropriate class for the predicted semantic attributes, $\widehat{\va}_i$, needs to be determined. Conventional FL trains a classifier for each class on each client. However, under the \textit{p.c.c.d.} setting, a client does not have access to training samples from other classes not owned by them, making it difficult to optimize the corresponding classifiers, thus leading to ill-posed classifiers. In contrast, our framework utilizes attribute vectors as classifier anchors to guide the local training process.
We perform a dot product operation between the predicted semantic attribute vector and the class-level ground-truth semantic attributes to compute the class logits. 
The semantic cross-entropy (SCE) loss is the objective to encourage the input images to have the highest compatibility score with their corresponding semantic attributes, which can be formulated as:
\begin{equation}
\begin{aligned}
  \ell_{sce} = - \sum_{\vx \in \mathcal{D}^{s,k}} \log \frac{\exp(\widehat{\va}_{y^{T}} \cdot \va_{y})}{\sum_{\va \in \mathcal{A}} \exp (\widehat{\va}_{y^{T}} \cdot \va)},
  \label{eq:sce}
\end{aligned}
\end{equation}
where \( \widehat{\va}_{y}\) represents the predicted attributes,  \(\va_{y}\) is the ground-truth attributes, and \(\mathcal{A} \) is the attributes of all classes. 

To improve the fidelity of the predicted attributes, we consider the nature of attributes in the semantic context. Specifically, certain semantic features are often collectively represented by multiple attributes, forming semantic groups. For instance, the two attributes \textit{grey wings} and \textit{blue wings} both describe the color of wings, and can be considered a semantic group collectively representing a single semantic concept. 
In practical scenarios, it is less likely for all attributes within the same semantic group to exhibit high responses. This observation motivates us to suppress the co-occurrence of the attributes within the same group. Following \cite{jayaraman2014decorrelating,xu2020attribute}, we mitigate the dependency between different semantic groups using an attribute decorrelation loss:
\begin{equation}
\begin{aligned}
 \ell_{ad} = \sum_{\vx \in \mathcal{D}^{s,k}} \sum_{l=1}^{L} \|  \widehat{a}^l \|_2,
\end{aligned}
\end{equation}
where we apply \(\ell_2\) norm on \( L \) groups of the semantic attributes. In essence, this decorrelation loss function serves to limit the magnitude of attribute values within each semantic group, effectively reducing the co-occurrence of attributes.

\subsubsection{Discussion on the Problem of Local Optima.}
In conventional FL, each client optimizes a classifier on its own local label space. Under heterogeneous or p.c.c.d. distributions, these local classifiers correspond to different sets of decision boundaries, so when the server aggregates model parameters, the global classifier is merely an average of inconsistent local optima. This averaging often leads to biased global solutions.

In contrast, attribute-based learning replaces per-client classifier weights with shared semantic anchors that are identical across all devices. Each client learns only a mapping from visual features to attributes, while the classifier itself is implicitly defined by the inner product with these shared anchors. As a result, local updates are guided toward the same semantic manifold regardless of class overlap. This eliminates the inconsistency in decision boundaries and ensures that the optimization landscape is aligned across clients.

Intuitively, attribute-based learning provides a common coordinate system in which local models can converge. Instead of averaging heterogeneous classifiers, aggregation combines visual-to-attribute mappings that are trained toward the same semantic targets. This alignment prevents clients from being trapped in incompatible local optima and enables the server to learn a coherent global model.
\vspace{-10pt}

\subsection{Cross-Device Attribute Regularization}\label{sec:att}
To deal with the data heterogeneity problem that causes biased data distribution across devices, the idea of our solution is to align the class relationships across devices, so that we can learn a consistent visual space.
However, as the training samples that involve visual information are strictly preserved in local devices, it is elusive to align the visual space with the collaborative classes. We propose a cross-device attribute regularizer to align the visual space in different local models according to the class similarities in the semantic space.
We start with constructing a class semantic similarity matrix. Graphical Lasso \cite{friedman2008sparse} is leveraged to estimate the sparse covariance of the semantic information $\mathcal{A}$ as the class semantic similarity matrix $\Gamma\in\mathbb{R}^{|\mathcal{Y}|\times|\mathcal{Y}|}$. 
Under the assumption that the inverse covariance $\Theta = \Gamma^{-1}$ is positive semidefinite, it minimizes an $\ell_{1}$-regularized negative log-likelihood:
\begin{equation}
\begin{aligned}
 \widehat{\Theta} = \underset{\Theta}{\argmin}~~\text{tr}(\mathcal{S}\Theta) - \log \text{det}(\Theta) + \delta\|\Theta\|_1,
\end{aligned}
\end{equation}
where $\mathcal{S}$ is a sample covariance matrix generated from $\mathcal{A}$, $\delta$ denotes the regularization parameter that controls the $\ell_1$ shrinkage. 
We further take the semantic similarity matrix as the probability distribution that the prediction logits of a training sample should match with. A natural way of learning the probability distribution is through knowledge distillation with logits matching\cite{hinton2015distilling}. 

The class similarity matrix $\Gamma$ is provided as the source knowledge to be transferred to target local models for learning visual features. We start with obtaining the soft targets by softening the peaky distribution of source and target logits with temperature scaling:
\begin{equation}
\begin{aligned}
&p_{\Gamma}(\cdot |y;\tau) = \softmax(\Gamma{y}/ \tau) = \frac{\exp(\Gamma{y}/ \tau)}{\sum^{|\mathcal{Y}|}_{y}\exp(\Gamma{y}/\tau )},
\\
&p_k(\cdot | \vx;\tau) = \softmax(\widehat{\va}_{y}^{T} \mathcal{A} / \tau) = \frac{\exp(\widehat{\va}_{y}^{T} \mathcal{A}/ \tau)}{\sum^{|\mathcal{Y}|}_{y}\exp(\widehat{\va}_{y}^{T} \mathcal{A}/\tau )},
\end{aligned}
\end{equation}
where $\tau$ is the temperature that can produce a softer probability distribution over classes with a high value. 
The knowledge distillation loss measured by the KL-divergence is:
\begin{equation}
\begin{aligned}
 \ell_{kl} &= \sum_{\vx \in \mathcal{D}^{s,k}} \text{KL}(p_{k}(\cdot | \vx ; \tau) \| p_{\Gamma}(\cdot | y; \tau)) \\
 &=\tau^2 \sum_{\vx \in \mathcal{D}^{s,k}}  p_{\Gamma}\log \frac{p_{\Gamma}}{p_k}.
\end{aligned}
\end{equation}


\subsection{Bilateral Visual-Semantic Connection} 
To more effectively model the relationships between visual and semantic modalities among local clients, we introduce a bilateral visual-semantic connection. This approach bolsters the mutual reinforcement between the two modalities from the global perspective. Previous studies \cite{li2021investigating,chen2020canzsl} has investigated related bilateral designs concerning the visual-semantic connection in generative ZSL. The aim is to couple the process of visual feature generation with a visual-to-semantic mapping. 

The challenge of modeling visual-to-semantic relationships arises due to data heterogeneity leading to biased local data. Beyond the existing visual-to-semantic learning, we further establish a semantic-to-visual regressor, thereby enhancing the learning model. The reconstructed visual features can provide feedback to assess the quality of the predicted attributes. In essence, if the predicted attributes truthfully reflect the original visual features, the reconstruction should be highly effective. 

From the global aggregation perspective, incorporating the semantic-to-visual knowledge gained from other parties can in turn benefit the local visual-to-semantic modeling. Specifically, given the visual features ${\vv}_i$ and the predicted attributes $\widehat{\va}_i$, we learn a semantic-to-visual transformation $h(\cdot): \mathbb{R}^{d_a} \rightarrow  \mathbb{R}^{d_v} $ that brings the predicted attributes from semantic space to the original visual space, yielding $\widehat{\vv}_i = h( \widehat{\va}_i)\in\mathbb{R}^{d_v}$. The predicted attributes are learned with the supervision from Eq. \ref{eq:sce}. Moreover, we add a bilateral connection loss to facilitate the learning of classifier anchors. The bilateral connection loss is applied between the extracted visual features and the reconstructed visual features:
\begin{equation}
\begin{aligned}
 \ell_{bc} = \sum_{\vx \in \mathcal{D}^{s,k}} \| h(\widehat{\va}_{y}) - f(\vx) \|_2,
\end{aligned}
\end{equation}
where $\widehat{\va}_{y}$ is the generated class-level attributes for image $\vx$.

\subsection{Joint Optimization} 

We define the overall local objective function in the client $C_k$ for DistZSL as follows:
\begin{equation}
\begin{aligned}
 \ell_{overall} = \ell_{sce} + \mu_1\ell_{bc} + \mu_2\ell_{kl} + \mu_3\ell_{ad},
\end{aligned}
\end{equation}
where $\mu_1,\mu_2$ and $\mu_3$ denote the coefficients of different loss functions. The local models are trained with the overall local objective for a few epochs. Following the training, the local update $\bigtriangleup w_t^k$ of client $C_k$ in the $t$-th communication round is submitted to the server for aggregation. The server combines these local updates to form the new global model, which is then distributed back to the clients for the next round training.

\subsection{Theoretical Analysis}

We provide theoretical support for the two key components in DistZSL, including the cross-node attribute regularizer $\ell_{\mathrm{kl}}$ (Eq.~(7)) and the global attribute-to-visual consensus $\ell_{\mathrm{bc}}$ (Eq.~(8)). The proof process is provided in supplementary materials. Throughout, classes have attribute prototypes $\mathcal{A}=\{\va_y\}_{y\in\mathcal{Y}}$, all probability vectors lie in the simplex $\Delta^{|\mathcal{Y}|-1}$, and softmax temperature $\tau>0$ is fixed.

\subsubsection{Setup and assumptions.}
Let $f:\mathcal{X}\!\to\!\mathbb{R}^{d_v}$ be the backbone, $g:\mathbb{R}^{d_v}\!\to\!\mathbb{R}^{d_a}$ the attribute regressor, and $h:\mathbb{R}^{d_a}\!\to\!\mathbb{R}^{d_v}$ the semantic-to-visual regressor. We define $w$ as the model parameters.
For a sample $(\vx,y)$ on client $k$, define logits $z_k(\vx)=\widehat \va_k(\vx)^{\top}A \in \mathbb{R}^{|\mathcal{Y}|}$ with $\widehat \va_k(\vx)=g(f(\vx))$, and the corresponding client distribution
\begin{equation}
p_k(\cdot\mid \vx;\tau)=\mathrm{softmax}\!\big(\vz_k(\vx)/\tau\big)\in\Delta^{|\mathcal{Y}|-1}.
\end{equation}
Let $\Gamma\in\mathbb{R}^{|\mathcal{Y}|\times|\mathcal{Y}|}$ denote the global semantic similarity matrix (estimated once on the server), and $p_\Gamma(\cdot\mid y;\tau)=\mathrm{softmax}(\Gamma_y/\tau)$ denote the target distribution for class $y$.

We make the following mild assumptions restricted to the data manifold $\mathcal{M}\subset\mathcal{X}$ in distributed learning setting.

\begin{itemize}
\item[A1] (Bi-Lipschitz decoder locally on $\mathrm{Im}(g\!\circ\! f)$). There exist constants $0<c_h\le L_h<\infty$ such that for all $\va_1,\va_2$ in a neighborhood of $\mathrm{Im}(g\!\circ\! f)$,
\begin{align}
c_h\|\va_1-\va_2\|\ &\le\ \|h(\va_1)-h(\va_2)\|\ \nonumber \\ &\le\ L_h\|\va_1-\va_2\|.
\end{align}
\item[A2] (Bounded reconstruction). Training with $\ell_{\mathrm{bc}}$ yields a uniform bound $\|h(g(f(\vx))) - f(\vx)\|\le \delta$ for all $\vx\in\mathcal{M}$ and some $\delta\ge 0$.
\item[A3] (Model smoothness near FedAvg iterate). For a fixed $\vx$, the mapping $w\mapsto z(\vx;w)$ (logits under parameters $w$) is $L_z$-Lipschitz in a neighborhood of the aggregated parameters $\bar w$, and softmax has Lipschitz constant $L_{\mathrm{sm}}(\tau)$ in logits, such that 
$\|z(\vx;w_1) - z(\vx;w_2)\| \le L_z\|w_1-w_2\|, \forall w_1, w_2 \in \mathcal{N}(w)$
and
$\|\softmax(\frac{z_1}{\tau}) - \softmax(\frac{z_2}{\tau})\| \le L_{\mathrm{sm}}(\tau)\|z_1-z_2\|$.
\item[A4] (Prototype separability). Prototypes are unit-normalized, $\|\va_y\|=1$, and have attribute margin $\Delta_y=\min_{y'\neq y}\|\va_y-\va_{y'}\|>0$.
\end{itemize}

\subsubsection{Cross-node attribute regularization}

Client $k$ minimizes the KL divergence to the global target
\begin{equation}
\ell_{\mathrm{kl}}^{(k)}(\vx,y)=\tau^2\,\mathrm{KL}\!\big(p_\Gamma(\cdot\mid y;\tau)\,\Vert\,p_k(\cdot\mid \vx;\tau)\big).
\end{equation}

\begin{lemma}[Client-level alignment]
\label{lem:client_alignment}
If $\mathbb{E}_{(\vx,y)}[\ell_{\mathrm{kl}}^{(k)}(\vx,y)]\le \varepsilon_k$ for some $\varepsilon_k > 0$ for client $k$, then for almost all $(\vx,y)$
\begin{equation}
\big\|p_k(\cdot\mid \vx;\tau)-p_\Gamma(\cdot\mid y;\tau)\big\|_1
\ \le\ \sqrt{\tfrac{2}{\tau^2}\,\varepsilon_k}.
\end{equation}
where $\varepsilon_k$ denotes the expected cross-node alignment error of client $k$, i.e.,
$\varepsilon_k = \mathbb{E}_{(\vx,y)}[\ell^{(k)}_{\mathrm{kl}}(\vx,y)]$.
Consequently, for any two clients $j,k$,
\begin{equation}
\big\|p_j(\cdot\mid \vx;\tau)-p_k(\cdot\mid \vx;\tau)\big\|_1
\ \le\ \sqrt{\tfrac{2}{\tau^2}\,\varepsilon_j}+\sqrt{\tfrac{2}{\tau^2}\,\varepsilon_k}.
\end{equation}
\end{lemma}





\begin{theorem}[Server-level guarantee under FedAvg]
\label{thm:server_alignment}
Let $\bar p(\cdot\mid \vx;\tau)=\sum_{k}\alpha_k\,p_k(\cdot\mid \vx;\tau)$ be the mixture of client distributions with FedAvg weights $\alpha_k=\frac{|{\cal D}_{s,k}|}{\sum_j |{\cal D}_{s,j}|}$. Then
\begin{align}
&\mathrm{KL}\!\big(\bar p(\cdot\mid \vx;\tau)\,\Vert\,p_\Gamma(\cdot\mid y;\tau)\big)\   \nonumber \\
& \le\ \sum_k \alpha_k\,\mathrm{KL}\!\big(p_k(\cdot\mid \vx;\tau)\,\Vert\,p_\Gamma(\cdot\mid y;\tau)\big).
\end{align}
If assumption A3 holds and the global model distribution $p(\cdot\mid x;\bar w,\tau)$ is within $\xi$, in $L1$ of $\bar p(\cdot\mid x;\tau)$, then
\begin{align}
& \mathrm{KL}\!\big(p(\cdot\mid \vx;\bar w,\tau)\,\Vert\,p_\Gamma(\cdot\mid y;\tau)\big)
\ \nonumber \\ &\le\ \sum_k \alpha_k\,\mathrm{KL}\!\big(p_k\,\Vert\,p_\Gamma\big)\ +\ C\,\xi,
\end{align}
for a constant $C$ depending only on $\tau$.
\end{theorem}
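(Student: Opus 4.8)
The plan is to treat the two inequalities separately: the first follows from convexity of the divergence, and the second from a local Lipschitz estimate that converts the assumed $L_1$ gap $\xi$ into a gap in KL.

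For the first inequality, I would invoke the convexity of the map $p\mapsto\mathrm{KL}(p\,\Vert\,p_\Gamma)$ for fixed second argument. Writing $\mathrm{KL}(p\,\Vert\,p_\Gamma)=\sum_i p_i\log p_i-\sum_i p_i\log p_{\Gamma,i}$, the second term is linear in $p$ and the first is a sum of the convex functions $t\mapsto t\log t$, so the whole map is convex on the simplex. Since $\bar p=\sum_k\alpha_k p_k$ with $\alpha_k\ge 0$ and $\sum_k\alpha_k=1$, Jensen's inequality gives $\mathrm{KL}(\bar p\,\Vert\,p_\Gamma)\le\sum_k\alpha_k\,\mathrm{KL}(p_k\,\Vert\,p_\Gamma)$, which is exactly the claimed mixture bound (equivalently, this is the log-sum inequality applied coordinatewise).

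For the second inequality, I would first note that the aggregated-parameter distribution $p(\cdot\mid\vx;\bar w,\tau)$ and the mixture $\bar p(\cdot\mid\vx;\tau)$ are different objects — one is a softmax of averaged logits, the other an average of softmaxes — and the hypothesis quantifies their discrepancy by $\|p(\cdot\mid\vx;\bar w,\tau)-\bar p(\cdot\mid\vx;\tau)\|_1\le\xi$. The key step is to show that $r\mapsto\mathrm{KL}(r\,\Vert\,p_\Gamma)$ is Lipschitz in $\ell_1$ on the region actually occupied by these distributions, with a constant $C=C(\tau)$. Its gradient has entries $\log r_i+1-\log p_{\Gamma,i}$, so by the mean value theorem along the segment joining $\bar p$ to $p(\cdot\mid\vx;\bar w,\tau)$, together with H\"older's inequality, $|\mathrm{KL}(p(\cdot\mid\vx;\bar w,\tau)\,\Vert\,p_\Gamma)-\mathrm{KL}(\bar p\,\Vert\,p_\Gamma)|\le\big(\sup_i|\log r_i+1-\log p_{\Gamma,i}|\big)\,\xi$. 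Combining this with the first inequality then yields the stated bound, with $C$ absorbing the supremum.

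The main obstacle is controlling that supremum, since $\mathrm{KL}(\cdot\,\Vert\,p_\Gamma)$ is \emph{not} globally Lipschitz on the simplex: its gradient blows up as any coordinate tends to $0$. This is exactly where the temperature enters. A temperature-$\tau$ softmax over bounded logits (bounded because prototypes are unit-normalized by A4 and attribute predictions lie on the compact data manifold) has every coordinate bounded below by some $p_{\min}(\tau)>0$, and $p_\Gamma$ likewise, as $\Gamma$ has bounded entries. Because the lower-bounded region is convex, the mixture $\bar p$ and the entire connecting segment inherit $r_i\ge p_{\min}(\tau)$, so the supremum is at most $\log(1/p_{\min}(\tau))+1+\log(1/p_{\Gamma,\min}(\tau))$, a finite $C$ depending only on $\tau$. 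I would use the softmax Lipschitz constant $L_{\mathrm{sm}}(\tau)$ and the logit smoothness of A3 to make precise that the relevant distributions genuinely remain in this region near the FedAvg iterate $\bar w$, which closes the argument.
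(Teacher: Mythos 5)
Your proposal is correct and takes essentially the same route as the paper's proof: convexity of $p\mapsto\mathrm{KL}(p\,\Vert\,p_\Gamma)$ (Jensen applied coordinatewise to $u\mapsto u\log(u/q_i)$) for the mixture bound, followed by a mean-value-theorem $\ell_1$-Lipschitz estimate for $r\mapsto\mathrm{KL}(r\,\Vert\,p_\Gamma)$ whose constant $C(\tau)$ comes from coordinatewise lower bounds on the softmax outputs and on $p_\Gamma$, combined with the assumed $\xi$-gap between $p(\cdot\mid\vx;\bar w,\tau)$ and $\bar p$. If anything you are slightly more thorough: the paper simply posits the lower bounds $m_p(\tau)$ and $m_\Gamma(\tau)$, whereas you justify them from bounded logits (unit-normalized prototypes, bounded $\Gamma$) and note that the lower-bounded region is convex so the connecting segment stays inside it.
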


Theorem~\ref{thm:server_alignment} states that, as each client reduces its local $\ell_{\mathrm{kl}}$, the global model’s predictive distribution moves monotonically closer to the target semantic distribution $p_\Gamma$, up to the small averaging approximation. Hence, it aligns attribute similarity patterns across clients.

\subsubsection{Global Attribute-to-Visual Consensus}

The bilateral loss
\begin{equation}
\ell_{\mathrm{bc}}(\vx)=\|h(g(f(\vx)))-f(\vx)\|^2
\end{equation}
enforces that $h$ acts as an approximate left-inverse of $g\!\circ\! f$ on the data manifold.

\begin{lemma}[Information preservation via approximate left-inverse]
\label{lem:bilipschitz_lower}
Under A1–A2, for any $\vx_1,\vx_2\in\mathcal{M}$,
\begin{equation}
\|g(f(\vx_1))-g(f(\vx_2))\|
\ \ge\ \tfrac{1}{L_h}\,\|f(\vx_1)-f(\vx_2)\|\ -\ \tfrac{2\delta}{L_h}.
\end{equation}
\end{lemma}




\paragraph{Interpretation.}
Lemma 3 states that distances in the visual space cannot collapse under $g$ (up to a $2\delta$ slack) because the decoder $h$ approximately inverts $g$ on the image of $f$: enforcing $\ell_{\mathrm{bc}}(\vx)=\|h(g(f(\vx)))-f(\vx)\|^2$ small (small $\delta$) guarantees that attribute predictions $g(f(\vx))$ retain discriminative information from $f(\vx)$.


\begin{lemma}[Attribute error bound from reconstruction]
\label{lem:attr_error_bound}
Fix $(\vx,y)$ and assume A1–A2. Then
\begin{equation}
\|g(f(\vx)) - \va_y\|\ \le\ \tfrac{1}{c_h}\,\big(\, \|h(\va_y)-f(\vx)\| + \delta\,\big).
\end{equation}
In particular, if $h(\va_y)$ approximates the class center in visual space with error $\varepsilon_y=\|h(\va_y)-f(\vx)\|$, then $\|g(f(\vx))-\va_y\|\le (\varepsilon_y+\delta)/c_h$.
\end{lemma}




\begin{theorem}[Margin preservation for attribute-based classification]
\label{thm:margin}
Assume A1, A2, A4 and let $\varepsilon_y=\|h(\va_y)-f(\vx)\|$. If
\begin{equation}
\delta + \varepsilon_y\ <\ \tfrac{c_h}{2}\,\frac{\Delta_y^2}{\max_{y'\neq y}\|\va_y-\va_{y'}\|},
\end{equation}
then the attribute-based classifier using logits $s_{y'}=g(f(\vx))^{\top}\va_{y'}$ predicts the correct label $y$.
\end{theorem}

Theorem~\ref{thm:margin} shows that minimizing $\ell_{\mathrm{bc}}$ (small $\delta$) controls the deviation of predicted attributes from their class anchors, which in turn guarantees class-wise separation in the attribute-based classifier as long as prototypes are reasonably separated. Combined with Lemma~\ref{lem:bilipschitz_lower}, the bilateral connection prevents information loss from $f$ to $g(f(x))$ and stabilizes cross-device learning by keeping discriminative structure intact.

\section{Experiments}
\subsection{Experiment Setup}
\subsubsection{Datasets} 
Unlike most FL methods that evaluate datasets which are relatively elementary in the \textit{non-i.i.d} setting, such as CIFAR-10 \cite{krizhevsky2009learning}, MNIST \cite{lecun1998gradient}, SVHN \cite{netzer2011reading}, and FMNIST \cite{xiao2017fashion}, which typically contain a limited number of classes and abundant class samples, we in this paper extensively evaluate our method on five benchmark datasets designed specifically for zero-shot learning. These datasets present a much more significant challenge due to their complexity and diversity.
{Caltech-UCSD Birds-200-2011 (CUB)} \cite{wah2011caltech} is a fine-grained bird dataset containing 11,700 images representing 200 bird species, with each species annotated with 312 manually annotated attributes.
{Animals with Attributes 2 (AwA2)} \cite{xian2018zero} comprises 37,322 images from 50 different animal classes, each animal class in this dataset is described using 85 attributes.
{SUN Scene Recognition (SUN)} \cite{patterson2012sun} includes 14,340 images representing 717 different scenes, with each scene annotated with 102 attributes.

APY \cite{farhadi2009describing} consists of 15,339 images from 32 object categories. It is split into two parts: the aPascal subset, derived from the PASCAL VOC 2008 dataset, and the aYahoo subset, containing images collected from the Yahoo search engine. Each object category is annotated with 64 attributes that describe visual properties such as shape, color, and texture.
{DeepFashion} \cite{liu2016deepfashion} is a large-scale clothing dataset containing over 800,000 images spanning a wide range of clothing categories and styles. It includes 50 clothing categories, 1,000 descriptive attributes, bounding boxes, and landmark points for fashion items. For these datasets, we adopt the standard splits for seen and unseen classes as proposed in \cite{xian2018zero}, specifically, 150/50 for CUB, 40/10 for AwA2, and 645/72 for SUN. These datasets pose a particular challenge because they contain many classes with limited image samples per class. Furthermore, the classes in the CUB and SUN datasets are fine-grained, which is a significant challenge in the context of federated learning.

\subsubsection{Evaluation Metrics}
For evaluation purposes, we use the average per-class top-1 accuracy as the primary metric in both our conventional Zero-Shot Learning (ZSL) and Generalized Zero-Shot Learning (GZSL) experiments, as proposed by Xian \textit{et al.} \cite{xian2018zero}.
In the conventional ZSL setting, we only evaluate the accuracy of the \textit{unseen} classes, denoted as $Acc_{\mathcal{C}}$. These are classes that none of the participants have access to.

In the GZSL setting, we extend the evaluation to include both \textit{seen} and \textit{unseen} classes. We calculate the accuracy of the test samples from both these classes, represented as $Acc_{\mathcal{Y}^{s}}$ and $Acc_{\mathcal{Y}^{u}}$, respectively.
To gauge the performance of our method in the GZSL setting, we compute the harmonic mean $Acc_{\mathcal{H}}$ of the accuracies of the seen and unseen classes. This is calculated as follows:
\begin{equation}
Acc_{\mathcal{H}} = \frac{2*Acc_{\mathcal{Y}^s}*Acc_{\mathcal{Y}^u}}{Acc_{\mathcal{Y}^s}+Acc_{\mathcal{Y}^u}} .
\end{equation}
The harmonic mean provides a balance of the performance across the seen and unseen classes, helping us to avoid a bias towards the class type with a higher number of samples.

\begin{table*}[!h]
\caption{Performance comparisons (\%) on three datasets among FL baselines, ZSL baselines, and the proposed DistZSL in centralized and \textit{i.i.d.} settings. * represents ViT-based backbone.}
\centering
\vspace{-0.25em}
\setlength{\tabcolsep}{3pt}
\scalebox{0.84}{
\begin{tabular}[t]{c  l | cccc | cccc | cccc | cccc | cccc}
\toprule
&  &  \multicolumn{4}{c|}{CUB} & \multicolumn{4}{c|}{AwA2}  &  \multicolumn{4}{c}{SUN} &  \multicolumn{4}{c}{{{APY}}}&  \multicolumn{4}{c}{{{DeepFashion}}}  \\ 
&  &  Acc$_{\mathcal{C}}$ &  Acc$_{\mathcal{Y}^{u}}$ & Acc$_{\mathcal{Y}^{s}}$ & Acc$_{\mathcal{H}}$  & Acc$_{\mathcal{C}}$ & Acc$_{\mathcal{Y}^{u}}$ & Acc$_{\mathcal{Y}^{s}}$ & Acc$_{\mathcal{H}}$  & Acc$_{\mathcal{C}}$ & Acc$_{\mathcal{Y}^{u}}$ & Acc$_{\mathcal{Y}^{s}}$ & Acc$_{\mathcal{H}}$ & 
{{Acc$_{\mathcal{C}}$}} & {{Acc$_{\mathcal{Y}^{u}}$}} & {{Acc$_{\mathcal{Y}^{s}}$}} & {{Acc$_{\mathcal{H}}$}} & 
{{Acc$_{\mathcal{C}}$}} & {{Acc$_{\mathcal{Y}^{u}}$}} & {{Acc$_{\mathcal{Y}^{s}}$}} & {{Acc$_{\mathcal{H}}$}}
\\
\\[-6pt] \hline \\[-6pt]
\parbox[t]{2mm}{\multirow{6}{*}{\rotatebox[origin=c]{90}{\textit{\textbf{centralized}}}}} 
&    {APN  }  
    & 72.0  & 65.3  & 69.3  & 67.2
    & 68.4  & 56.5  & 78.0  & 65.5
    & 61.6  & 41.9  & 34.0  & 37.6
    &  38.7   &  18.9   &  43.7   &  26.3 
    &  35.2   &  25.6   &  34.3   &  29.3 
    \\
&    {GEM  }  
    & 77.8  & 64.8    & 77.1  & 70.4
    & 67.3  & 64.8    & 77.5  & 70.6
    & 62.8  & 38.1    & 35.7  & 36.9
    &  39.4   &  19.8     &  45.2   &  27.5 
    &  33.1   &  24.8     &  33.1   &  28.3 
    \\
&    {MSDN  }  
    & 76.1  & 68.7  & 67.5  & 68.1
    & 70.1  & 62.0  & 74.5  & 67.7
    & 65.8  & 52.2  & 34.2  & 41.3
    &  37.2   &  18.2   &  43.8   &  25.7 
    &  28.7   &  21.8   &  29.2   &  25.0 
    \\
&    {SVIP*  }
    &  79.8  &  72.1  &  78.1  &  75.0 
    &  69.8  &  65.4  &  87.7  &  76.9 
    &  71.6  &  53.7  &  48.0  &  50.7 
    &  41.3  &  23.9  &  37.1  &  29.1 
    &  36.2  &  29.8  &  30.1  &  30.0 
\\
\\[-8pt]
\cline{2-22}
\\[-6pt]
&  {DistZSL}  
    & 73.9          & 62.4              & 70.1             & 66.1
    & 68.5          & 61.0              & 71.2             & 65.7
    & 61.4          & 42.2              & 29.8             & 35.0
    &  38.3   &  18.5   &  44.9   &  26.2 
    &  34.6   &  24.2   &  34.7   &  28.5 
    \\
&  {DistZSL*}  
    &  82.4           &  71.3               &  76.8              &  73.9 
    &  66.7           &  64.3               &  84.7              &  73.1 
    &  72.4           &  53.8               &  47.4              &  50.4 
    &  40.7   &  21.8   &  45.4   &  29.5 
    &  35.9  &  26.2   &  33.8   &  29.5 
\\ 
\midrule
\parbox[t]{2mm}{\multirow{21}{*}{\rotatebox[origin=c]{90}{\textit{\textbf{i.i.d.}}}}} &
 {FedAvg}  
    & --            & --                & 41.1             & --
    & --            & --                & 90.1             & --
    & --            & --                & 0.5              & --
    & --            & --                &  63.2           & --
    & --            & --                &  38.4            & --
    \\
&  {+ APN  }          
    & 68.2          & 59.1              & 60.7            & 59.9
    & 54.5          & 38.9              & 76.2            & 51.5
    & 20.5          & 12.2              & 6.1             & 8.1
    &  34.2   &  8.9    &  47.5   &  15.0 
    &  26.1   &  16.3   &  \textbf{25.8} & 20.0 
    \\
&  {+ GEM  }          
    & 67.4          & 38.7              & 64.1            & 48.2
    & 61.3          & 28.6              & 78.5            & 42.0
    & 61.0          & 32.9              & 31.6            & 32.2
    &  35.1   & 11.1    &  45.8   &  17.9 
    &  25.5   &  19.7   &  23.2   &  21.3 
    \\
&  {+ MSDN  }          
    & 68.3          & 23.4              & 49.4            & 31.7
    & 57.0          & 17.9              & 70.6            & 28.5
    & 58.4          & 28.2              & 33.4            & 30.6
    &  30.1         &  9.8              &  49.8           &  16.4 
    &  20.3         &  6.5              &  24.1           &  10.3 
    \\
&  {+ SVIP*  }
    &  79.4  &  58.9  &  71.7  &  64.7  
    &  63.2  &  57.2  &  85.1  &  68.4 
    &  68.1  &  50.1  &  \textbf{46.4} &  48.2 
    &  38.2  &  12.6  &  48.2  &  20.0 
    &  25.1  &  26.5  &  32.3  &  29.1 
    \\
&  {MOON}         
    & --            & --                 & 41.0           & --
    & --            & --                 & 90.3           & --
    & --            & --                 & 0.6            & --
    & --            & --                 & 64.3            & --
    & --            & --                 & 39.5            & --
    
    \\
&  {+ APN  }          
    & 67.4          & 57.7              & 62.7            & 60.1
    & 55.3          & 37.5              & 85.4            & 52.1
    & 3.5           & 1.3               & 0.2             & 0.3
    & 33.8          &  9.5              &  48.6            &  15.9 
    & 24.6          &  16.6              &  24.1             &  19.7 
    \\
&  {+ GEM  }          
    & 66.4          & 34.3              & 66.2            & 45.2
    & 59.8          & 30.1              & 78.3            & 43.4
    & 59.6          & 25.2              & 35.0            & 29.3
    &  33.9           &  10.1               &  47.8             &  16.7 
    &  23.8           &  17.4               &  24.9             &  20.5 
    \\
&  {+ MSDN  }          
    & 68.4          & 24.7              & 49.6            & 33.0
    & 57.4          & 17.7              & 80.9            & 29.0
    & 59.2          & 28.8              & 31.7            & 30.2
    &  26.4           &  10.4               &  46.9             &  17.0 
    &  25.8           &  8.9               &  25.6              &  13.2 
    \\
&  {+ SVIP*  }
    &  79.6   &  57.1   &  69.3   &  62.6 
    &  64.1   &  57.7   &  85.1   &  68.7 
    &  69.1   &  51.4   &  45.2   &  48.1 
    &  36.7   &  10.0   &  48.3   &  16.6 
    &  25.3   &  26.1   &  32.2   &  28.8 
    \\
&    {FedGloss}        
    & --            & --               & 40.8            & --
    & --            & --               & 90.2            & --
    & --            & --               & 1.2             & --
    & --            & --               &  63.3             & --
    & --            & --               &  38.6             & --
    \\
&    {+ APN  }          
    & 67.0  & 58.4  & 59.8 & 59.1
    & 54.3  & 38.4  & 73.8 & 50.5
    & 32.4  & 17.6  & 21.4 & 19.3
    &  31.3   &  7.4    &  45.7  &  12.7 
    &  25.5   &  15.8   &  24.3  &  19.1 
    \\
&    {+ GEM  }          
    & 67.2  & 57.4  & 60.1  & 58.7
    & 55.4  & 39.4  & 74.3  & 51.5
    & 33.5  & 17.7  & 22.2  & 19.7
    &  31.4   &  8.1    &  46.5   &  13.8 
    &  26.0   &  14.7   &  22.8   &  17.9 
    \\
&    {+ MSDN  }          
    & 65.4  & 56.1  & 58.4  & 57.2
    & 51.8  & 34.2  & 71.9  & 46.4
    & 24.8  & 10.4  & 17.3  & 13.0
    & 24.1   &  7.5    &  37.4   &  12.5 
    &  19.4   &  9.4    & 30.1   &  14.3 
    \\
&    {+ SVIP* } 
    &  78.4  &  56.6  &  68.4  &  61.9 
    &  63.7  &  56.1  &  85.4  &  67.7 
    &  68.2  &  49.6  &  44.9  &  47.1 
    &  38.7  &  12.7  &  45.0  &  19.9 
    &  26.8  &  27.3  &  32.4  &  29.6 
    \\ 
\\[-8pt]
\cline{2-22}
\\[-6pt]
&   {DistZSL}
    & {71.0} & {61.6} & 62.1 & {61.8}
    & {59.7} & {52.7} & 74.5 & {61.8}
    & {63.3} & {43.3} & 29.6 & {35.2}
    &  {36.1} &  {11.8}  &  49.9  &  {19.1}  
    &  {27.2} &  {21.5}  &  24.1  &  {22.7}
    \\
&   {DistZSL*}
    &  \textbf{81.2} &  \textbf{57.5} &  \textbf{69.6} &  \textbf{63.0 }
    &  \textbf{65.8} &  \textbf{59.4} &  \textbf{85.5} &  \textbf{70.1}  
    &  \textbf{70.8} &  \textbf{53.4} &  45.8  &  \textbf{49.3}
    &  \textbf{40.3} &  \textbf{14.2} &  \textbf{56.6} &  \textbf{22.7}
    &  \textbf{28.4} &  \textbf{29.3} &  \textbf{32.4} &  \textbf{30.8}
    \\ 
\midrule
\parbox[t]{2mm}{\multirow{21}{*}{\rotatebox[origin=c]{90}{\textit{\textbf{Non-i.i.d.}}}}} 
&    {FedAvg}  
    & --            & --                & 6.4             & --
    & --            & --                & 18.4            & --
    & --            & --                & 1.9             & --
    & --            & --                &  21.1             & --
    & --            & --                &  12.8             & --
    \\
&    {+ APN  }          
    & 65.0          & 54.9              & 60.9            & 57.7
    & 53.7          & 41.9              & 76.2            & 54.1
    & 35.3          & 20.8              & 14.2            & 16.8
    &  28.1           &  11.4               &  36.9             &  17.4 
    &  25.1           &  12.8               &  20.1             &  15.6 
    \\
&    {+ GEM  }          
    & 67.2          & 37.9              & 62.8            & 47.3
    & 57.4          & 29.9              & 60.0            & 39.9
    & 60.2          & 30.8              & 33.1            & 31.9
    &  28.5           &  10.9               &  37.4             &  16.9 
    &  24.7           &  6.3                &  17.2             &  9.2 
    \\
&    {+ MSDN  }          
    & 64.8          & 25.3              & 40.5            & 31.2
    & 56.9          & 18.9              & 67.9            & 29.6
    & 57.6          & 29.4              & 32.3            & 30.8
    &  25.6           &  7.3                &  35.1             &  12.1 
    &  20.1           &  10.0               &  18.8             &  13.0 
    \\
&    {+ SVIP*  }
    &  75.2   &  52.4   &  68.9   &  59.5 
    &  59.7   &  51.2   &  70.8   &  59.4 
    &  66.1   &  48.6   &  43.8   &  46.1 
    &  34.0   &  12.1   &  41.3   &  18.7 
    &  25.5   &  15.0   &  26.6   &  19.2 
    \\
&    {MOON}          
    & --            & --                & 7.3           & --
    & --            & --                & 20.9          & --
    & --            & --                & 2.4           & --
    & --            & --                &  22.0           & --
    & --            & --                &  13.1           & --
    \\
&    {+ APN}          
    & 66.2          & 58.1              & 58.3          & 58.2
    & 54.9          & 41.6              & 78.5          & 54.4
    & 3.9           & 1.4               & 0.2           & 0.3
    &  30.0           &  12.3               &  35.1           &  18.2 
    &  24.9           &  11.4               &  24.7           &  15.6 
    \\
&    {+ GEM}          
    & 66.0          & 33.2              & 62.8          & 43.5
    & 58.1          & 28.9              & 62.5          & 39.5
    & 57.1          & 28.3              & 30.2          & 29.2
    &  30.3           &  12.7               &  34.7           &  18.6 
    &  25.3           &  12.5               &  25.0           &  16.7 
    \\
&    {+ MSDN}          
    & 67.6          & 27.9              & 36.5          & 31.6
    & 55.8          & 23.5              & 48.4          & 31.6
    & 58.2          & 29.8              & 31.6          & 30.7
    &  28.6           &  10.8               &  34.0           &  16.4 
    &  23.8           &  11.4               &  25.3           &  15.7 
    \\
    
&    {+ SVIP*}
    &  74.6   &  51.9   &  66.5   &  58.3 
    &  59.0   &  49.4   &  68.9   &  57.5 
    &  65.4   &  47.4   &  43.1   &  45.1 
    &  34.1   &  12.4   &  39.8   &  18.9 
    &  25.0   &  17.4   &  28.7   &  21.7 
    \\
&   FedGloss          
    & --            & --                & 7.7           & --
    & --            & --                & 19.7          & --
    & --            & --                & 2.6           & --
    & --            & --                &  23.7           & --
    & --            & --                &  12.9           & --
    \\
&  {+ APN  }          
    & 67.3          & 55.7              & 61.1          & 58.3
    & 53.4          & 34.6              & 81.0          & 48.5
    & 33.9          & 20.0              & 12.1          & 15.1
    &  28.4           &  12.3               &  37.4           &  18.5 
    &  25.2           &  12.9               &  21.4           &  16.1 
    \\
&  {+ GEM  }          
    & 68.0          & 34.8              & 65.4          & 45.4
    & 56.9          & 33.8              & 60.6          & 43.4
    & 60.3          & 28.1              & 35.0          & 31.2
    &  28.3           &  10.2               &  36.4           &  15.9 
    &  23.8           &  11.4               &  20.6           &  14.7 
    \\
&  {+ MSDN  }          
    & 67.6          & 21.3              & 55.6          & 30.8
    & 53.1          & 28.6              & 51.2          & 36.7
    & 55.8          & 21.3              & 32.4          & 25.7
    &  27.0           &  9.4              &  35.7        &  14.9 
    &  21.8           &  10.9             &  21.8         &  14.5 
    \\
&  {+ SVIP*}
    & 74.8  & 52.7  & 66.4  & 58.8
    & 59.9  & 50.6  & 69.7  & 58.6
    & 66.1  & 48.7   &  44.7   &  46.6 
    & 33.4  &  12.9   &  40.7   &  19.6 
    & 25.6  &  15.4  &  27.9  & 19.8
    \\ 
\\[-8pt]
\cline{2-22}
\\[-6pt]
&  {DistZSL}
    & {71.4} & \textbf{58.9}     & {62.0}   & {60.4}
    & {58.7} & {51.6}     & {70.0}   & {59.5}
    & {61.9} & {39.5}     & {30.7}   & {34.5}
    &  {34.8} &  {13.4}     &  {39.4}   &  {20.0}
    &  {26.5} &  {17.1}     &  {28.1}    &  {21.2}  
    \\
    & {DistZSL*}
    &  \textbf{80.3} &  54.3  &  \textbf{69.7} &  \textbf{61.0}
    &  \textbf{63.4} &  \textbf{53.8} &  \textbf{73.4} &  \textbf{62.1}
    &  \textbf{68.7} &  \textbf{52.4} &  \textbf{45.1} &  \textbf{48.5}
    &  \textbf{36.7} &  \textbf{14.5}  &  \textbf{50.7}  &  \textbf{22.6}
    &  \textbf{27.3} &  \textbf{17.6} &  \textbf{29.4} &  \textbf{22.0}
    \\ \midrule
\parbox[t]{2mm}{\multirow{21}{*}{\rotatebox[origin=c]{90}{\textit{\textbf{p.c.c.d.}}}}}&
  {FedAvg}   
    & --            & --                & 5.2           & --
    & --            & --                & 8.8           & --
    & --            & --                & 0.3           & --
    & --            & --                &  9.5            & --
    & --            & --                &  3.9            & --
    \\
&  {+ APN}          
    & 50.9          & 41.9              & 50.4          & 45.8
    & 33.1          & 24.9              & 29.9          & 27.2
    & 33.0          & 18.8              & 13.7          & 15.9
    &  17.1           &  10.7               &  24.5           &  14.9 
    &  15.8           &  10.6               &  8.0            &  9.1 
    \\
&  {+ GEM}          
    & 51.8          & 30.8              & 50.5          & 38.2
    & 43.0          & 19.5              & 33.9          & 24.7
    & 57.3          & 31.0              & 32.6          & 31.8
    &  17.2           &  11.9               &  26.1           &  16.3 
    &  16.8           &  12.6               &  10.4           &  11.4 
    \\
&  {+ MSDN}          
    & 49.7          & 19.9              & 20.1          & 20.0
    & 38.4          & 20.1              & 44.5          & 27.7
    & 53.8          & 25.7              & 27.3          & 26.5
    &  15.7           &  10.1               &  23.8           &  14.2 
    &  16.1           &  9.9                &  7.9            &  8.8  
    \\
&  {+ SVIP*}
    &  73.8           &  47.8               &  63.0           &  54.4 
    &  54.9           &  42.4               &  74.9           &  54.1 
    &  63.8           &  44.1               &  40.4           &  42.2 
    &  29.8           &  14.0               &  35.3           &  20.0 
    &  21.1           &  13.8               &  23.9           &  17.5 
    \\
&  {MOON}         
    & --            & --                & 6.1           & --
    & --            & --                & 8.9           & --
    & --            & --                & 0.2           & --
    & --            & --                &  8.1            & --
    & --            & --                &  3.7            & --
    \\
&  {+ APN  }          
    & 51.6          & 40.3              & 49.8          & 44.6
    & 34.8          & 27.1              & 32.3          & 29.5
    & 4.0           & 0.9               & 0.2           & 0.4
    &  17.2           &  10.1               &  14.3           &  11.8 
    &  15.8           &  10.6               &  9.3            &  9.9 
    \\
&  {+ GEM  }
    & 43.6          & 31.6              & 41.4          & 35.9
    & 44.9          & 28.3              & 32.9          & 30.4
    & 54.2          & 26.9              & 29.8          & 28.3
    &  17.6           &  10.4               &  13.4           &  11.7 
    &  16.4           &  10.2               &  9.1            &  9.6 
    \\
&  {+ MSDN  }          
    & 50.2          & 15.6              & 39.0          & 22.3
    & 33.2          & 25.4              & 60.6          & 33.8
    & 54.4          & 26.9              & 25.9          & 26.4
    &  17.0           &  9.4                &  12.1           &  10.6 
    &  17.3           &  13.1               &  11.4           &  12.2 
    \\
& {+ SVIP*}
    &  71.6   &  49.7   &  63.8   &  55.9 
    &  54.2   &  43.1   &  73.9   &  54.4 
    &  64.9   &  46.9   &  43.8   &  45.3 
    &  28.7   &  12.8   &  35.1   &  18.8 
    &  20.7   &  13.2   &  24.0   &  17.0 
    \\
&    {FedGloss}        
    & --            & --                & 6.0           & --
    & --            & --                & 9.0           & --
    & --            & --                & 0.3           & --
    & --            & --                &  10.3           & --
    & --            & --                &  4.8            & --
    \\
&    {+ APN  }          
    & 51.0          & 40.1              & 48.6          & 43.9
    & 33.8          & 30.2              & 37.4          & 33.4
    & 33.8          & 18.3              & 13.7          & 15.7
    & 17.0          & 9.5               & 25.3          & 13.8 
    & 15.9          & 11.2              & 7.0           & 8.6 
    \\
&    {+ GEM  }          
    & 55.4          & 30.1              & 54.8          & 38.9
    & 44.2          & 26.1              & 31.9          & 28.7
    & 54.9          & 27.4              & 28.4          & 27.9
    & 17.1          & 11.4              & 26.8          & 16.0 
    & 17.1          & 10.4              & 10.3          & 10.3 
    \\
&    {+ MSDN  }          
    & 53.4          & 16.4              & 41.4          & 23.5
    & 43.8          & 11.4              & 53.2          & 18.8
    & 55.1          & 25.8              & 29.1          & 27.4
    & 16.4          & 8.3               & 18.9          & 11.5 
    & 15.1          & 9.3               & 8.3           & 8.8 
    \\
&    {+ SVIP*}
    &  71.3   &  48.4   &  64.2   &  55.2 
    &  54.8   &  42.1   &  75.1   &  54.0 
    &  65.4   &  47.2   &  43.7   &  45.4 
    &  29.4   &  13.1   &  34.8   &  19.0 
    &  21.3   &  14.8   &  24.3   &  18.4 
\\ 
\\[-8pt]
\cline{2-22}
\\[-6pt]
&  {DistZSL}  
    & {71.6} & \textbf{57.5}     & {58.0}   & {57.8}
    & {57.2} & {45.5}     & {62.3}   & {52.6}
    & {60.9} & {39.9}     & 27.2     & {32.3}
    &  {19.2} &  \textbf{16.1}      &  {29.6}    &  {20.8}
    &  {23.2} &  {16.2}            &  {15.6}    &  {15.9}
    \\
    & DistZSL*
    &  \textbf{79.5}    &  50.7         &  \textbf{69.6}      &  \textbf{58.7}
    &  \textbf{57.3}    &  \textbf{46.0}   &  \textbf{79.7}   &  \textbf{58.3}
    &  \textbf{67.8}    &  \textbf{51.7}  &  \textbf{44.2}   &  \textbf{47.7}
    &  \textbf{31.3}        &  15.4         &  \textbf{45.9}      &  \textbf{23.0}
    &  \textbf{23.4}     &  \textbf{16.6}    &  \textbf{26.8}     &  \textbf{20.5}
    \\
\bottomrule
\end{tabular}}
\label{performance1}
\vspace{-1em}
\end{table*}

\subsubsection{Implementation Details} 
DistZSL and the baseline models were implemented using PyTorch. Our code base is built on MOON \cite{li2021model}, a platform that has integrated the Federated Learning (FL) baselines for supervised learning.
For the implementation of incorporating Zero-Shot Learning (ZSL) baselines into FL frameworks, we referred to the official implementations of various models: APN \cite{xu2020attribute}, GEM \cite{liu2021goal}, and MSDN \cite{chen2022msdn}.
For FL training, we used an SGD optimizer with a weight decay of 1e-5 and a momentum of $0.9$. The number of communication rounds and the default number of participants were set to 100 and 10, respectively, unless specified otherwise. The batch size and number of local epochs in each communication round were set to 64 and 2.
In all our experiments, we utilized a pre-trained CNN network, ResNet101, as the backbone. For hardware, all the experiments were conducted on a Lenovo workstation equipped with two NVIDIA A6000 GPUs.

\subsubsection{Baselines} 
We choose three representative state-of-the-art embedding-based ZSL methods as our baselines, including  APN  \cite{xu2020attribute},   GEM  \cite{liu2021goal},  MSDN  \cite{chen2022msdn} and SVIP \cite{chen2025svip}. In addition, we fit the three ZSL baselines into five representative federated learning frameworks, including   FedAvg  \cite{mcmahan2017communication},   FedProx  \cite{li2020federated},   FedNova  \cite{wang2020tackling},   Scaffold  \cite{karimireddy2020scaffold},  MOON  \cite{li2021model}, and FedGloss \cite{caldarola2025beyond}. 
\vspace{-10pt}
 
\subsection{Main Results}\label{sec:exp}
We conducted experiments using data sampled from both \textit{i.i.d.} and \textit{non-i.i.d.} distributions to compare with the model trained in our \textit{p.c.c.d.} setting. In addition, we report the performance results in the \textit{centralized} setting, where the training is conducted on a single device. Although our approach is not specifically designed for the centralized setting, we can still achieve competitive results among state-of-the-art methods. For the \textit{i.i.d.} distribution, we evenly distributed the set of seen classes across ten devices, while for the \textit{non-i.i.d.} distribution, we sampled the data partition using a Dirichlet distribution $Dir(\alpha)$ with a concentration parameter $\alpha=0.5$. Table \ref{performance1} presents the performance comparisons between the federated baselines and our proposed methods for zero-shot recognition in the \textit{i.i.d.}, \textit{non-i.i.d.}, and \textit{p.c.c.d.} settings, respectively. The complete results of all six FL baselines are presented in Table~\ref{performance2} and \ref{performance3} in the Appendix. The results show that direct prediction of class labels using FL is only effective for the CUB and AwA2 datasets under the \textit{i.i.d.} setting. For the SUN dataset, the global model training could not converge. The ZSL baselines APN, GEM, MSDN and SVIP that allow attribute-based learning significantly improve performance across all three settings and all datasets. Furthermore, our DistZSL with the proposed cross-device attribute regularizer and bilateral semantic-visual connection can further enhance the performance of the baseline methods to a significant extent.

\vspace{-5pt}
\subsection{Ablation Study}
In this ablation study, we systematically evaluate various stripped-down versions of our full proposed model to validate individual components of the proposed DistZSL. 
In Table \ref{ablation}, we report the ZSL and GZSL performance results of each version on CUB dataset. 
In the FedAvg, we present the results obtained from attribute-free learning using FedAvg on the 150 seen classes. The intricate nature of fine-grained recognition results in failed global aggregation, consequently leading to poor performance. The introduction of classifier anchors, however, ensures the global model is uniformly optimized towards a common direction, thus the performance is significantly improved. The addition of individual components yields consistent performance gains, as seen in the results. The best performance is achieved when the KL loss function $\ell_{kl}$, the bilateral semantic-visual connection $\ell_{bc}$, and attribute decorrelation $\ell_{ad}$ are all applied. 


\begin{figure}[t]
    \centering
    \subfloat[]
    {\includegraphics[width=0.6in]{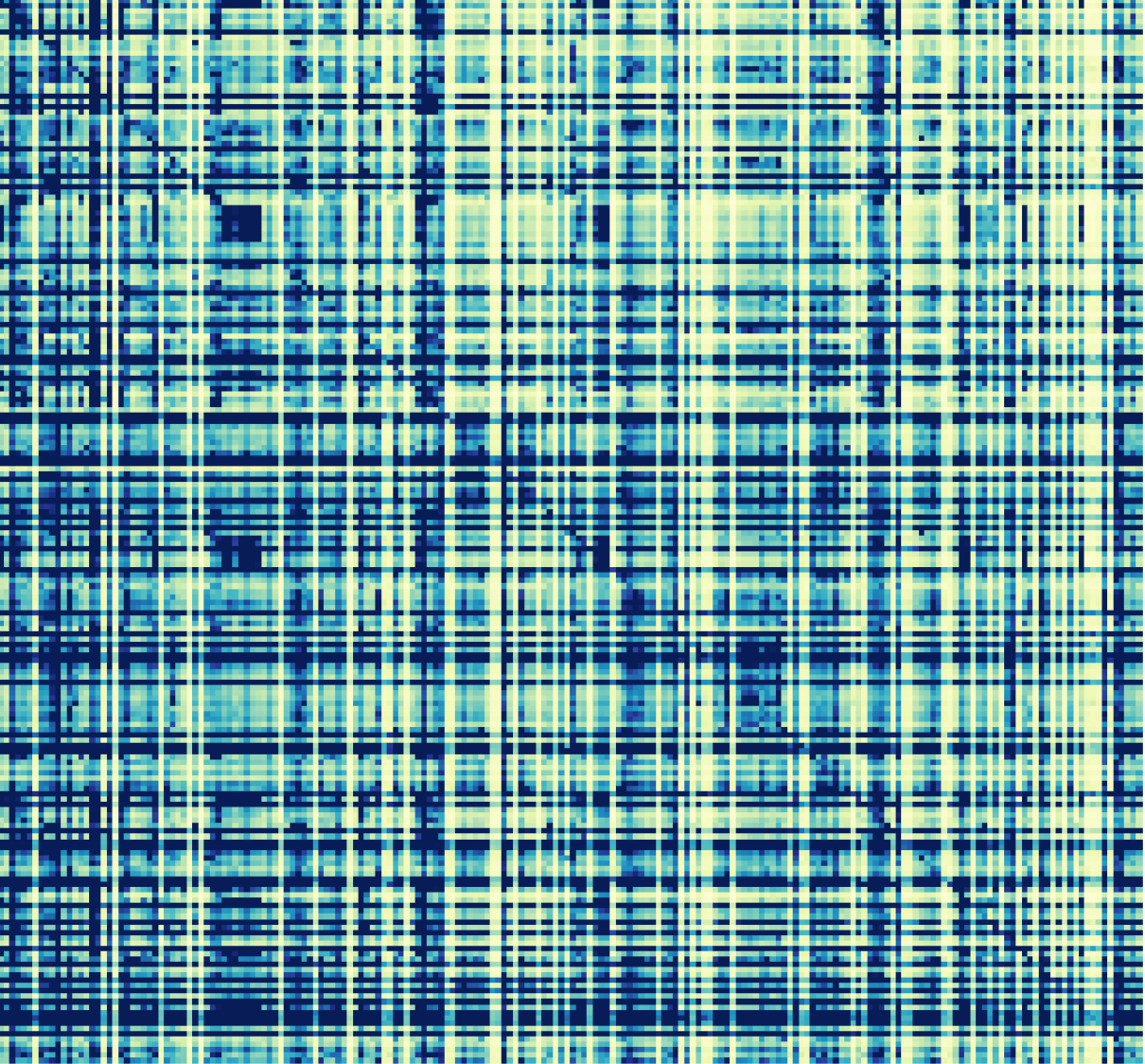}}\hspace{0.5pt}
    \subfloat[]
    {\includegraphics[width=0.6in]{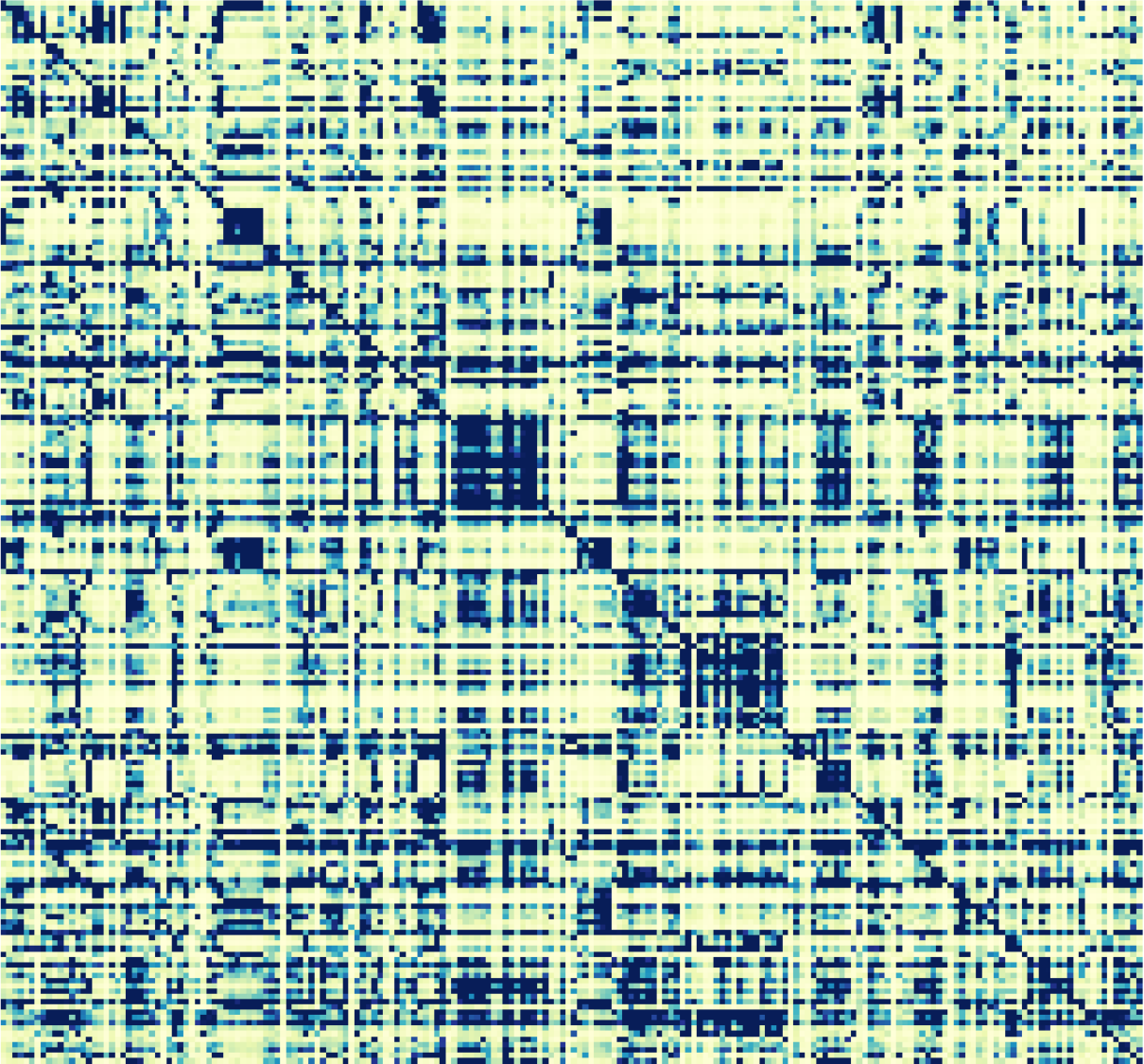}}\hspace{0.5pt}
    \subfloat[]
    {\includegraphics[width=0.6in]{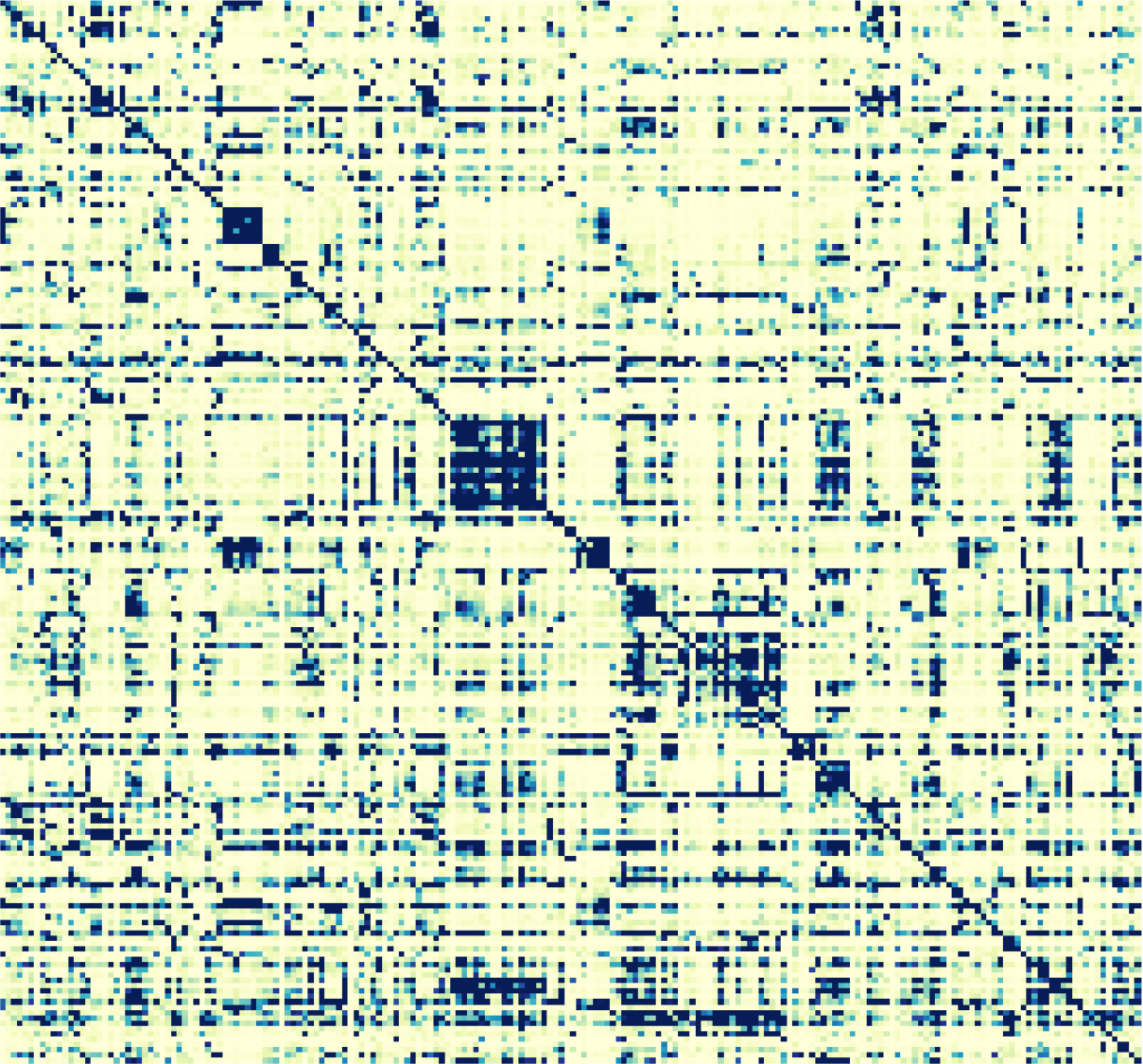}}\hspace{0.5pt}
    \subfloat[]
    {\includegraphics[width=0.6in]{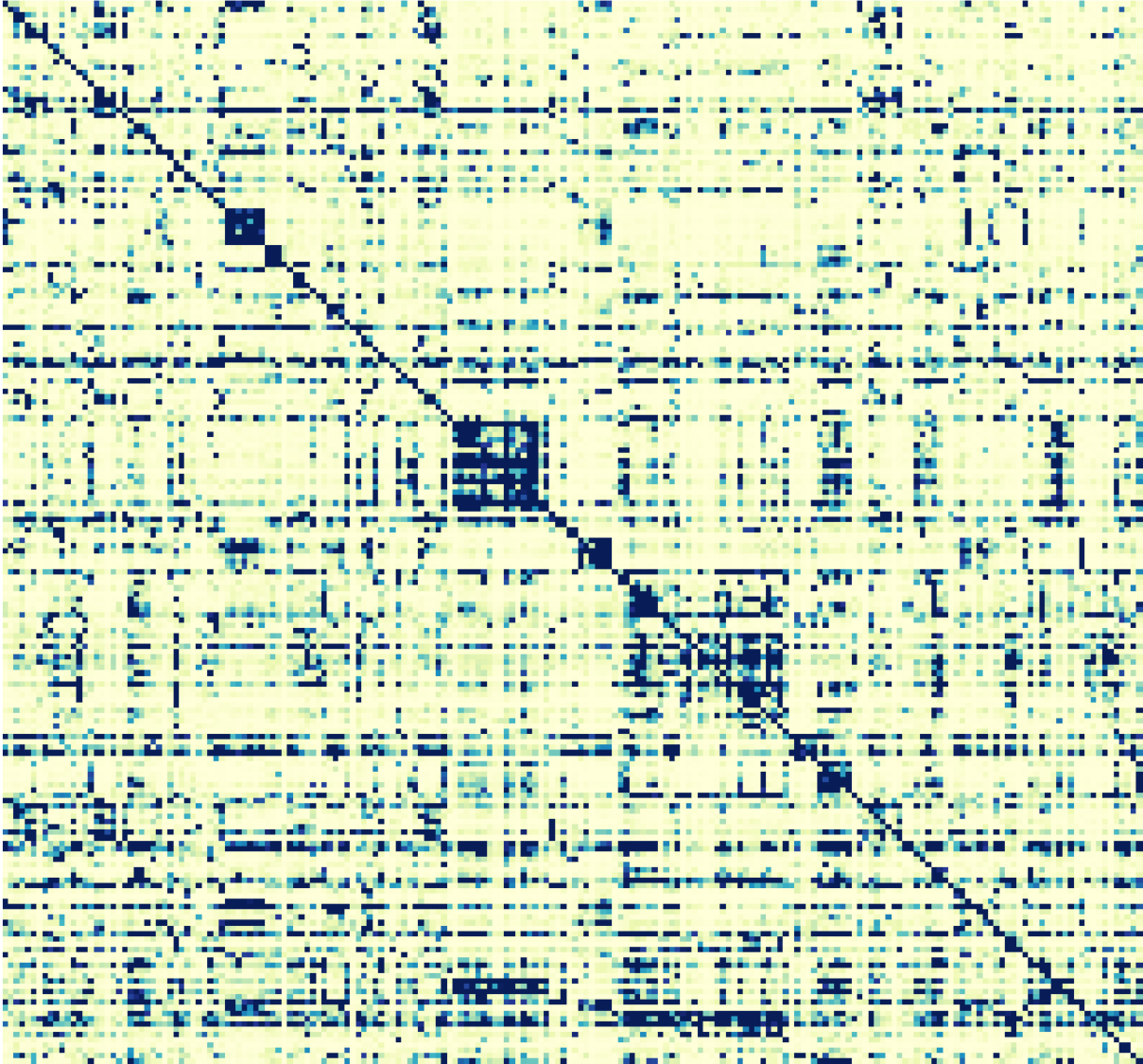}}\hspace{0.5pt}
    \subfloat[]
    {\includegraphics[width=0.65in]{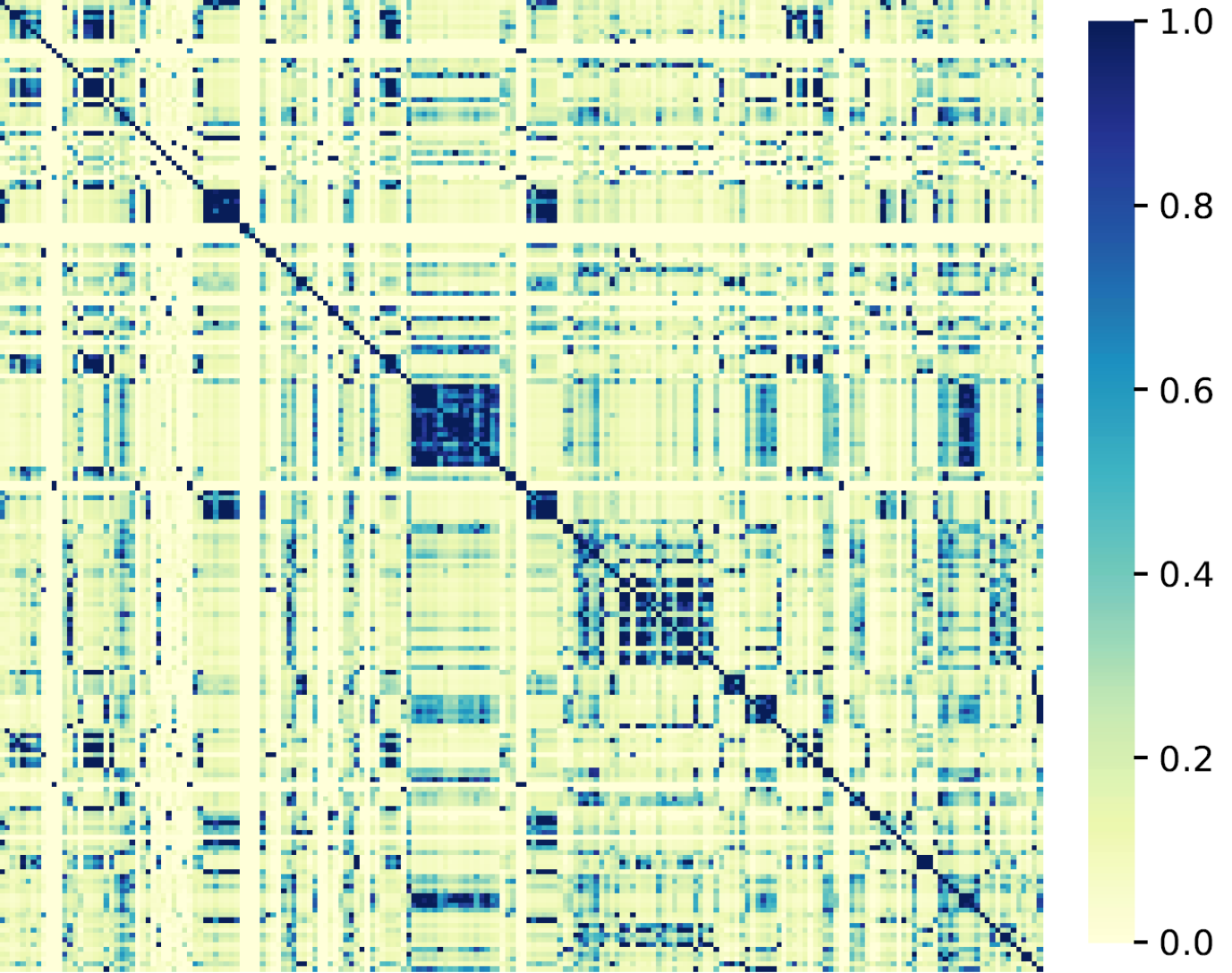}}\\
    \caption{Averaged similarities between the predicted attributes and the ground-truth attributes on CUB test samples. (a)-(d) illustrate similarities after the first, fifth, tenth, and twentieth communication round; (e) shows the pre-computed similarity matrix described in Section \ref{sec:att}.}\label{perturbation_vis}
    \vspace{-5pt}
    \label{heatmap_vis}
\end{figure}

\begin {table}[t]
\caption {Effects of different components on CUB dataset with various stripped-down versions of DistZSL.}
\begin{center}
\scalebox{1.01}{
\begin{tabular}[t]{lc ccc }
\toprule
    & Acc$_{\mathcal{C}}$ &  Acc$_{\mathcal{Y}^{u}}$ & Acc$_{\mathcal{Y}^{s}}$ & Acc$_{\mathcal{H}}$  \\
 \hline 
  {FedAvg}  
    & -             & -                 & 5.2             & -
    \\
 \textit{+ Attribute-Based Learning}  
    & 58.9          & 48.7              & 53.2            & 50.9
    \\

  { {DistZSL} w/ $\ell_{bc}$}      
& {63.7} &{51.3} & {56.1} & {53.7}
\\
  { {DistZSL} w/ $\ell_{kl}$}      
& {65.8} &{50.4} & {59.5} & {54.6}
\\
  { {DistZSL} w/ $\ell_{ad}$}      
& {62.1} &{49.2} & {56.6} & {52.6}
\\

  { {DistZSL} w/ $\ell_{bc} + \ell_{kl}$}      
& {71.2} &{56.3} & {55.7} & {56.0}
\\
  { {DistZSL} w/ $\ell_{ad} + \ell_{kl}$}      
& {68.6} &{54.2} & {57.1} & {55.6}
\\
  { {DistZSL} w/ $\ell_{bc} + \ell_{ad}$}      
& {70.1} &{55.0} & {58.1} & {56.5}
\\
  { {DistZSL}}      
& \textbf{71.6} & \textbf{57.5}     & \textbf{58.0}   & \textbf{57.8}
\\
\bottomrule
\end{tabular}}
\end{center}
\label{ablation}
\vspace{-15pt}
\end {table} 

\subsubsection{Impact of Attribute-Based Learning} 
We investigate the difficulties encountered in executing fine-grained recognition within FL frameworks. The datasets we chose for these experiments, namely CUB, SUN, and AwA2, are renowned for their fine-grained attributes. With 50 unique animal categories, even the AwA2 dataset is regarded as fine-grained when compared with more general datasets.
The experiments are conducted on five federated learning frameworks, all trained exclusively on data from seen classes in a supervised setting. As presented in Table \ref{performance1}, our results indicate that the (\textit{p.c.c.d.}) data prevents all FL frameworks from effectively learning a robust global model suited for fine-grained class recognition. However, our proposed method, DistZSL, attains an accuracy of 58.0\% on seen classes for the CUB dataset, significantly outperforming conventional FL frameworks which reach a maximum accuracy of just 6.2\%. For the SUN dataset, which contains a vast 717 classes, the results are even more discouraging, with the highest recorded accuracy of only 0.3\% on seen classes.

Under \textit{non-i.i.d.} conditions, as shown in Table \ref{performance1}, the outcomes mirror the previous findings. A direct comparison between our method (62.0\%) and Scaffold (7.7\%) exemplifies this. In Table \ref{performance1}, even under the assumption of \textit{i.i.d.} data, FL has limited success with the CUB and SUN datasets.
These outcomes confirm that fine-grained recognition is a significant challenge in FL. However, by incorporating attribute-based learning, we can address this issue. As illustrated in Figure \ref{fig:manifold}, classifier anchors can function as local references for other clients, helping them to learn consistent visual features across different classes towards global optima.


\begin{figure}[t]
	\centering
	\subfloat[FedAvg]
	{\includegraphics[width=1.1in]{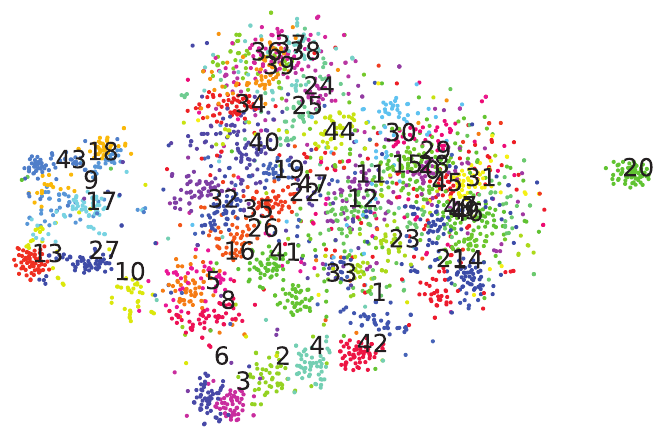}}
	\subfloat[FedAvg + Attribute-Based Learning]
	{\includegraphics[width=1.1in]{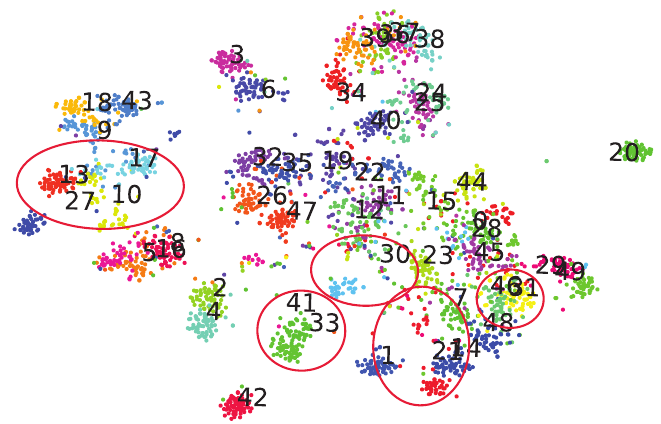}}
	\subfloat[DistZSL]
	{\includegraphics[width=1.1in]{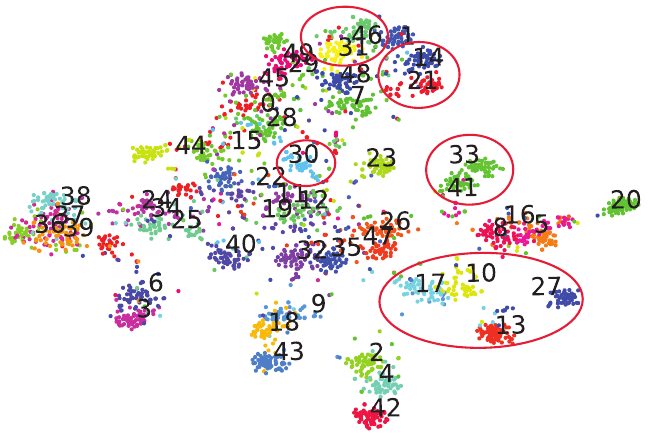}}\\
	\caption{t-SNE visualization with vanilla FedAvg, FedAvg with attribute-based learning, and FedAvg with proposed cross-device regularizer and bilateral visual-semantic connection. The numbers are annotated on the mean points of class distributions. The red circles highlight some improvements of DistZSL on attribute learning.}\label{prec_cifar}
    \vspace{-10pt}
\end{figure}

\subsubsection{Impact of Cross-Device Attribute Regularizer} 
To investigate the impact of the cross-device attribute regularizer, we visualize the averaged similarities between the predicted attributes and the ground-truth attributes on the CUB dataset. It is worth noting that the similarities are strictly generated by the test samples. As shown in Figure \ref{heatmap_vis}, (a)-(d) are the similarity maps generated after training for one, five, ten, and twenty communication rounds. (e) represents the predefined similarity matrix $\Gamma$. It is clear that class-wise semantic similarity in DistZSL can finally converge to the same patterns as the predefined similarity matrix shows. The visualization of prediction similarities confirms the effectiveness of the proposed cross-device attribute regularizer.

\subsubsection{Feature Visualization} 
To further validate the effectiveness of attribute-based learning with classifier anchors and the proposed DistZSL, we visualize the visual features produced by (a) FedAvg, (b) FedAvg with attribute-based learning, and (c) the complete DistZSL framework. The visual features are extracted from the 50 unseen classes on the CUB dataset. The numbers denote the Euclidean center points of visual features of each class.
The vanilla FedAvg cannot learn discriminative visual representations of the unseen classes. In contrast, attribute-based learning allows the model to extract semantic information on specific attributes, making the visual features on unseen classes considerably distinguishable. In DistZSL, the learned visual representations are further improved. We highlight some improved cases in the red circles. For example, the features of class 30 in attribute-based learning are dispersed, whereas in DistZSL they are more concentrated.


\begin{figure}[t]
    \centering
    \vspace{-10pt}
    \subfloat[\textit{i.i.d.} partition]
    {\includegraphics[width=1.13in]{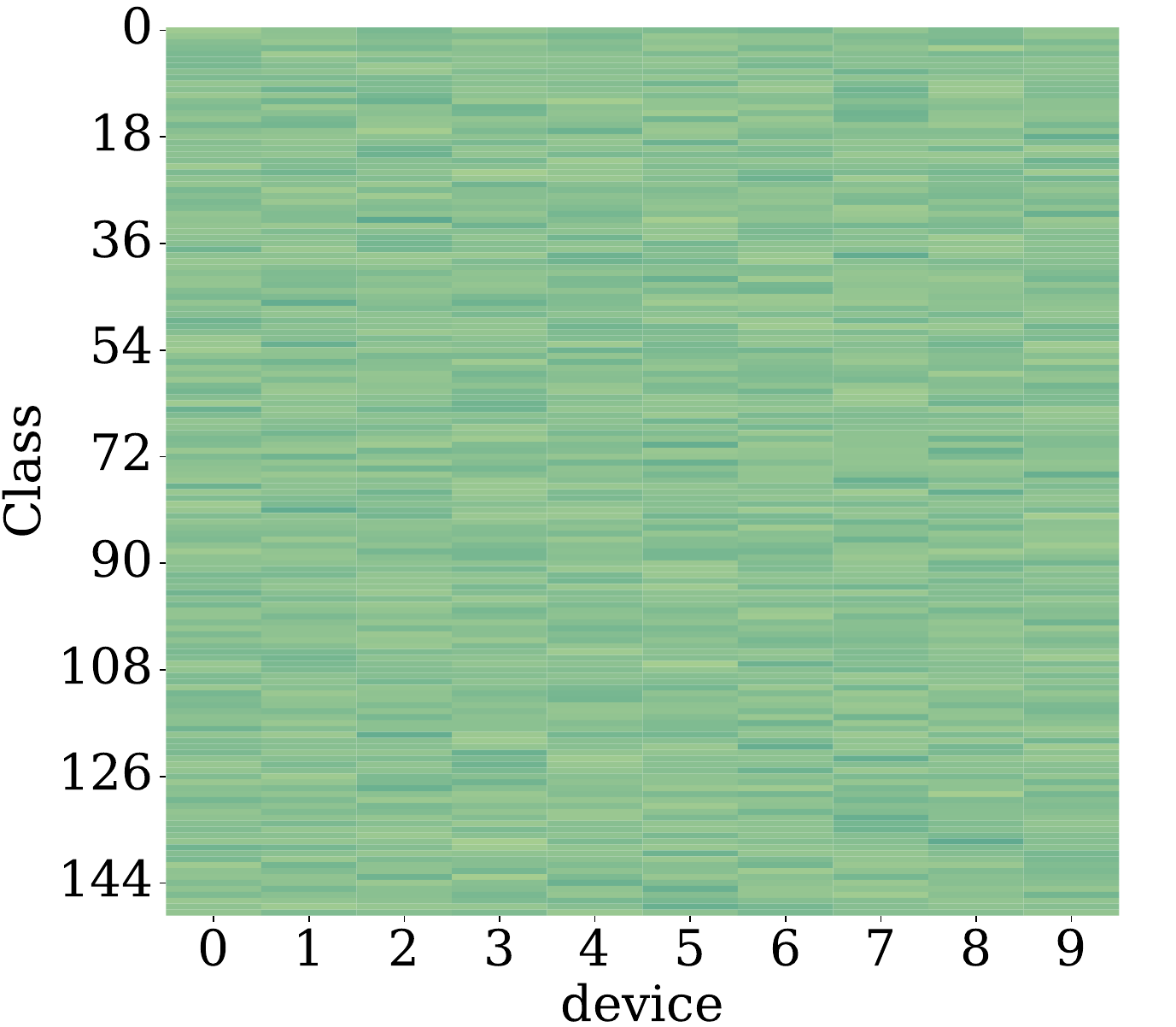}}
    \subfloat[\textit{non-i.i.d.} partition]
    {\includegraphics[width=1.13in]{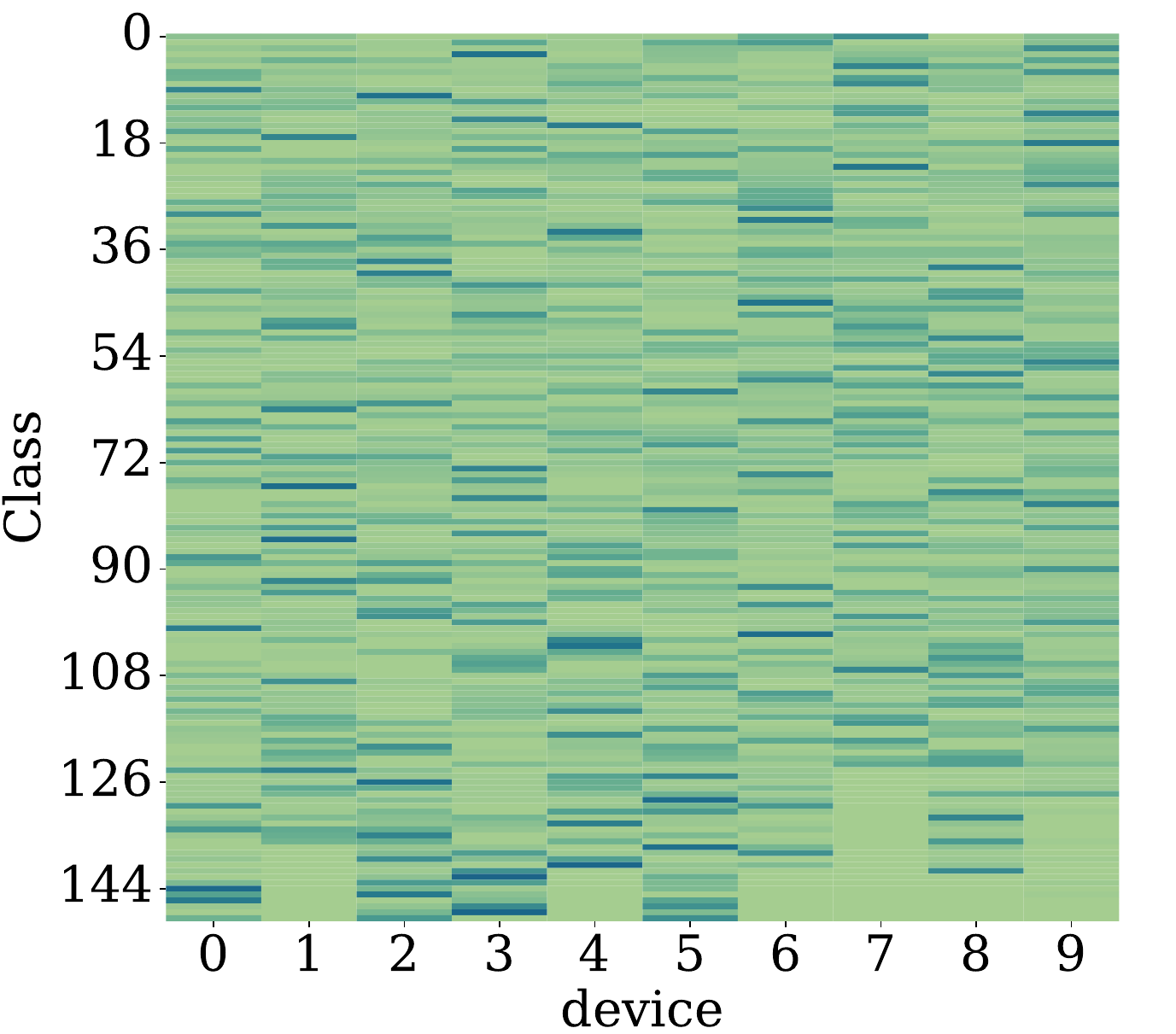}}
    \subfloat[\textit{p.c.c.d.} partition]
    {\includegraphics[width=1.265in]{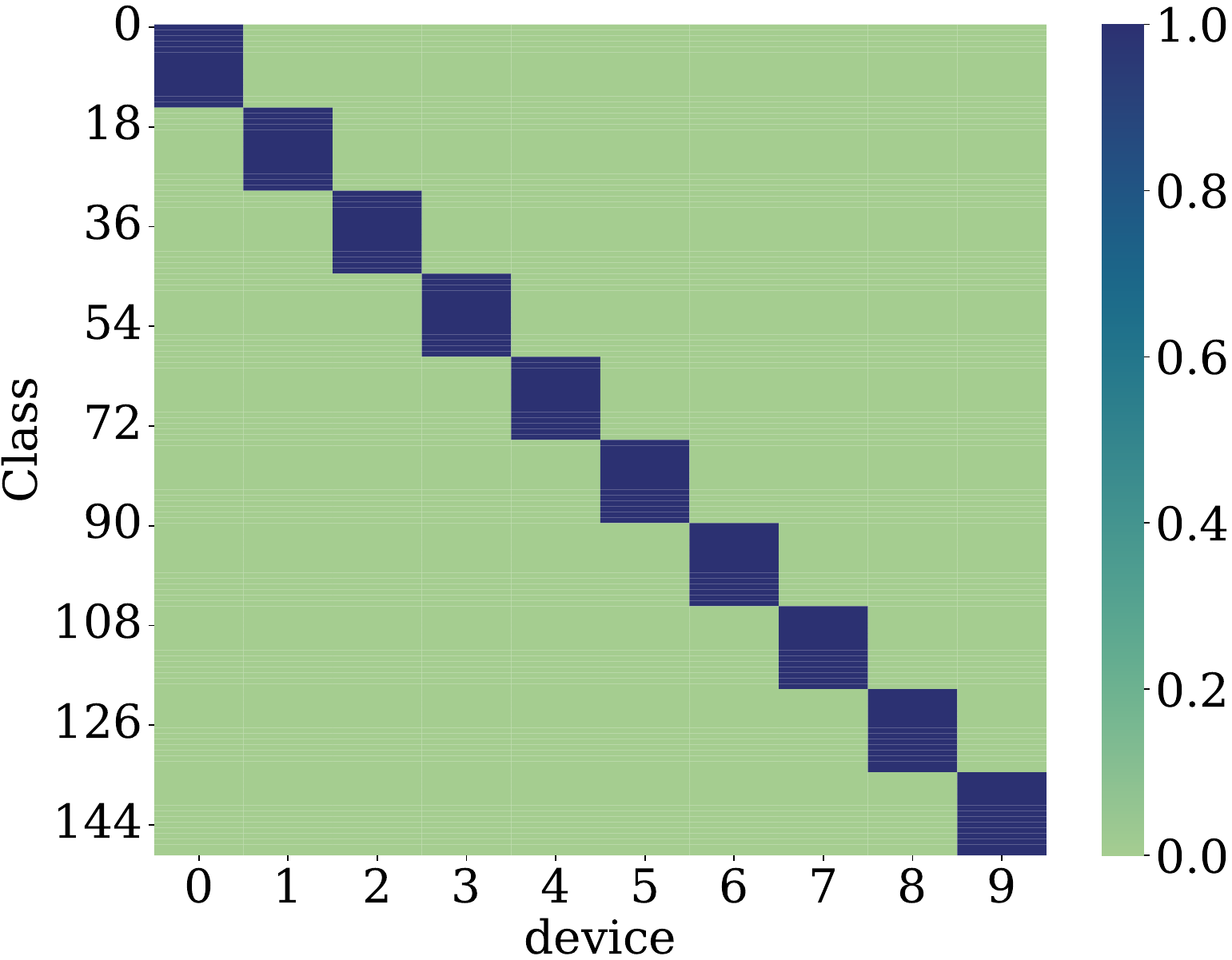}}
    \caption{Visualization of three different data distributions on CUB dataset. (a) For a \textit{i.i.d.} partition, we uniformly split the set of seen classes onto ten devices. (b) To generate \textit{non-i.i.d.} data, Dirichlet distribution with concentration parameter $\alpha = 0.5$ is used. (c) In \textit{p.c.c.d.}, each device owns non-overlapping classes.}
    \label{distribution_CUB}
\end{figure}


\begin{figure}[t]
    \centering
    \includegraphics[width=\linewidth]{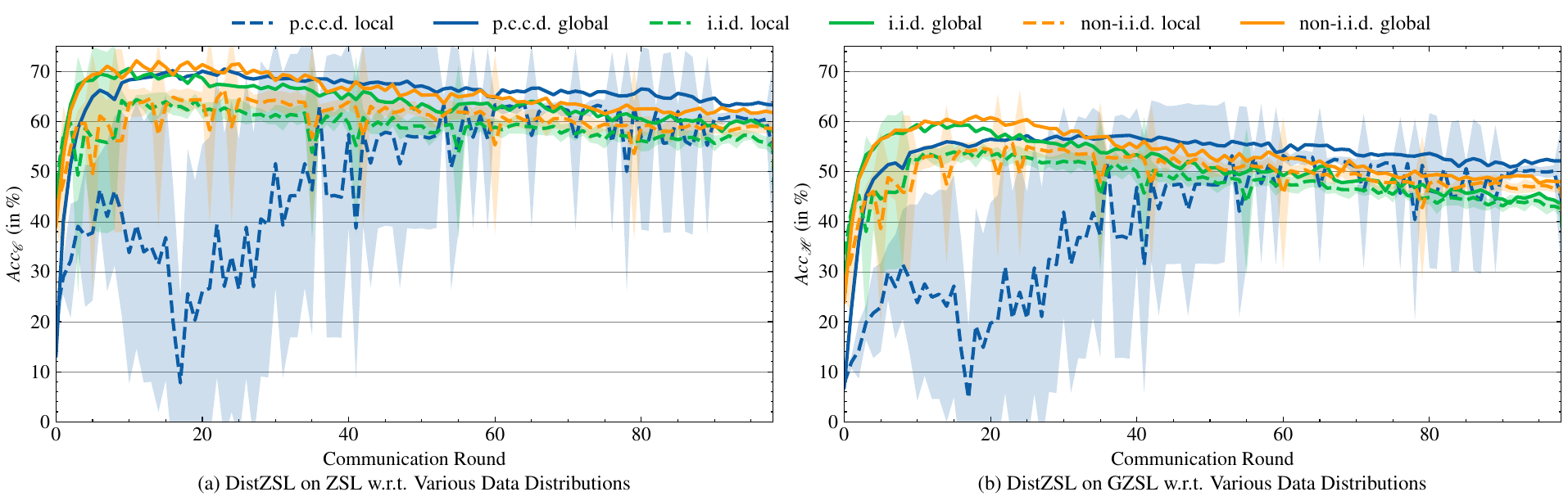}
    \caption{DistZSL training curves on ZSL and GZSL settings with different data distributions.}
    \label{fig:iid}
    \vspace{-10pt}
\end{figure}

\vspace{-5pt}
\subsection{Analyses and Discussions}
\subsubsection{Impact of FL Frameworks}
In order to understand the impact of FL frameworks on ZSL methods, we execute an extensive study combining three different ZSL methods with five representative FL frameworks. This results in the creation of 15 unique method combinations. We meticulously tuned the hyperparameters of these combined baselines to ensure optimal performance.
We conducted experiments on three different data distributions, the results of which are outlined in Table \ref{performance1}. We observe that the best performance results among baseline methods are consistently achieved by the combinations of APN+Scaffold and APN+FedProx on the CUB and AwA2 datasets. For the SUN datasets, GEM+FedAvg, GEM+Scaffold, and GEM+FedProx perform best.

These results confirm that \textit{non-i.i.d.} FL frameworks can enhance the ability to handle data heterogeneity issues within FL settings, even in the case of fine-grained datasets. Nevertheless, the performance on \textit{p.c.c.d.} data is significantly below the conventional \textit{non-i.i.d.} setting. In addition, we also noted certain failure cases. For instance, while the combination of APN+MOON achieved excellent results on the CUB and AwA2 datasets, it failed to converge on the SUN dataset. This suggests that while certain combinations may prove effective in some scenarios, they may not universally translate to all datasets or problem types.

\subsubsection{Impact of \textit{i.i.d.}, \textit{non-i.i.d.}, and \textit{p.c.c.d.} Distributions}
Our motivation is to learn a ZSL model with \textit{p.c.c.d} data distribution across devices. To investigate the impact of different data distributions, we conduct experiments with the data sampled from \textit{i.i.d.} and \textit{non-i.i.d.} distributions to compare with the model trained in our \textit{p.c.c.d.} setting. In \textit{i.i.d.}, we uniformly split the set of the seen classes onto ten devices, whereas in non-\textit{i.i.d.} we use a Dirichlet distribution $Dir(\alpha)$ with a concentration parameter $\alpha=0.5$ to sample the data partition. To facilitate a better understanding of the three data distributions, we visualize them on CUB in Figure \ref{distribution_CUB}. 

In Figure \ref{fig:iid}, we illustrate the learning curves with three different types of data distributions. The dashed lines and shaded areas represent the performance and its standard deviation  across local devices before the global aggregation; the solid lines indicate the global performance. As the devices in \textit{p.c.c.d.} are trained on non-overlapping classes, the performance variation is more significant. Figure \ref{fig:iid} also confirms that the \textit{non-i.i.d.} setting witnesses a slightly higher variation than the \textit{i.i.d.} setting, which is resulted from that the sample numbers of different classes vary across clients in the \textit{non-i.i.d.} protocol. For the global performance, the highest recognition accuracy of the proposed model trained in the challenging \textit{p.c.c.d.} setting is only slightly inferior to ones trained in the \textit{i.i.d.} and non-\textit{i.i.d.} settings. Moreover, the \textit{p.c.c.d.} setting tends to yield the best performance in both ZSL and GZSL setups, particularly as training proceeded over more communication rounds. This verifies the proposed DistZSL framework is agnostic to the different data distributions.

\subsubsection{Impact of Various Sampling Fractions}  \label{exp:vsf}

\begin{figure}[t]
    \centering
    \includegraphics[width=0.98\linewidth]{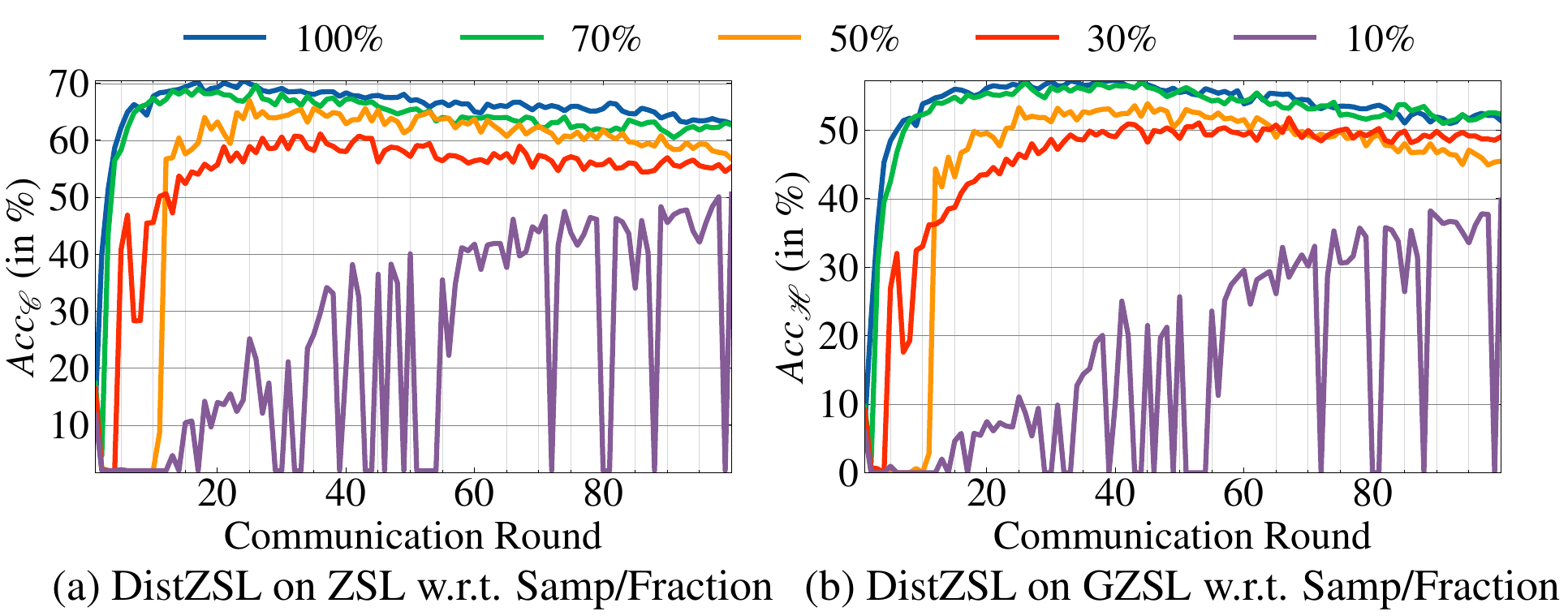}
    \caption{(a) and (b): DistZSL training curves on ZSL and GZSL settings w.r.t. various sampling fractions. }
    \label{fig:sampling}
    \vspace{-15pt}
\end{figure}

A common practice in FL is to sample a subset of clients during each communication round. The choice of this subset, or the \textit{sampling fraction}, can potentially have a significant impact on the efficiency and effectiveness of the learning process.
In order to delve deeper into the implications of different sampling fractions, we carry out a series of experiments. We vary the sampling fraction, choosing either 10\%, 30\%, 50\%, 70\%, or 100\% of participants to take part in each communication round. Our primary objective is to observe the effect that the sampling fraction had on the rate of convergence, a critical measure of the efficiency of the FL training process.

Our findings, illustrated in Figure \ref{fig:sampling}(\textcolor{red}{a}) and \ref{fig:sampling}(\textcolor{red}{b}), reveal some interesting insights. It appears that after 100 rounds of communication, the performance levels achieved by both ZSL and GZSL are comparable across all sampling fractions but 10\%. This suggests that even with smaller sampling fractions, performance levels could be maintained. However, there is a notable caveat to these findings. We observe that when the sampling fraction is reduced (\textit{i.e.,} fewer participants are included in each communication round), the rate of convergence is slower. Furthermore, performance exhibits increased instability, particularly when we attempt to aggregate local models that have been trained on subsets of seen classes. This observation highlights the importance of considering the sampling fraction in the design and implementation of federated learning systems. Although smaller fractions might still yield comparable performance levels, they may also introduce challenges in terms of slower convergence rates and less stable performance. Thus, striking a balance between sampling fraction and system efficiency and stability becomes a key consideration in the deployment of FL strategies.

\begin{figure}[t]
    \centering
    \includegraphics[width=0.97\linewidth]{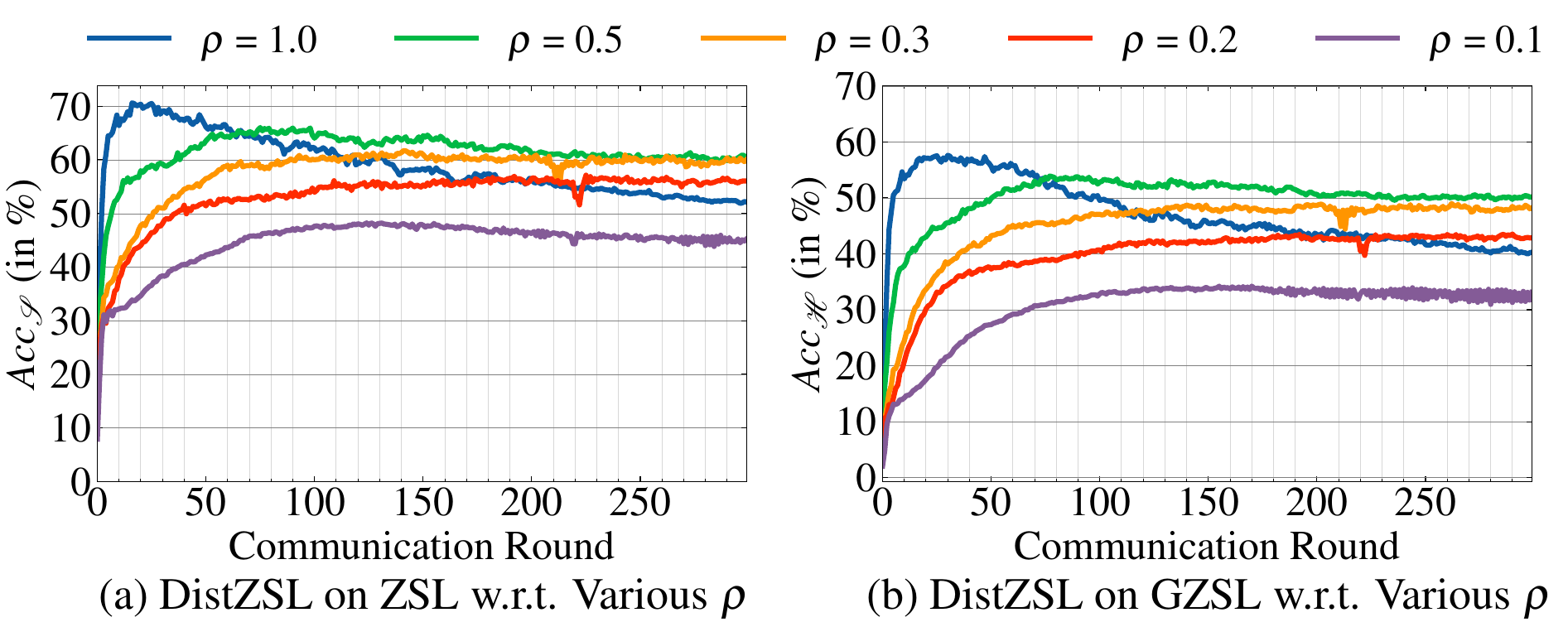}
    \vspace{-1pt}
    \caption{DistZSL training curves on the \textit{p.c.c.d.} setting w.r.t. partial data samples on the CUB dataset.}
    \label{fig:lesssamples}
\end{figure}

\begin {table}[t]
\caption {Effects of various client numbers and corresponding sampling fractions on CUB dataset of DistZSL.}
\begin{center}
\scalebox{0.9}{
\begin{tabular}[t]{c c c c ccc }
\toprule
 \#Clients & \#Sampled & \#Local Classes & Acc$_{\mathcal{C}}$ &  Acc$_{\mathcal{Y}^{u}}$ & Acc$_{\mathcal{Y}^{s}}$ & Acc$_{\mathcal{H}}$  \\
 \hline 
    10 & 3 & 15
    & 60.2          & 47.5              & 47.3           & 47.4
    \\
    10 & 5 & 15
    & 59.6          & 48.2              & 48.8           & 48.5
    \\
    10 & 10 & 15    
    & 71.6          & 57.5               & 58.0          & 57.8
    \\
    \hline 
    20 & 5 & 7/8
    & 60.3          & 45.7               & 48.6          & 47.1
    \\
    20 & 10  & 7/8
    & 62.4          & 45.7               & 52.5          & 48.9
    \\
    20 & 20      & 7/8
    & 62.5          & 48.9               & 49.0          & 48.9
    \\
    \hline 
    30 & 5 & 5
    & 57.5          & 43.7              & 40.7             & 42.1
    \\
    30 & 10 &5
    & 60.4          & 45.2              & 44.4            & 44.8
    \\
    30 & 30   & 5  
    & 61.0 &46.4 & 47.2 & 46.8
    \\
\hline\bottomrule
\end{tabular}}
\end{center}
\label{clientnumber}
\vspace{-15pt}
\end {table} 

\subsubsection{Impact of Client Number} 
To explore the impact of client numbers and corresponding sampling fractions, we conduct experiments with 10, 20 and 30 client numbers on the \textit{p.c.c.d.} setting. Partitioning the dataset into different numbers of clients results in different numbers of locally available classes. In addition, we sample various fractions of the clients in each communication round. All the experiments are conducted with the same set of hyper-parameters without particular adjustments. As shown in Table \ref{clientnumber}, when partitioning the dataset into more clients, the performance moderately drops. Also, similar to the conclusion in Section \ref{exp:vsf}, when sampling fewer clients in each communication round, the performance slightly decreases.

\subsubsection{Impact of Few-shot Samples} 
Understanding the impact of data quantity within local classes is crucial in machine learning scenarios, so we conducted a series of experiments to gain more insights into this. These experiments are performed on the CUB dataset, using a \textit{p.c.c.d.} setting.
In this setup, each client only had a subset of training samples from non-overlapping classes. To simplify the process, we defined a local data ratio, denoted as $\rho$, with values set to 0.1, 0.2, 0.3, 0.5, and 1.0. This ratio represents the percentage of training samples each client possesses. Figure \ref{fig:lesssamples} illustrates the detailed learning curves with various levels of $\rho$. Interestingly, our proposed method showed a robust performance even with reduced data quantities. For instance, when trained with only 10\% of data (approximately 6 samples per class in CUB), our method still managed to achieve around 45\% accuracy on unseen classes.

\subsubsection{Hyper-parameter Sensitivity}
Our proposed DistZSL model's overall objective function for local model training is controlled by three hyperparameters, namely the weights of $\ell_{kl}$, $\ell_{bc}$, and $\ell_{ad}$. To gain a deeper understanding of how these various components influence the effectiveness of the proposed DistZSL model, we have conducted an examination of the sensitivity of these three hyper-parameters. The results of this analysis are visually represented in Figure \ref{fig:hyper}. As can be observed from the figure, the performance on the CUB dataset reaches its optimum when the weight of $\ell_{kl}$ is configured to 10, and $\ell_{bc}$ and $\ell_{ad}$ are set to 0.1 and 0.3, respectively. This signifies that the balance among these three components plays a crucial role in achieving optimal performance for the proposed DistZSL model.

\begin{figure}
    \centering
    \includegraphics[width=1\linewidth]{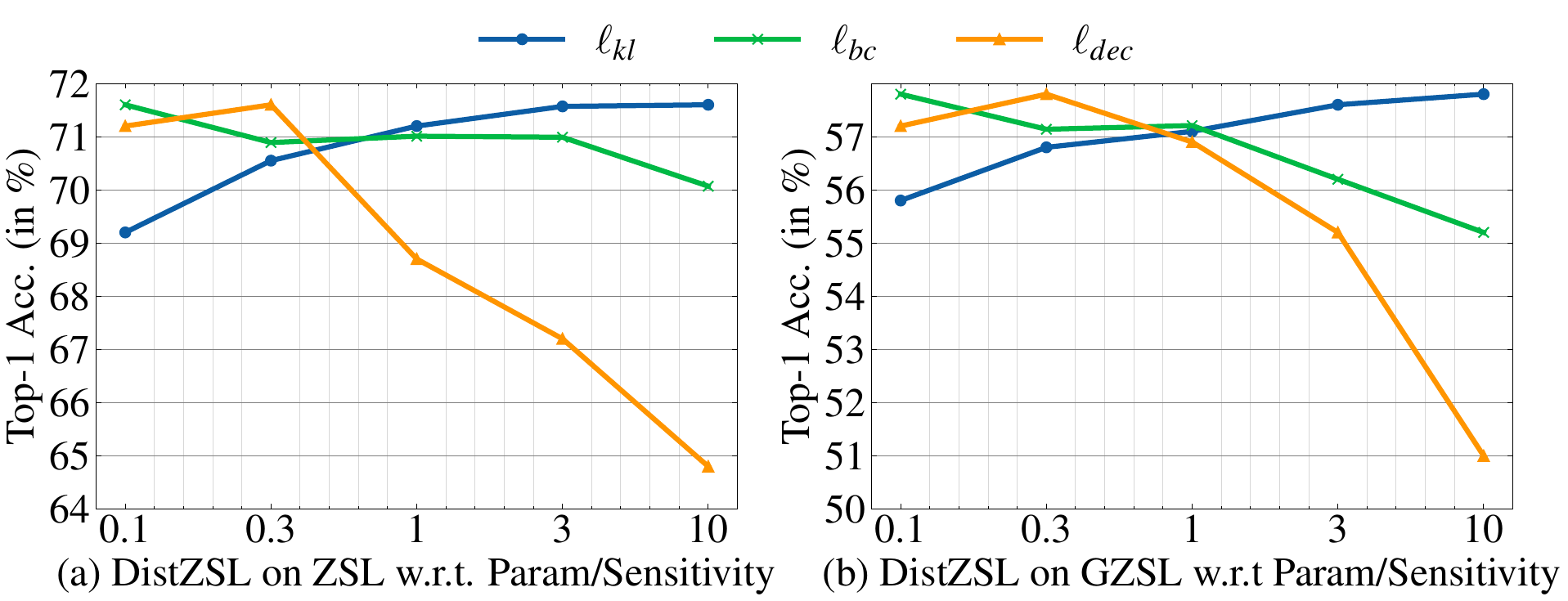}
    \caption{DistZSL hyper-parameter sensitivity on ZSL and GZSL settings.}
    \label{fig:hyper}
    \vspace{-10pt}
\end{figure}

\section{Conclusion}
In this paper, we explore the concept of Distributed Zero-Shot Learning ({DistZSL}). We propose a solution that incorporates attribute-based learning, a bilateral visual-semantic connection, and a cross-device attribute regularizer to harmonize visual-semantic predictions across various devices. This proposed methodology proves capable of managing diverse data distributions, especially partial class-conditional data (\textit{p.c.c.d.})—a challenging aspect for all existing \textit{non-i.i.d.} FL methods. We set a benchmark for DistZSL by integrating state-of-the-art ZSL methods with \textit{non-i.i.d.} FL frameworks, leading to an in-depth evaluation and comparison of the resulting performance metrics. Furthermore, our empirical analysis indicates that the use of attribute-based learning can significantly mitigate the global aggregation difficulties typically associated with traditional attribute-free learning. Through extensive experimentation, we have validated that the proposed approach is equipped to manage the dynamics of participant engagement, including various sampling fractions, client numbers, and partial data samples. 

\balance
\bibliographystyle{IEEEtran}
\bibliography{Bib} 

\vfill

\newpage

\appendices
\section{Baselines Methods}
We provide a comprehensive introduction to the baseline ZSL and FL methods as follows.

 {\textbf{APN}} \cite{xu2020attribute} leverages the power of local and global features for image understanding. It employs the attribute prototype network to decipher the local features of individual images while a visual semantic embedding layer is harnessed to learn global features. A fundamental aspect of attribute prototypes is their ability to convert feature maps into attribute maps.  This transition diminishes the channel dimension of feature maps to align with the number of predefined attributes associated with each class. Each attribute map is designed such that the most prominent value signifies the location of a distinct semantic meaning, thereby providing a spatial context for that semantic within the image.

 {\textbf{GEM}} \cite{liu2021goal} models the human cognitive process for identifying unseen classes in zero-shot learning scenarios. The idea behind GEM is to use the attribute description of objects as a guide for predicting human gaze patterns. The predicted gaze information is then used to construct attribute attention maps. These attention maps provide valuable insights into the visual features that the model should focus on during object recognition tasks. This is particularly effective when identifying objects that belong to unseen classes, mirroring the human visual perception of unfamiliar scenarios.

 {\textbf{MSDN}} \cite{chen2022msdn} encompasses two interconnected attention sub-networks: the attribute-to-visual attention and the visual-to-attribute attention sub-nets. The attribute-to-visual attention sub-net learns to highlight visual features based on the given attribute information. On the other hand, the visual-to-attribute attention sub-net generates attention for attributes based on the learned visual features. To foster collaboration and reciprocal guidance between these sub-networks, a unique semantic distillation loss is incorporated. This loss quantifies the divergence between the two sub-networks' outputs, encouraging their alignment. Therefore, MSDN utilizes a bidirectional attention mechanism that integrates mutual guidance into the learning process, resulting in a more thorough and robust comprehension of unseen classes in ZSL.

{\textbf{SVIP}} \cite{chen2025svip} introduces a semantically contextualized visual patch framework that addresses the problem of semantic–unrelated visual information in zero-shot learning. Instead of attempting to suppress irrelevant details after feature extraction, SVIP operates directly at the patch level by identifying and handling non-semantic patches before they propagate through the network. A self-supervised patch selection strategy aggregates attention maps from the transformer backbone to estimate semantic relevance, and a lightweight patch classifier is trained to detect semantic–unrelated patches. Rather than discarding these patches, SVIP replaces them with learnable patch embeddings that are initialized from semantic descriptors, thereby preserving structural consistency and injecting semantic cues. Furthermore, an attribute localization component leverages these contextualized patches to enhance the discriminability of attributes. This patch-level intervention leads to stronger semantic alignment and state-of-the-art performance on standard ZSL and GZSL benchmarks.

 {\textbf{FedAvg}} \cite{mcmahan2017communication} has become a benchmark for FL methods due to its efficient and straightforward approach. Initially, the server sends the global model to randomly selected parties. These parties then use their local datasets to update the model. After the local models have been updated, they are returned to the server. The server concludes the round by averaging the received local models to update the global model. This method varies from traditional distributed SGD (FedSGD) because it allows parties to update their local model across multiple epochs, reducing the number of communication rounds and making the process more communication-efficient.

 {\textbf{FedProx}} \cite{li2020federated} enhances the FedAvg approach by introducing an improvement to the local objective. This method limits the degree of divergence each local model can have from the global model by constraining on the extent of local updates. This is achieved by adding an L2 regularization term to the local objective function, which sets a boundary on the distance between the local model and the global model. The underlying concept of this approach is to ensure that the averaged model, procured after aggregating all local updates, is not vastly divergent from the global optimum. The impact of the L2 regularization is managed by a hyper-parameter, controlling its influence.

 {\textbf{FedNova}} \cite{wang2020tackling} is an enhanced method based on FedAvg, particularly focusing on the model aggregation stage. It recognizes that distinct clients or parties may execute varying numbers of local updates or steps in each round, due to factors like differences in computational resources, time limitations, or local dataset size. The core idea behind FedNova is that parties executing a greater number of local steps will likely produce larger local updates. To ensure a balanced global model update, FedNova adopts a process where local updates from each party are normalized and scaled according to their respective number of local steps, prior to updating the global model. 

 {\textbf{Scaffold}} \cite{karimireddy2020scaffold} is a federated learning algorithm that considers the \textit{non-i.i.d.} nature of client data as a source of variance among the clients and utilizes variance reduction techniques to manage it. It introduces control variates for the server and each client, which are used to estimate the direction of model updates for the server and each client. The discrepancy between these two directions of updates is taken as an approximation of the drift in local training. As a result, SCAFFOLD adjusts the local updates by accounting for this drift in local training.

 {\textbf{MOON}} \cite{li2021model}  employs the model similarity as a key strategy to optimize local training for each participant. In traditional FL, divergence of client-specific models can become significant due to the \textit{non-i.i.d.} characteristics of local data. MOON counters this divergence by promoting similarity across model representations for different clients. This is achieved through contrastive learning, a technique that draws similar instances closer and pushes dissimilar ones further apart. By enforcing consistency across local models, MOON facilitates more efficient learning and enhances the overall performance of the FL system. 

{\textbf{FedGloss}} \cite{caldarola2025beyond} addresses the limitations of sharpness-aware minimization in federated learning by focusing on global rather than local sharpness. While prior methods such as FedSAM apply SAM locally at each client, they suffer from the mismatch between local and global loss landscapes, meaning that flatter local minima do not always translate into global flatness. FedGloSS shifts the sharpness-aware optimization to the server side, directly targeting global flatness. To keep the method communication-efficient, FedGloSS avoids extra client-server exchanges by approximating sharpness using the previous round’s pseudo-gradients, thereby eliminating the need for additional forward–backward passes on clients. This design reduces client computation, maintains communication efficiency, and consistently achieves flatter minima, yielding better generalization across heterogeneous federated vision benchmarks.

\section{Theoretical Analysis}

We provide theoretical support for the two key components in DistZSL, including the cross-node attribute regularizer $\ell_{\mathrm{kl}}$ (Eq.~7) and the global attribute-to-visual consensus $\ell_{\mathrm{bc}}$ (Eq.~8). Throughout, classes have attribute prototypes $\mathcal{A}=\{\va_y\}_{y\in\mathcal{Y}}$, all probability vectors lie in the simplex $\Delta^{|\mathcal{Y}|-1}$, and softmax temperature $\tau>0$ is fixed.

\subsubsection{Setup and assumptions.}
Let $f:\mathcal{X}\!\to\!\mathbb{R}^{d_v}$ be the backbone, $g:\mathbb{R}^{d_v}\!\to\!\mathbb{R}^{d_a}$ the attribute regressor, and $h:\mathbb{R}^{d_a}\!\to\!\mathbb{R}^{d_v}$ the semantic-to-visual regressor. We define $w$ as the model parameters.
For a sample $(\vx,y)$ on client $k$, define logits $z_k(\vx)=\widehat \va_k(\vx)^{\top}A \in \mathbb{R}^{|\mathcal{Y}|}$ with $\widehat \va_k(\vx)=g(f(\vx))$, and the corresponding client distribution
\begin{equation}
p_k(\cdot\mid \vx;\tau)=\mathrm{softmax}\!\big(\vz_k(\vx)/\tau\big)\in\Delta^{|\mathcal{Y}|-1}.
\end{equation}
Let $\Gamma\in\mathbb{R}^{|\mathcal{Y}|\times|\mathcal{Y}|}$ denote the global semantic similarity matrix (estimated once on the server), and $p_\Gamma(\cdot\mid y;\tau)=\mathrm{softmax}(\Gamma_y/\tau)$ denote the target distribution for class $y$.

We make the following mild assumptions restricted to the data manifold $\mathcal{M}\subset\mathcal{X}$ in distributed learning setting.

\begin{itemize}
\item[A1] (Bi-Lipschitz decoder locally on $\mathrm{Im}(g\!\circ\! f)$). There exist constants $0<c_h\le L_h<\infty$ such that for all $\va_1,\va_2$ in a neighborhood of $\mathrm{Im}(g\!\circ\! f)$,
\begin{align}
c_h\|\va_1-\va_2\|\ &\le\ \|h(\va_1)-h(\va_2)\|\ \nonumber \\ &\le\ L_h\|\va_1-\va_2\|.
\end{align}
\item[A2] (Bounded reconstruction). Training with $\ell_{\mathrm{bc}}$ yields a uniform bound $\|h(g(f(\vx))) - f(\vx)\|\le \delta$ for all $\vx\in\mathcal{M}$ and some $\delta\ge 0$.
\item[A3] (Model smoothness near FedAvg iterate). For a fixed $\vx$, the mapping $w\mapsto z(\vx;w)$ (logits under parameters $w$) is $L_z$-Lipschitz in a neighborhood of the aggregated parameters $\bar w$, and softmax has Lipschitz constant $L_{\mathrm{sm}}(\tau)$ in logits, such that 
$\|z(\vx;w_1) - z(\vx;w_2)\| \le L_z\|w_1-w_2\|, \forall w_1, w_2 \in \mathcal{N}(w)$
and
$\|\softmax(\frac{z_1}{\tau}) - \softmax(\frac{z_2}{\tau})\| \le L_{\mathrm{sm}}(\tau)\|z_1-z_2\|$.
\item[A4] (Prototype separability). Prototypes are unit-normalized, $\|\va_y\|=1$, and have attribute margin $\Delta_y=\min_{y'\neq y}\|\va_y-\va_{y'}\|>0$.
\end{itemize}

\subsubsection{Cross-node attribute regularization}

Client $k$ minimizes the KL divergence to the global target
\begin{equation}
\ell_{\mathrm{kl}}^{(k)}(\vx,y)=\tau^2\,\mathrm{KL}\!\big(p_\Gamma(\cdot\mid y;\tau)\,\Vert\,p_k(\cdot\mid \vx;\tau)\big).
\end{equation}

\begin{lemma}[Client-level alignment]
\label{lem:client_alignment}
If $\mathbb{E}_{(\vx,y)}[\ell_{\mathrm{kl}}^{(k)}(\vx,y)]\le \varepsilon_k$ for some $\varepsilon_k > 0$ for client $k$, then for almost all $(\vx,y)$
\begin{equation}
\big\|p_k(\cdot\mid \vx;\tau)-p_\Gamma(\cdot\mid y;\tau)\big\|_1
\ \le\ \sqrt{\tfrac{2}{\tau^2}\,\varepsilon_k}.
\end{equation}
where $\varepsilon_k$ denotes the expected cross-node alignment error of client $k$, i.e.,
$\varepsilon_k = \mathbb{E}_{(\vx,y)}[\ell^{(k)}_{\mathrm{kl}}(\vx,y)]$.
Consequently, for any two clients $j,k$,
\begin{equation}
\big\|p_j(\cdot\mid \vx;\tau)-p_k(\cdot\mid \vx;\tau)\big\|_1
\ \le\ \sqrt{\tfrac{2}{\tau^2}\,\varepsilon_j}+\sqrt{\tfrac{2}{\tau^2}\,\varepsilon_k}.
\end{equation}
\end{lemma}

\begin{proof}
By definition of the KL-based regularization loss, we take expectation over $(\vx,y)$ yields
\begin{equation}
\mathbb{E}_{(\vx,y)}\!\big[\ell^{(k)}_{\mathrm{kl}}(\vx,y)\big]
= \tau^2 \,\mathbb{E}_{(\vx,y)}\!\left[\mathrm{KL}(p_\Gamma \,\Vert\, p_k)\right]
\;\leq\; \varepsilon_k.
\end{equation}

Next, according to Pinsker’s inequality, for any distributions $p,q$, we have $\|p-q\|_1 \;\leq\; \sqrt{2 \,\mathrm{KL}(p\Vert q)}$. Applying this to $p=p_\Gamma(\cdot\mid y;\tau)$ and $q=p_k(\cdot\mid \vx;\tau)$ gives
\begin{equation}
\|p_\Gamma - p_k\|_1 \;\leq\; \sqrt{2\,\mathrm{KL}(p_\Gamma\Vert p_k)}
\;=\; \sqrt{\tfrac{2}{\tau^2}\,\ell^{(k)}_{\mathrm{kl}}(\vx,y)}.
\end{equation}

Now take expectation over $(\vx,y)$. Since the square root is concave, Jensen’s inequality gives
\begin{align}
& \mathbb{E}_{(\vx,y)}\!\big[\|p_\Gamma - p_k\|_1\big] \nonumber\\
&\;\leq\; \sqrt{\tfrac{2}{\tau^2}\,\mathbb{E}_{(\vx,y)}[\ell^{(k)}_{\mathrm{kl}}(\vx,y)]}
\;\leq\; \sqrt{\tfrac{2}{\tau^2}\,\varepsilon_k}.
\end{align}

Finally, for two clients $j,k$, the triangle inequality yields
\begin{equation}
\|p_j - p_k\|_1 \;\leq\; \|p_j - p_\Gamma\|_1 + \|p_k - p_\Gamma\|_1.
\end{equation}
Taking expectations and applying the bounds above completes the proof.
\end{proof}

\begin{theorem}[Server-level guarantee under FedAvg]
\label{thm:server_alignment}
Let $\bar p(\cdot\mid \vx;\tau)=\sum_{k}\alpha_k\,p_k(\cdot\mid \vx;\tau)$ be the mixture of client distributions with FedAvg weights $\alpha_k=\frac{|{\cal D}_{s,k}|}{\sum_j |{\cal D}_{s,j}|}$. Then
\begin{align}
&\mathrm{KL}\!\big(\bar p(\cdot\mid \vx;\tau)\,\Vert\,p_\Gamma(\cdot\mid y;\tau)\big)\   \nonumber \\
& \le\ \sum_k \alpha_k\,\mathrm{KL}\!\big(p_k(\cdot\mid \vx;\tau)\,\Vert\,p_\Gamma(\cdot\mid y;\tau)\big).
\end{align}
If assumption A3 holds and the global model distribution $p(\cdot\mid x;\bar w,\tau)$ is within $\xi$, in $L1$of $\bar p(\cdot\mid x;\tau)$, then
\begin{align}
& \mathrm{KL}\!\big(p(\cdot\mid \vx;\bar w,\tau)\,\Vert\,p_\Gamma(\cdot\mid y;\tau)\big)
\ \nonumber \\ &\le\ \sum_k \alpha_k\,\mathrm{KL}\!\big(p_k\,\Vert\,p_\Gamma\big)\ +\ C\,\xi,
\end{align}
for a constant $C$ depending only on $\tau$.
\end{theorem}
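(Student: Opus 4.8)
The statement decomposes into two logically independent claims, and I would establish them in order. The first inequality is purely information-theoretic and needs no assumptions: since $\bar p=\sum_k\alpha_k p_k$ is a convex combination (the $\alpha_k$ are nonnegative and sum to one) and the map $p\mapsto\mathrm{KL}(p\Vert p_\Gamma)$ is convex for the \emph{fixed} reference $p_\Gamma$, Jensen's inequality closes the gap. Convexity here is transparent: writing $\mathrm{KL}(p\Vert p_\Gamma)=\sum_i p_i\log p_i-\sum_i p_i\log (p_\Gamma)_i$, the second term is linear in $p$ while the first is the negative entropy, which is convex. Hence
\[
\mathrm{KL}\!\Big(\textstyle\sum_k\alpha_k p_k\,\Big\Vert\,p_\Gamma\Big)\ \le\ \sum_k\alpha_k\,\mathrm{KL}\!\big(p_k\,\Vert\,p_\Gamma\big),
\]
which is exactly the first displayed bound.

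For the second inequality I would pass from the mixture $\bar p$ to the true aggregated-model distribution $p(\cdot\mid \vx;\bar w;\tau)$ by a local Lipschitz estimate. The plan is to decompose
\[
\mathrm{KL}\!\big(p(\cdot\mid \vx;\bar w;\tau)\,\Vert\,p_\Gamma\big)=\mathrm{KL}\!\big(\bar p\,\Vert\,p_\Gamma\big)+\Big[\mathrm{KL}\!\big(p(\cdot\mid \vx;\bar w;\tau)\,\Vert\,p_\Gamma\big)-\mathrm{KL}\!\big(\bar p\,\Vert\,p_\Gamma\big)\Big],
\]
bound the first term by the inequality just proved, and show the bracket is at most $C\xi$. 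The gradient of $p\mapsto\mathrm{KL}(p\Vert p_\Gamma)$ has coordinates $\log\big(p_i/(p_\Gamma)_i\big)+1$, so I would argue its $\ell_\infty$ norm (the Lipschitz constant dual to the $\ell_1$ metric) is uniformly bounded on the region where all coordinates of $p$ and $p_\Gamma$ stay above a positive floor. Because $p_\Gamma=\softmax(\Gamma_y/\tau)$ with $\Gamma$ fixed and $\tau>0$ fixed, every entry of $p_\Gamma$ is bounded below by some $q_{\min}(\tau)>0$; by A3 the iterates $p(\cdot\mid\vx;\bar w;\tau)$ and each $p_k$ are softmax outputs of bounded logits, so they too have entries bounded below by some $p_{\min}(\tau)>0$. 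On this convex slice of the simplex the straight segment from $\bar p$ to $p(\cdot\mid\vx;\bar w;\tau)$ remains inside, so the mean-value inequality gives
\[
\Big|\mathrm{KL}\!\big(p(\cdot\mid\vx;\bar w;\tau)\Vert p_\Gamma\big)-\mathrm{KL}\!\big(\bar p\Vert p_\Gamma\big)\Big|\ \le\ C\,\big\|p(\cdot\mid\vx;\bar w;\tau)-\bar p\big\|_1\ \le\ C\,\xi,
\]
where the last step uses the hypothesized $\xi$-closeness in $L_1$. Combining the two displays yields the stated bound.

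The main obstacle is precisely the potential blow-up of the KL gradient as probabilities approach zero: $\mathrm{KL}(\cdot\Vert p_\Gamma)$ is \emph{not} globally Lipschitz on the simplex, so the perturbation step fails without a uniform positive lower bound on the coordinates. The entire second part therefore hinges on exploiting the temperature-$\tau$ softmax structure of $p_\Gamma$ and of the model outputs (via A3) to pin such a floor, which is exactly what makes $C$ finite and a function of $\tau$. I would be honest that $C$ strictly also absorbs the fixed range of $\Gamma$ and of the logits (hence, implicitly, $|\mathcal{Y}|$); these are constants of the setup, so stating the dependence ``only on $\tau$'' is a mild abuse that I would flag. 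A clean alternative, if one prefers to avoid the mean-value argument, is a reverse-Pinsker–type estimate controlling $\mathrm{KL}(p\Vert p_\Gamma)$ by $\tfrac{1}{q_{\min}(\tau)}\|p-p_\Gamma\|_1^2$-style quantities, which routes the same lower-bound ingredient through a quadratic rather than a gradient bound.
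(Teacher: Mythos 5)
Your proof is correct and takes essentially the same route as the paper's: convexity of $p\mapsto\mathrm{KL}(p\Vert p_\Gamma)$ plus Jensen for the mixture bound, then a mean-value estimate in which the gradient coordinates $\log\bigl(p_i/(p_\Gamma)_i\bigr)+1$ are bounded in $\ell_\infty$ via coordinate floors guaranteed by the fixed-temperature softmax, giving the $C(\tau)\,\xi$ correction. The only cosmetic differences are that you verify convexity via the negative-entropy decomposition rather than the paper's coordinate-wise Jensen on $u\mapsto u\log(u/q_i)$, and that the paper additionally derives the explicit form $\xi = L_{\mathrm{sm}}(\tau)\,L_z\sum_k\alpha_k\|\bar w-w_k\|$ from A3 rather than taking the $L_1$-closeness purely as hypothesized; your caveat that $C$ implicitly absorbs the logit range (hence the paper's ``depending only on $\tau$'' is a mild abuse) is accurate.
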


\begin{proof}
For any fixed $q$ and distributions $\{p_k\}$ with weights $\{\alpha_k\}$,
KL is convex in its first argument:
\begin{align}
&\mathrm{KL}\!\Big(\sum_k \alpha_k p_k \,\Big\Vert\, q\Big) 
= \sum_i \Big(\sum_k \alpha_k p_{k,i}\Big)\log\frac{\sum_k \alpha_k p_{k,i}}{q_i} \nonumber \\
& \;\le\; \sum_k \alpha_k \sum_i p_{k,i}\log\frac{p_{k,i}}{q_i}
= \sum_k \alpha_k\,\mathrm{KL}(p_k\Vert q),
\end{align}
where the inequality follows from Jensen applied coordinate-wise to
$u\mapsto u\log(u/q_i)$, which is convex on $u>0$.
Setting $q=p_\Gamma(\cdot\mid y;\tau)$ and $\sum_k\alpha_k p_k=\bar p(\cdot\mid x;\tau)$ gives
\begin{equation}
\mathrm{KL}\!\big(\bar p \,\Vert\, p_\Gamma\big)\ \le\ \sum_k \alpha_k\,\mathrm{KL}\!\big(p_k \,\Vert\, p_\Gamma\big).
\label{A}
\end{equation}
By A3, for a fixed $\vx$ the logit map $w\mapsto z(x;w)$ is $L_z$-Lipschitz near $\bar w$, and softmax with temperature $\tau$ is $L_{\mathrm{sm}}(\tau)$-Lipschitz in logits. Hence, for any client $k$,
\begin{align}
&\big\|p(\cdot\mid \vx;\bar w,\tau)-p_k(\cdot\mid \vx;\tau)\big\|_1 \nonumber \\
& \ \le\ L_{\mathrm{sm}}(\tau)\,\big\|z(\vx;\bar w)-z(\vx;w_k)\big\|\  \\
 & \le\ L_{\mathrm{sm}}(\tau)\,L_z\,\big\|\bar w-w_k\big\|. \nonumber
\end{align}
Using convexity of the $\ell_1$ norm and $\bar p=\sum_k \alpha_k p_k$,
\begin{align}
& \big\|p(\cdot\mid \vx;\bar w,\tau)-\bar p(\cdot\mid \vx;\tau)\big\|_1 \nonumber \\
& \ \le\ \sum_k \alpha_k \big\|p(\cdot\mid \vx;\bar w,\tau)-p_k(\cdot\mid \vx;\tau)\big\|_1 \\
& \ \le\ L_{\mathrm{sm}}(\tau)\,L_z \sum_k \alpha_k \big\|\bar w-w_k\big\|. \nonumber
\end{align}
Denote the right-hand side by $\xi$ for brevity. This is the explicit form used in the theorem.

Let $F(u)=\sum_i u_i\log(u_i/q_i)$ with $q=p_\Gamma(\cdot\mid y;\tau)$ fixed.
If all coordinates of $u$ and $q$ are bounded below:
$u_i\ge m_p(\tau)>0$, $q_i\ge m_\Gamma(\tau)>0$,
then by the mean-value theorem,
\begin{align}
\big|F(r)-F(s)\big| & \;=\; \big|\nabla F(\tilde u)^\top (r-s)\big|
\ \nonumber \\ &\le\ \|\nabla F(\tilde u)\|_\infty\,\|r-s\|_1,
\end{align}
for some $\tilde u$ on the segment $[r,s]$.
Since 
\begin{align}
& \nabla F(u)_i=\log(u_i/q_i)+1,
\|\nabla F(\tilde u)\|_\infty \le C(\tau) \nonumber \\ &:=\max_i\{\,|\log(m_p(\tau)/m_\Gamma(\tau))|+1\,\},
\end{align}
and therefore
\begin{equation}
\big|\mathrm{KL}(r\Vert q)-\mathrm{KL}(s\Vert q)\big| \;\le\; C(\tau)\,\|r-s\|_1.
\end{equation}
Apply this with $r=p(\cdot\mid \vx;\bar w,\tau)$ and $s=\bar p(\cdot\mid \vx;\tau)$ to obtain
\begin{align}
&\mathrm{KL}\!\big(p(\cdot\mid \vx;\bar w,\tau)\,\Vert\,p_\Gamma\big) \nonumber \\
& \ \le\ \mathrm{KL}\!\big(\bar p(\cdot\mid \vx;\tau)\,\Vert\,p_\Gamma\big) \;+\; C(\tau)\,\xi.
\label{B}
\end{align}
From Eq. \ref{A} and \ref{B}, we have
\begin{align}
& \mathrm{KL}\!\big(p(\cdot\mid \vx;\bar w,\tau)\,\Vert\,p_\Gamma\big) \nonumber \\
& \ \le\ \sum_k \alpha_k\,\mathrm{KL}\!\big(p_k \,\Vert\, p_\Gamma\big)\;+\;C(\tau)\,\xi,
\end{align}
which completes the proof.
\end{proof}

Theorem~\ref{thm:server_alignment} states that, as each client reduces its local $\ell_{\mathrm{kl}}$, the global model’s predictive distribution moves monotonically closer to the target semantic distribution $p_\Gamma$, up to the small averaging approximation. Hence, it aligns attribute similarity patterns across clients.

\subsubsection{Global Attribute-to-Visual Consensus}

The bilateral loss
\begin{equation}
\ell_{\mathrm{bc}}(\vx)=\|h(g(f(\vx)))-f(\vx)\|^2
\end{equation}
enforces that $h$ acts as an approximate left-inverse of $g\!\circ\! f$ on the data manifold.
\\
\begin{lemma}[Information preservation via approximate left-inverse]
\label{lem:bilipschitz_lower}
Under A1–A2, for any $\vx_1,\vx_2\in\mathcal{M}$,
\begin{equation}
\|g(f(\vx_1))-g(f(\vx_2))\|
\ \ge\ \tfrac{1}{L_h}\,\|f(\vx_1)-f(\vx_2)\|\ -\ \tfrac{2\delta}{L_h}.
\end{equation}
\end{lemma}

\begin{proof}
Start from the triangle inequality by adding and subtracting the reconstructions:
\begin{align}
&\|f(\vx_1)-f(\vx_2)\| \nonumber \\
&= \big\| \big(f(\vx_1)-h(g(f(\vx_1)))\big)  + \big(h(g(f(\vx_1)))-h(g(f(\vx_2)))\big) \nonumber \\
& \hspace{4.1cm} + \big(h(g(f(\vx_2)))-f(\vx_2)\big) \big\| \nonumber \\
&\le \|f(\vx_1)-h(g(f(\vx_1)))\|
   + \|h(g(f(\vx_1)))-h(g(f(\vx_2)))\| \nonumber \\
&\hspace{3.9cm}  + \|h(g(f(\vx_2)))-f(\vx_2)\|.
\end{align}

By A2, the first and third terms are each bounded by $\delta$:
\begin{equation}
\|f(\vx_i)-h(g(f(\vx_i)))\|\ \le\ \delta,\quad i\in\{1,2\}.
\end{equation}
Apply the upper Lipschitz bound from A1 to the middle term (with $a_1=g(f(\vx_1))$, $a_2=g(f(\vx_2))$):
\begin{align}
& \|h(g(f(\vx_1)))-h(g(f(\vx_2)))\|\ \nonumber \\
& \le\ L_h\,\|g(f(\vx_1))-g(f(x_2))\|.
\end{align}
Combine the bounds to obtain
\begin{equation}
\|f(\vx_1)-f(\vx_2)\|\ \le\ 2\delta + L_h\,\|g(f(\vx_1))-g(f(\vx_2))\|.
\end{equation}
Finally, rearrange:
\begin{equation}
\|g(f(\vx_1))-g(f(\vx_2))\|\ \ge\ \frac{1}{L_h}\,\|f(\vx_1)-f(\vx_2)\| - \frac{2\delta}{L_h}.
\end{equation}
\end{proof}

\paragraph{Interpretation.}
Lemma 3 states that distances in the visual space cannot collapse under $g$ (up to a $2\delta$ slack) because the decoder $h$ approximately inverts $g$ on the image of $f$: enforcing $\ell_{\mathrm{bc}}(\vx)=\|h(g(f(\vx)))-f(\vx)\|^2$ small (small $\delta$) guarantees that attribute predictions $g(f(\vx))$ retain discriminative information from $f(\vx)$.


\begin{lemma}[Attribute error bound from reconstruction]
\label{lem:attr_error_bound}
Fix $(\vx,y)$ and assume A1–A2. Then
\begin{equation}
\|g(f(\vx)) - \va_y\|\ \le\ \tfrac{1}{c_h}\,\big(\, \|h(\va_y)-f(\vx)\| + \delta\,\big).
\end{equation}
In particular, if $h(\va_y)$ approximates the class center in visual space with error $\varepsilon_y=\|h(\va_y)-f(\vx)\|$, then $\|g(f(\vx))-\va_y\|\le (\varepsilon_y+\delta)/c_h$.
\end{lemma}

\begin{proof}
Assumption A1 states that for all $\va_1,\va_2$ in a neighborhood of ${\rm Im}(g\circ\! f)$,
$c_h\|\va_1-\va_2\| \le \|h(\va_1)-h(\va_2)\|$.
Choose $a_1=g(f(\vx))$ and $\va_2=\va_y$ to obtain
\begin{equation}
c_h\,\|g(f(\vx))-\va_y\| \;\le\; \|\,h(g(f(\vx))) - h(\va_y)\,\|.
\label{eq:step1}
\end{equation}
\\
Add and subtract $f(\vx)$ inside the norm and use the triangle inequality:
\begin{align}
&\|\,h(g(f(\vx))) - h(\va_y)\,\| \nonumber \\
&= \|\,\big(h(g(f(\vx))) - f(\vx)\big) + \big(f(\vx)-h(\va_y)\big)\,\| \nonumber\\
&\le \|\,h(g(f(\vx))) - f(\vx)\,\| + \|\,f(\vx)-h(\va_y)\,\|.
\label{eq:step2}
\end{align}

By A2, $\|\,h(g(f(\vx))) - f(\vx)\,\| \le \delta$. Plugging this into \eqref{eq:step2} yields
\begin{equation}
\label{eq:step3}
\|\,h(g(f(\vx))) - h(\va_y)\,\| \;\le\; \delta + \|\,h(\va_y)-f(\vx)\,\|.
\end{equation}

Combine \eqref{eq:step1} and \eqref{eq:step3}:
\begin{equation}
c_h\,\|g(f(\vx))-\va_y\| \;\le\; \delta + \|\,h(\va_y)-f(\vx)\,\|.
\end{equation}
Divide both sides by $c_h$ to obtain the claimed bound:
\begin{equation}
\|g(f(\vx)) - \va_y\| \;\le\; \frac{1}{c_h}\Big(\delta + \|h(\va_y)-f(\vx)\|\Big).
\end{equation}
Setting $\varepsilon_y=\|h(\va_y)-f(\vx)\|$ gives $\|g(f(\vx))-\va_y\|\le (\varepsilon_y+\delta)/c_h$.
\end{proof}

\begin{theorem}[Margin preservation for attribute-based classification]
\label{thm:margin}
Assume A1, A2, A4 and let $\varepsilon_y=\|h(\va_y)-f(\vx)\|$. If
\begin{equation}
\delta + \varepsilon_y\ <\ \tfrac{c_h}{2}\,\frac{\Delta_y^2}{\max_{y'\neq y}\|\va_y-\va_{y'}\|},
\end{equation}
then the attribute-based classifier using logits $s_{y'}=g(f(\vx))^{\top}\va_{y'}$ predicts the correct label $y$.
\end{theorem}

\begin{proof}
Let $\hat \va = g(f(\vx))$ and denote $d_{y'} \triangleq \|\va_y-\va_{y'}\|$ for $y'\neq y$. 
By A4, both $\va_y$ and $\va_{y'}$ are unit vectors, and we define 
$\Delta_y = \min_{y'\neq y} d_{y'}$ and $d_{\max} = \max_{y'\neq y} d_{y'}$.

We begin by decomposing the score difference between the correct class and any competitor:
\begin{align}
s_y - s_{y'} &= \hat \va^\top \va_y - \hat \va^\top \va_{y'} 
\nonumber \\ 
&= \va_y^\top(\va_y-\va_{y'}) + (\hat \va - \va_y)^\top(\va_y-\va_{y'}).
\end{align}

The first term can be simplified using the fact that both $\va_y$ and $\va_{y'}$ have unit norm. 
Specifically,
\begin{equation}
\va_y^\top(\va_y-\va_{y'}) = 1 - \va_y^\top \va_{y'} 
= \tfrac{1}{2}\|\va_y-\va_{y'}\|^2 
= \tfrac{1}{2} d_{y'}^{\,2}.
\end{equation}

The second term can be lower bounded by the Cauchy–Schwarz inequality:
\begin{align}
(\hat \va-\va_y)^\top(\va_y-\va_{y'}) & \ge -\,\|\hat \va-\va_y\|\,\|\va_y-\va_{y'}\| \nonumber \\
& = -\,\|\hat \va-\va_y\|\, d_{y'} .
\end{align}

Combining these results, we obtain for every $y'\neq y$,
\begin{equation}
s_y - s_{y'} \;\ge\; \tfrac{1}{2} d_{y'}^{\,2} - \|\hat \va-\va_y\|\, d_{y'}.
\end{equation}

Taking the minimum over all $y'\neq y$ shows that
\begin{equation}
\min_{y'\neq y}(s_y-s_{y'}) \;\ge\; \tfrac{1}{2}\Delta_y^{2} - \|\hat \va-\va_y\|\, d_{\max}.
\end{equation}
Thus, a sufficient condition for correct classification is
\begin{equation}
\|\hat \va-\va_y\| < \tfrac{1}{2}\,\frac{\Delta_y^2}{d_{\max}}.
\end{equation}

Finally, Lemma~\ref{lem:attr_error_bound} provides the bound 
\begin{equation}
\|\hat \va-\va_y\| \le \tfrac{1}{c_h}(\varepsilon_y+\delta).
\end{equation}
Substituting this into the sufficient condition above yields
\begin{equation}
\delta+\varepsilon_y < \tfrac{c_h}{2}\,\frac{\Delta_y^2}{d_{\max}}.
\end{equation}
Under this condition we have $s_y > s_{y'}$ for all $y'\neq y$, 
so the classifier assigns the correct label $y$.
\end{proof}

Theorem~\ref{thm:margin} shows that minimizing $\ell_{\mathrm{bc}}$ (small $\delta$) controls the deviation of predicted attributes from their class anchors, which in turn guarantees class-wise separation in the attribute-based classifier as long as prototypes are reasonably separated. Combined with Lemma~\ref{lem:bilipschitz_lower}, the bilateral connection prevents information loss from $f$ to $g(f(x))$ and stabilizes cross-device learning by keeping discriminative structure intact.

\section{Discussion on Privacy Preservation and Potential Risks}DistZSL inherits the standard privacy benefits of federated learning: raw data never leaves local devices, and only model updates are transmitted to the server. The additional components (cross-node attribute regularization and global attribute-to-visual consensus) rely solely on attribute prototypes and similarity matrices that are shared once across clients; these are dataset-level semantic statistics that do not expose individual examples.

Potential risks include model inversion or membership inference attacks based on shared model parameters, which are well-known challenges in FL in general. Importantly, DistZSL does not introduce new risks beyond existing FL methods. Furthermore, DistZSL is fully compatible with advanced privacy-preserving techniques such as secure aggregation, differential privacy, and homomorphic encryption, which can be adopted in future work to further enhance protection.

\section{Comprehensive Performance Comparison Between All FL Baselines}
In Table~\ref{performance2} and \ref{performance3}, we present the comprehensive tables that compare all six FL baseline methods.

\begin {table*}[!h]
\caption{Performance comparisons (\%) on five datasets among FL baselines, ZSL baselines, and the proposed DistZSL in centralized and \textit{i.i.d.} settings. }
\vspace{-0.2em}
\begin{center}
\setlength{\tabcolsep}{3pt}
\scalebox{0.86}{
\begin{tabular}[t]{c  l | cccc | cccc | cccc | cccc | cccc}
\toprule
&  &  \multicolumn{4}{c|}{CUB} & \multicolumn{4}{c|}{AwA2}  &  \multicolumn{4}{c}{SUN} &  \multicolumn{4}{c}{APY}&  \multicolumn{4}{c}{DeepFashion}  \\ 
&  &  Acc$_{\mathcal{C}}$ &  Acc$_{\mathcal{Y}^{u}}$ & Acc$_{\mathcal{Y}^{s}}$ & Acc$_{\mathcal{H}}$  & Acc$_{\mathcal{C}}$ & Acc$_{\mathcal{Y}^{u}}$ & Acc$_{\mathcal{Y}^{s}}$ & Acc$_{\mathcal{H}}$  & Acc$_{\mathcal{C}}$ & Acc$_{\mathcal{Y}^{u}}$ & Acc$_{\mathcal{Y}^{s}}$ & Acc$_{\mathcal{H}}$ & Acc$_{\mathcal{C}}$ & Acc$_{\mathcal{Y}^{u}}$ & Acc$_{\mathcal{Y}^{s}}$ & Acc$_{\mathcal{H}}$& Acc$_{\mathcal{C}}$ & Acc$_{\mathcal{Y}^{u}}$ & Acc$_{\mathcal{Y}^{s}}$ & Acc$_{\mathcal{H}}$\\

\hline 
\parbox[t]{2mm}{\multirow{4}{*}{\rotatebox[origin=c]{90}{\textit{\textbf{centralized}}}}} 
&    {APN  }  
    & 72.0  & 65.3  & 69.3  & 67.2
    & 68.4  & 56.5  & 78.0  & 65.5
    & 61.6  & 41.9  & 34.0  & 37.6
    & 38.7  & 18.9  & 43.7  & 26.3
    & 35.2  & 25.6  & 34.3  & 29.3
    \\
&    {GEM  }  
    & 77.8  & 64.8    & 77.1  & 70.4
    & 67.3  & 64.8    & 77.5  & 70.6
    & 62.8  & 38.1    & 35.7  & 36.9
    & 39.4  & 19.8    & 45.2  & 27.5
    & 33.1  & 24.8    & 33.1  & 28.3
    \\
&    {MSDN  }  
    & 76.1  & 68.7  & 67.5  & 68.1
    & 70.1  & 62.0  & 74.5  & 67.7
    & 65.8  & 52.2  & 34.2  & 41.3
    & 37.2  & 18.2  & 43.8  & 25.7
    & 28.7  & 21.8  & 29.2  & 25.0
    \\
&    {SVIP*  }
    & 79.8 & 72.1 & 78.1 & 75.0
    & 69.8 & 65.4 & 87.7 & 76.9
    & 71.6 & 53.7 & 48.0 & 50.7
    & 41.3 & 23.9 & 37.1 & 29.1
    & 36.2 & 29.8 & 30.1 & 30.0
\\
    \cline{2-22}
&  {DistZSL}  
    & 73.9          & 62.4              & 70.1             & 66.1
    & 68.5          & 61.0              & 71.2             & 65.7
    & 61.4          & 42.2              & 29.8             & 35.0
    & 38.3          & 18.5              & 44.9             & 26.2
    & 34.6          & 24.2              & 34.7             & 28.5
    \\
&  {DistZSL*}  
    & 82.4          & 71.3              & 76.8             & 73.9
    & 66.7          & 64.3              & 84.7             & 73.1
    & 72.4          & 53.8              & 47.4             & 50.4
    & 40.7          & 21.8              & 45.4             & 29.5
    & 35.9          & 26.2              & 33.8             & 29.5
    \\
\bottomrule
\parbox[t]{2mm}{\multirow{21}{*}{\rotatebox[origin=c]{90}{\textit{\textbf{i.i.d.}}}}} &
 {FedAvg}  
    & --            & --                & 41.1             & --
    & --            & --                & 90.1             & --
    & --            & --                & 0.5              & --
    & --            & --                & 63.2             & --
    & --            & --                & 38.4             & --
    \\
&  {+ APN  }          
    & 68.2          & 59.1              & 60.7            & 59.9
    & 54.5          & 38.9              & 76.2            & 51.5
    & 20.5          & 12.2              & 6.1             & 8.1
    & 34.2          & 8.9               & 47.5            & 15.0
    & 26.1          & 16.3              & {25.8}   & 20.0
    \\
&  {+ GEM  }          
    & 67.4          & 38.7              & 64.1            & 48.2
    & 61.3          & 28.6              & 78.5            & 42.0
    & 61.0          & 32.9              & 31.6            & 32.2
    & 35.1          & 11.1              & 45.8            & 17.9
    & 25.5          & 19.7              & 23.2            & 21.3
    \\
&  {+ MSDN  }          
    & 68.3          & 23.4              & 49.4            & 31.7
    & 57.0          & 17.9              & 70.6            & 28.5
    & 58.4          & 28.2              & 33.4            & 30.6
    & 30.1          & 9.8               & 49.8            & 16.4
    & 20.3          & 6.5               & 24.1            & 10.3
    \\
&  {+ SVIP*  }
    & 79.4 & 58.9 & 71.7 & 64.7 
    & 63.2 & 57.2 & 85.1 & 68.4
    & 68.1 & 50.1 & 46.4 & 48.2
    & 38.2 & 12.6 & 48.2 & 20.0
    & 25.1 & 26.5 & 32.3 & 29.1
    \\
&    {FedProx}        
    & --            & --                & 41.2            & --
    & --            & --                & 89.9            & --
    & --            & --                & 0.7             & --
    & --            & --                & 62.8            & --
    & --            & --                & 38.9            & --
    \\
&    {+ APN}          
    & 67.1          & 58.1             & 62.2            & 60.1
    & 56.6          & 42.2             & 76.2            & 54.3
    & 52.7          & 33.0             & 26.2            & 29.2
    & 34.5          & 9.1              & 49.4            & 15.4
    & 25.0          & 17.4             & 22.1            & 19.5
    \\
&    {+ GEM}          
    & 68.2          & 37.4              & {69.7}   & 48.7
    & 58.9          & 29.2              & 78.5            & 42.6
    & 62.4          & 29.8              & {38.4}   & 33.5
    & 33.1          & 11.3              & 46.7            & 18.2
    & 26.3          & 15.9              & 24.7            & 19.3
    \\
&    {+ MSDN}          
    & 68.8          & 23.9              & 50.7            & 32.5
    & 58.4          & 17.3              & 72.3            & 27.9
    & 57.1          & 29.1              & 32.4            & 30.7
    & 34.2          & 10.4              & 45.2            & 16.9
    & 21.5          & 8.7               & 22.9            & 12.6
    \\
&    {+ SVIP*}
    & 78.3  & 57.8  & 70.1  & 63.4
    & 62.1  & 56.1  & 85.0  & 67.6
    & 66.4  & 47.5  & 43.4  & 45.4
    & 37.4  & 11.2  & 47.9  & 18.2
    & 25.3  & 25.8  & 31.8  & 28.5
    \\

&  {FedNova}         
    & --            & --                & 41.7            & --
    & --            & --                & {90.9}   & --
    & --            & --                & 0.5             & --
    & --            & --                & 63.8            & --
    & --            & --                & 37.9            & --
    \\
&  {+ APN  }          
    & 67.9          & 58.0              & 60.8            & 59.4
    & 54.5          & 40.2              & 75.7            & 52.5
    & 37.4          & 21.7              & 13.8            & 16.9
    & 29.7          & 9.8               & 48.7            & 16.3
    & 25.7          & 17.8              & 21.9            & 19.6
    \\
&  {+ GEM  }          
    & 67.7          & 37.3              & 69.6            & 48.6
    & 58.6          & 26.6              & 78.0            & 39.6
    & 61.8          & 30.1              & 36.7            & 33.1
    & 32.8          & 6.9               & 49.6            & 12.1
    & 24.8          & 18.2              & 23.9            & 20.7
    \\
&  {+ MSDN  }          
    & 68.3          & 26.6              & 36.1            & 30.7
    & 58.4          & 19.8              & 79.1            & 31.7
    & 59.4          & 28.5              & 33.9            & 31.0
    & 29.8          & 8.8               & 46.9            & 14.8
    & 19.9          & 5.9               & 22.6            & 9.4
    \\
&  {+ SVIP  }
    & 79.1 & 58.7 & 71.4  & 64.4
    & 61.8 & 56.7 & 84.4  & 67.8
    & 67.3 & 47.6 & 43.8  & 45.6
    & 36.1 & 10.4 & 48.2  & 17.1
    & 24.4 & 22.1 & 29.4  & 25.2
    \\
&    {Scaffold}        
    & --            & --               & 45.1            & --
    & --            & --               & 90.2            & --
    & --            & --               & 1.1             & --
    & --            & --               & 63.4            & --
    & --            & --               & 37.5            & --
    \\
&    {+ APN  }          
    & 69.7          & 60.8              & 60.3            & 60.5
    & 55.1          & 37.4              & 72.4            & 49.3
    & 35.2          & 18.4              & 12.1            & 14.6
    & 34.8          & 9.5               & 48.3            & 15.9
    & 26.3          & 16.8              & 25.7            & 20.3
    \\
&    {+ GEM  }          
    & 68.2          & 37.7              & 67.3            & 48.3
    & 50.2          & 37.7              & 66.3            & 48.1
    & 61.3          & 30.1              & 36.0            & 32.8
    & 35.1          & 10.6              & 46.9            & 17.3
    & 26.2          & 18.4              & 24.4            & 21.0
    \\
&    {+ MSDN  }          
    & 70.2          & 27.4              & 45.2            & 34.1
    & 58.5          & 19.2              & 70.8            & 30.2
    & 58.5          & 28.2              & 34.8            & 31.2
    & 31.3          & 9.9               & {50.1}   & 16.5
    & 21.1          & 9.4               & 22.8            & 13.3
    \\
&    {+ SVIP  } 
    & 80.2  & 58.6  & 70.7  & 64.1
    & 61.1  & 57.1  & 83.7  & 67.9
    & 67.1  & 48.4  & 43.5  & 45.8
    & 38.5  & 10.7  & 47.8  & 17.5
    & 24.3  & 24.3  & 30.4  & 27.0
    \\
&  {MOON}         
    & --            & --                 & 41.0           & --
    & --            & --                 & 90.3           & --
    & --            & --                 & 0.6            & --
    & --            & --                 & 64.3           & --
    & --            & --                 & 39.5           & --
    
    \\
&  {+ APN  }          
    & 67.4          & 57.7              & 62.7            & 60.1
    & 55.3          & 37.5              & 85.4            & 52.1
    & 3.5           & 1.3               & 0.2             & 0.3
    & 33.8          & 9.5               & 48.6            & 15.9
    & 24.6          & 16.6              & 24.1            & 19.7
    \\
&  {+ GEM  }          
    & 66.4          & 34.3              & 66.2            & 45.2
    & 59.8          & 30.1              & 78.3            & 43.4
    & 59.6          & 25.2              & 35.0            & 29.3
    & 33.9          & 10.1              & 47.8            & 16.7
    & 23.8          & 17.4              & 24.9            & 20.5
    \\
&  {+ MSDN  }          
    & 68.4          & 24.7              & 49.6            & 33.0
    & 57.4          & 17.7              & 80.9            & 29.0
    & 59.2          & 28.8              & 31.7            & 30.2
    & 26.4          & 10.4              & 46.9            & 17.0
    & 25.8          & 8.9              & 25.6             & 13.2
    \\
&  {+ SVIP*  }
    & 79.6  & 57.1  & 69.3  & 62.6
    & 64.1  & 57.7  & 85.1  & 68.7
    & 69.1  & 51.4  & 45.2  & 48.1
    & 36.7  & 10.0  & 48.3  & 16.6
    & 25.3  & 26.1  & 32.2  & 28.8
    \\
&    {FedGloss}        
    & --            & --               & 40.8            & --
    & --            & --               & 90.2            & --
    & --            & --               & 1.2             & --
    & --            & --               & 63.3            & --
    & --            & --               & 38.6            & --
    \\
&    {+ APN  }          
    & 67.0  & 58.4  & 59.8 & 59.1
    & 54.3  & 38.4  & 73.8 & 50.5
    & 32.4  & 17.6  & 21.4 & 19.3
    & 31.3  & 7.4   & 45.7 & 12.7
    & 25.5  & 15.8  & 24.3 & 19.1
    \\
&    {+ GEM  }          
    & 67.2  & 57.4  & 60.1  & 58.7
    & 55.4  & 39.4  & 74.3  & 51.5
    & 33.5  & 17.7  & 22.2  & 19.7
    & 31.4  & 8.1   & 46.5  & 13.8
    & 26.0  & 14.7  & 22.8  & 17.9
    \\
&    {+ MSDN  }          
    & 65.4  & 56.1  & 58.4  & 57.2
    & 51.8  & 34.2  & 71.9  & 46.4
    & 24.8  & 10.4  & 17.3  & 13.0
    & 24.1  & 7.5   & 37.4  & 12.5
    & 19.4  & 9.4   & 30.1  & 14.3
    \\
&    {+ SVIP  } 
    & 78.4 & 56.6 & 68.4 & 61.9
    & 63.7 & 56.1 & 85.4 & 67.7
    & 68.2 & 49.6 & 44.9 & 47.1
    & 38.7 & 12.7 & 45.0 & 19.9
    & 26.8 & 27.3 & 32.4 & 29.6
    \\
    \cline{2-22}
&   {DistZSL}
    & {71.0} & {61.6} & 62.1 & {61.8}
    & {59.7} & {52.7} & 74.5 & {61.8}
    & {63.3} & {43.3} & 29.6 & {35.2}
    & {36.1} & {11.8} & 49.9 & {19.1} 
    & {27.2} & {21.5} & 24.1 & {22.7}
    \\
&   {DistZSL*}
    & {81.2} & {57.5} & 69.6 & {63.0}
    & 65.8 & 59.4 & 85.5 & 70.1 
    & 70.8 & 53.4 & 45.8 & 49.3
    & 40.3 & 14.2 & 56.6 & 22.7
    & 28.4 & 29.3 & 32.4 & 30.8
    \\
\bottomrule
\end{tabular}}
\end{center}
\label{performance2}
\end {table*}

\begin {table*}[!h]
\caption{Performance comparisons (\%) on five datasets among FL baselines, ZSL baselines, and the proposed DistZSL in \textit{Non-i.i.d.} and \textit{p.c.c.d.} settings. }
\vspace{-0.2em}
\begin{center}
\setlength{\tabcolsep}{3pt}
\scalebox{0.86}{
\begin{tabular}[t]{c  l | cccc | cccc | cccc | cccc | cccc}
\toprule
&  &  \multicolumn{4}{c|}{CUB} & \multicolumn{4}{c|}{AwA2}  &  \multicolumn{4}{c}{SUN} &  \multicolumn{4}{c}{APY}&  \multicolumn{4}{c}{DeepFashion}  \\ 
&  &  Acc$_{\mathcal{C}}$ &  Acc$_{\mathcal{Y}^{u}}$ & Acc$_{\mathcal{Y}^{s}}$ & Acc$_{\mathcal{H}}$  & Acc$_{\mathcal{C}}$ & Acc$_{\mathcal{Y}^{u}}$ & Acc$_{\mathcal{Y}^{s}}$ & Acc$_{\mathcal{H}}$  & Acc$_{\mathcal{C}}$ & Acc$_{\mathcal{Y}^{u}}$ & Acc$_{\mathcal{Y}^{s}}$ & Acc$_{\mathcal{H}}$ & Acc$_{\mathcal{C}}$ & Acc$_{\mathcal{Y}^{u}}$ & Acc$_{\mathcal{Y}^{s}}$ & Acc$_{\mathcal{H}}$& Acc$_{\mathcal{C}}$ & Acc$_{\mathcal{Y}^{u}}$ & Acc$_{\mathcal{Y}^{s}}$ & Acc$_{\mathcal{H}}$\\

 \hline 
\parbox[t]{2mm}{\multirow{21}{*}{\rotatebox[origin=c]{90}{\textit{\textbf{Non-i.i.d.}}}}} 
&    {FedAvg}  
    & --            & --                & 6.4             & --
    & --            & --                & 18.4            & --
    & --            & --                & 1.9             & --
    & --            & --                & 21.1            & --
    & --            & --                & 12.8            & --
    \\
&    {+ APN  }          
    & 65.0          & 54.9              & 60.9            & 57.7
    & 53.7          & 41.9              & 76.2            & 54.1
    & 35.3          & 20.8              & 14.2            & 16.8
    & 28.1          & 11.4              & 36.9            & 17.4
    & 25.1          & 12.8              & 20.1            & 15.6
    \\
&    {+ GEM  }          
    & 67.2          & 37.9              & 62.8            & 47.3
    & 57.4          & 29.9              & 60.0            & 39.9
    & 60.2          & 30.8              & 33.1            & 31.9
    & 28.5          & 10.9              & 37.4            & 16.9
    & 24.7          & 6.3               & 17.2            & 9.2
    \\
&    {+ MSDN  }          
    & 64.8          & 25.3              & 40.5            & 31.2
    & 56.9          & 18.9              & 67.9            & 29.6
    & 57.6          & 29.4              & 32.3            & 30.8
    & 25.6          & 7.3               & 35.1            & 12.1
    & 20.1          & 10.0              & 18.8            & 13.0
    \\
&    {+ SVIP*  }
    & 75.2  & 52.4  & 68.9  & 59.5
    & 59.7  & 51.2  & 70.8  & 59.4
    & 66.1  & 48.6  & 43.8  & 46.1
    & 34.0  & 12.1  & 41.3  & 18.7
    & 25.5  & 15.0  & 26.6  & 19.2
    \\
&  {FedProx}        
    & --             & --               & 6.6             & --
    & --             & --               & 21.0            & --
    & --             & --               & 2.3             & --
    & --             & --               & 20.7            & --
    & --             & --               & 11.9            & --
    \\
&  {+ APN  }          
    & 64.7          & 54.9              & 59.7            & 57.2
    & 56.5          & 43.7              & 81.6            & 56.9
    & 50.8          & 34.4              & 24.0            & 28.3
    & 27.9          & 10.9              & 35.7            & 16.7
    & 25.3          & 13.8              & 23.5            & 17.4
    \\
&  {+ GEM  }          
    & 69.1          & 36.7              & 66.3            & 47.3
    & 58.7          & 35.2              & 61.2            & 44.7
    & 60.6          & 28.1              & {37.7}   & 32.2
    & 28.4          & 11.3              & 34.1            & 17.0
    & 24.9          & 12.4              & 25.1            & 16.6
    \\
&  {+ MSDN  }          
    & 67.5          & 26.1              & 38.7            & 31.2
    & 58.1          & 22.3              & 51.1            & 31.1
    & 58.1          & 30.4              & 33.0            & 31.7
    & 27.5          & 9.6               & 36.8            & 15.2
    & 24.6          & 9.6               & 24.7            & 13.8
    \\
& {+ SVIP*}
    & 74.6  & 52.1  & 67.4  & 58.8
    & 58.4  & 48.6  & 69.3  & 57.1
    & 64.8  & 45.9  & 44.0  & 44.9
    & 32.8  & 11.7  & 39.6  & 18.1
    & 24.9  & 13.9  & 26.4  & 18.2
    \\
&    {FedNova}         
    & --            & --                & 6.9             & --
    & --            & --                & 19.0            & --
    & --            & --                & 2.4             & --
    & --            & --                & 19.6            & --
    & --            & --                & 12.6            & --
    
    \\
&    {+ APN  }          
    & 66.8          & 55.9              & 59.8          & 57.8
    & 36.0          & 30.1              & 36.4          & 33.0
    & 37.6          & 21.3              & 14.7          & 17.3
    & 29.3          & 11.2              & 37.1          & 17.2
    & 23.7          & 11.8              & 23.1          & 15.6
    \\
&    {+ GEM  }          
    & 67.4          & 39.0              & 61.5          & 47.7
    & 58.4          & 30.1              & 62.2          & 40.5
    & 60.9          & 32.0              & 32.3          & 32.2
    & 27.6          & 10.7              & 37.2          & 16.6
    & 24.8          & 12.7              & 23.5          & 16.5
    \\
&    {+ MSDN  }          
    & 65.6          & 34.7              & 38.7          & 30.2
    & 57.3          & 25.1              & 49.3          & 33.2
    & 58.6          & 27.6              & 34.7          & 30.8
    & 28.1          & 9.7               & 34.9          & 15.2
    & 23.4          & 10.8              & 16.9          & 13.2
    \\
&    {+ SVIP*}
    & 74.1  & 50.9  & 66.4  & 57.6
    & 59.4  & 48.5  & 68.9  & 56.9
    & 64.2  & 44.3  & 40.1  & 42.1
    & 32.1  & 11.3  & 38.6  & 17.5
    & 24.7  & 13.8  & 26.9  & 18.2
    \\
&   Scaffold         
    & --            & --                & 7.7           & --
    & --            & --                & 19.7          & --
    & --            & --                & 2.6           & --
    & --            & --                & 23.7          & --
    & --            & --                & 12.9          & --
    \\
&  {+ APN  }          
    & 67.9          & 56.4              & 60.9          & 58.6
    & 54.4          & 37.4              & 82.6          & 51.5
    & 33.7          & 19.4              & 12.9          & 15.5
    & 29.6          & 12.8              & 38.1          & 19.2
    & 25.7          & 13.2              & 21.8          & 16.4
    \\
&  {+ GEM  }          
    & 68.4          & 35.3              & {66.9} & 46.2
    & 57.7          & 34.0              & 61.8          & 43.8
    & 61.3          & 30.0              & 36.8          & 33.1
    & 29.8          & 11.8              & 37.4          & 17.9
    & 25.3          & 13.0              & 22.6          & 16.5
    
    \\
&  {+ MSDN  }          
    & 68.9          & 26.3              & 49.5          & 34.4
    & 54.7          & 30.5              & 43.4          & 35.8
    & 59.0          & 29.0              & 35.0          & 31.7
    & 27.9          & 10.5              & 34.6          & 16.1
    & 23.2          & 12.1              & 23.9          & 16.1
    \\
&  {+ SVIP*}
    & 75.4  & 52.8  & 67.1  & 59.1
    & 60.8  & 50.9  & 70.2  & 59.0
    & 66.8  & 49.2  & 45.3  & 47.2
    & 34.6  & 13.2  & 41.6  & 20.0
    & 27.1  & 16.8  & 26.9  & 20.7
    \\

&    {MOON}          
    & --            & --                & 7.3           & --
    & --            & --                & 20.9          & --
    & --            & --                & 2.4           & --
    & --            & --                & 22.0          & --
    & --            & --                & 13.1          & --
    \\
&    {+ APN}          
    & 66.2          & 58.1              & 58.3          & 58.2
    & 54.9          & 41.6              & 78.5          & 54.4
    & 3.9           & 1.4               & 0.2           & 0.3
    & 30.0          & 12.3              & 35.1          & 18.2
    & 24.9          & 11.4              & 24.7          & 15.6
    \\
&    {+ GEM}          
    & 66.0          & 33.2              & 62.8          & 43.5
    & 58.1          & 28.9              & 62.5          & 39.5
    & 57.1          & 28.3              & 30.2          & 29.2
    & 30.3          & 12.7              & 34.7          & 18.6
    & 25.3          & 12.5              & 25.0          & 16.7
    \\
&    {+ MSDN}          
    & 67.6          & 27.9              & 36.5          & 31.6
    & 55.8          & 23.5              & 48.4          & 31.6
    & 58.2          & 29.8              & 31.6          & 30.7
    & 28.6          & 10.8              & 34.0          & 16.4
    & 23.8          & 11.4              & 25.3          & 15.7
    \\
    
&    {+ SVIP*}
    & 74.6  & 51.9  & 66.5  & 58.3
    & 59.0  & 49.4  & 68.9  & 57.5
    & 65.4  & 47.4  & 43.1  & 45.1
    & 34.1  & 12.4  & 39.8  & 18.9
    & 25.0  & 17.4  & 28.7  & 21.7
    \\
&   FedGloss         
    & --            & --                & 7.7           & --
    & --            & --                & 19.7          & --
    & --            & --                & 2.6           & --
    & --            & --                & 23.7          & --
    & --            & --                & 12.9          & --
    \\
&  {+ APN  }          
    & 67.3          & 55.7              & 61.1          & 58.3
    & 53.4          & 34.6              & 81.0          & 48.5
    & 33.9          & 20.0              & 12.1          & 15.1
    & 28.4          & 12.3              & 37.4          & 18.5
    & 25.2          & 12.9              & 21.4          & 16.1
    \\
&  {+ GEM  }          
    & 68.0          & 34.8              & 65.4          & 45.4
    & 56.9          & 33.8              & 60.6          & 43.4
    & 60.3          & 28.1              & 35.0          & 31.2
    & 28.3          & 10.2              & 36.4          & 15.9
    & 23.8          & 11.4              & 20.6          & 14.7
    \\
&  {+ MSDN  }          
    & 67.6          & 21.3              & 55.6          & 30.8
    & 53.1          & 28.6              & 51.2          & 36.7
    & 55.8          & 21.3              & 32.4          & 25.7
    & 27.0          & 9.4               & 35.7          & 14.9
    & 21.8          & 10.9              & 21.8          & 14.5
    \\
&  {+ SVIP*}
    & 74.8  & 52.7  & 66.4  & 58.8
    & 59.9  & 50.6  & 69.7  & 58.6
    & 66.1  & 48.7  & 44.7  & 46.6
    & 33.4  & 12.9  & 40.7  & 19.6
    & 25.6  & 15.4  & 27.9  & 19.8
    \\
\cline{2-22}
&  {DistZSL}
    & {71.4} & {58.9}     & 62.0   & {60.4}
    & {58.7} & {51.6}     & {70.0}   & {59.5}
    & {61.9} & {39.5}     & 30.7            & {34.5}
    & {34.8} & {13.4}     & {39.4}   & {20.0}
    & {26.5} & {17.1}     & {28.1}   & {21.2} 
    \\
    & {DistZSL*}
    & 80.3 & 54.3 & 69.7 & 61.0
    & 63.4 & 53.8 & 73.4 & 62.1
    & 68.7 & 52.4 & 45.1 & 48.5
    & 36.7 & 14.5 & 50.7 & 22.6
    & 27.3 & 17.6 & 29.4 & 22.0
    
    \\
\bottomrule
\parbox[t]{2mm}{\multirow{21}{*}{\rotatebox[origin=c]{90}{\textit{\textbf{p.c.c.d.}}}}}&
  {FedAvg}   
    & --            & --                & 5.2           & --
    & --            & --                & 8.8           & --
    & --            & --                & 0.3           & --
    & --            & --                & 9.5           & --
    & --            & --                & 3.9           & --
    \\
&  {+ APN}          
    & 50.9          & 41.9              & 50.4          & 45.8
    & 33.1          & 24.9              & 29.9          & 27.2
    & 33.0          & 18.8              & 13.7          & 15.9
    & 17.1          & 10.7              & 24.5          & 14.9
    & 15.8          & 10.6              & 8.0           & 9.1
    \\
&  {+ GEM}          
    & 51.8          & 30.8              & 50.5          & 38.2
    & 43.0          & 19.5              & 33.9          & 24.7
    & 57.3          & 31.0              & 32.6          & 31.8
    & 17.2          & 11.9              & 26.1          & 16.3
    & 16.8          & 12.6              & 10.4          & 11.4
    \\
&  {+ MSDN}          
    & 49.7          & 19.9              & 20.1          & 20.0
    & 38.4          & 20.1              & 44.5          & 27.7
    & 53.8          & 25.7              & 27.3          & 26.5
    & 15.7          & 10.1              & 23.8          & 14.2
    & 16.1          & 9.9               & 7.9           & 8.8 
    \\
&  {+ SVIP*}
    & 73.8          & 47.8              & 63.0          & 54.4
    & 54.9          & 42.4              & 74.9          & 54.1
    & 63.8          & 44.1              & 40.4          & 42.2
    & 29.8          & 14.0              & 35.3          & 20.0
    & 21.1          & 13.8              & 23.9          & 17.5
    \\
&    {FedProx}         
    & --            & --                & 6.2           & --
    & --            & --                & 8.8           & --
    & --            & --                & 0.2           & --
    & --            & --                & 9.8           & --
    & --            & --                & 4.2           & --
    \\
&    {+ APN  }          
    & 52.3          & 41.2              & 50.9          & 45.5
    & 44.2          & 37.6              & 51.0          & 43.3
    & 44.5          & 28.3              & 20.9          & 24.0
    & 14.3          & 9.0               & 23.9          & 13.1
    & 16.2          & 11.3              & 8.8           & 9.9
    \\
&    {+ GEM  }          
    & 50.3          & 30.2              & 53.2          & 38.5
    & 53.0          & 32.8              & 52.6          & 40.4
    & 56.9          & 39.9              & 32.9          & 31.4
    & 14.6          & 11.3              & 10.1          & 10.7
    & 16.8          & 11.7              & 9.2           & 10.3
    \\
&    {+ MSDN  }          
    & 53.0          & 19.6              & 56.8          & 29.1
    & 47.1          & 12.6              & 31.0          & 17.9
    & 54.9          & 22.4              & 28.5          & 25.0
    & 13.4          & 9.4               & 29.4          & 14.2
    & 15.9          & 10.3              & 7.5           & 8.7
    \\
&    {+ SVIP*}
    & 72.1          & 49.1          & 64.3      & 55.7
    & 54.0          & 41.2          & 75.4      & 53.3
    & 63.4          & 46.0          & 42.4      & 44.1
    & 28.3          & 13.4          & 33.2      & 19.1
    & 20.6          & 10.9          & 20.8      & 14.3
    \\
&   {FedNova}        
    & --            & --                & 6.1           & --
    & --            & --                & 8.8           & --
    & --            & --                & 0.3           & --
    & --            & --                & 9.0           & --
    & --            & --                & 4.1           & --
    \\
&  {+ APN  }          
    & 51.8          & 39.1              & 53.5          & 45.2
    & 35.1          & 25.9              & 31.4          & 28.4
    & 38.3          & 23.2              & 15.9          & 18.9
    & 16.4          & 10.5              & 25.1          & 14.8
    & 15.6          & 11.2              & 9.9           & 10.5
    \\
&  {+ GEM  }
    & 52.0          & 31.7              & 51.9          & 39.3
    & 43.0          & 19.8              & 33.9          & 25.0
    & 57.4          & 27.9              & 34.6          & 30.9
    & 17.3          & 11.6              & 28.1          & 16.4
    & 16.3          & 12.4              & 9.4           & 10.7
    \\
&  {+ MSDN  }          
    & 52.5          & 19.8              & 24.6          & 21.9
    & 41.9          & 26.1              & 58.3          & 36.0
    & 53.4          & 24.2              & 30.0          & 26.8
    & 15.7          & 8.8               & 24.0          & 12.9
    & 15.7          & 10.8              & 8.5           & 9.5
    \\
&  {+ SVIP*  }
    & 70.8   & 48.5  & 65.4  & 55.7
    & 53.8   & 41.1  & 74.8  & 53.1
    & 64.0   & 46.2  & 42.8  & 44.4
    & 28.0   & 13.3  & 34.5  & 19.2
    & 21.5   & 14.0  & 24.8  & 17.9 
    \\

&    {Scaffold}        
    & --            & --                & 6.0           & --
    & --            & --                & 9.0           & --
    & --            & --                & 0.3           & --
    & --            & --                & 10.3          & --
    & --            & --                & 4.8           & --
    \\
&    {+ APN  }          
    & 52.8          & 44.8              & 48.8          & 46.7
    & 40.3          & 36.1              & 40.6          & 38.2
    & 36.4          & 16.0              & 11.8          & 13.6
    & 18.9          & 12.3              & 28.4          & 17.2
    & 17.6          & 12.7              & 10.0          & 11.2
    \\
&    {+ GEM  }          
    & 59.9          & 30.8              & 56.0          & 39.7
    & 49.4          & 24.3              & 38.9          & 29.9
    & 58.5          & 29.4              & {33.5} & 31.3
    & 19.0          & 12.8              & 27.4          & 17.4
    & 18.2          & 12.4              & 9.7           & 10.9
    \\
&    {+ MSDN  }          
    & 57.0          & 18.8              & 35.3          & 24.5
    & 45.2          & 10.8              & 54.3          & 18.0
    & 57.2          & 26.7              & 33.2          & 29.6
    & 17.1          & 8.8               & 21.4          & 12.5
    & 16.4          & 11.3              & 7.7           & 9.2
    \\
&    {+ SVIP*}
    & 73.1  & 49.2  & 64.7  & 55.9
    & 55.6  & 41.9  & 75.8  & 54.0
    & 63.6  & 47.8  & 44.1  & 45.9
    & 30.1  & 13.5  & 36.1  & 19.7
    & 22.1  & 15.1  & 25.0  & 18.8
    \\

&  {MOON}         
    & --            & --                & 6.1           & --
    & --            & --                & 8.9           & --
    & --            & --                & 0.2           & --
    & --            & --                & 8.1           & --
    & --            & --                & 3.7           & --
    \\
&  {+ APN  }          
    & 51.6          & 40.3              & 49.8          & 44.6
    & 34.8          & 27.1              & 32.3          & 29.5
    & 4.0           & 0.9               & 0.2           & 0.4
    & 17.2          & 10.1              & 14.3          & 11.8
    & 15.8          & 10.6              & 9.3           & 9.9
    \\
&  {+ GEM  }
    & 43.6          & 31.6              & 41.4          & 35.9
    & 44.9          & 28.3              & 32.9          & 30.4
    & 54.2          & 26.9              & 29.8          & 28.3
    & 17.6          & 10.4              & 13.4          & 11.7
    & 16.4          & 10.2              & 9.1           & 9.6
    \\
&  {+ MSDN  }          
    & 50.2          & 15.6              & 39.0          & 22.3
    & 33.2          & 25.4              & 60.6          & 33.8
    & 54.4          & 26.9              & 25.9          & 26.4
    & 17.0          & 9.4               & 12.1          & 10.6
    & 17.3          & 13.1              & 11.4          & 12.2
    \\
& {+ SVIP*}
    & 71.6  & 49.7  & 63.8  & 55.9
    & 54.2  & 43.1  & 73.9  & 54.4
    & 64.9  & 46.9  & 43.8  & 45.3
    & 28.7  & 12.8  & 35.1  & 18.8
    & 20.7  & 13.2  & 24.0  & 17.0
    \\
&    {FedGloss}        
    & --            & --                & 6.0           & --
    & --            & --                & 9.0           & --
    & --            & --                & 0.3           & --
    & --            & --                & 10.3          & --
    & --            & --                & 4.8           & --
    \\
&    {+ APN  }          
    & 51.0          & 40.1              & 48.6          & 43.9
    & 33.8          & 30.2              & 37.4          & 33.4
    & 33.8          & 18.3              & 13.7          & 15.7
    & 17.0          & 9.5               & 25.3          & 13.8
    & 15.9          & 11.2              & 7.0           & 8.6
    \\
&    {+ GEM  }          
    & 55.4          & 30.1              & 54.8          & 38.9
    & 44.2          & 26.1              & 31.9          & 28.7
    & 54.9          & 27.4              & 28.4          & 27.9
    & 17.1          & 11.4              & 26.8          & 16.0
    & 17.1          & 10.4              & 10.3          & 10.3
    \\
&    {+ MSDN  }          
    & 53.4          & 16.4              & 41.4          & 23.5
    & 43.8          & 11.4              & 53.2          & 18.8
    & 55.1          & 25.8              & 29.1          & 27.4
    & 16.4          & 8.3               & 18.9          & 11.5
    & 15.1          & 9.3               & 8.3           & 8.8
    \\
&    {+ SVIP*}
    & 71.3  & 48.4  & 64.2  & 55.2
    & 54.8  & 42.1  & 75.1  & 54.0
    & 65.4  & 47.2  & 43.7  & 45.4
    & 29.4  & 13.1  & 34.8  & 19.0
    & 21.3  & 14.8  & 24.3  & 18.4
\\
\cline{2-22}
&  {DistZSL}  
    & {71.6} & {57.5}     & {58.0}   & {57.8}
    & {57.2} & {45.5}     & {62.3}   & {52.6}
    & {60.9} & {39.9}     & 27.2     & {32.3}
    & {19.2} & {16.1}     & {29.6}   & {20.8}
    & {23.2} & {16.2}     & {15.6}   & {15.9}
    \\
    & DistZSL*
    & 79.5          & 50.7              & 69.6              & 58.7
    & 57.3          & 46.0              & 79.7              & 58.3
    & 67.8          & 51.7              & 44.2              & 47.7
    & 31.3          & 15.4              & 45.9              & 23.0
    & 23.4          & 16.6              & 26.8              & 20.5
    \\
\hline
\bottomrule
\end{tabular}}
\end{center}
\label{performance3}
\end {table*} 
\end{document}